\documentclass[11pt, a4paper, oneside, reqno]{amsart}

\usepackage[dvipsnames]{xcolor}
\definecolor{darkblue}{rgb}{0.0, 0.0, 0.45}
\definecolor{darkblue2}{RGB}{0  40 159}
\definecolor{lightblue}{RGB}{240,248,255}
\definecolor{lightblue2}{rgb}{0.68, 0.85, 0.9}
\definecolor{lightcyan}{rgb}{0.88, 1.0, 1.0}
\definecolor{palepink}{rgb}{0.98, 0.85, 0.87}

\usepackage[colorlinks	= true,
raiselinks	= true,
linkcolor	= darkblue,
citecolor	= Mahogany,
urlcolor	= ForestGreen,
pdfauthor	= {Sasan Vakili},
pdftitle	= {},
pdfkeywords	= {},
pdfsubject	= {},
plainpages	= false]{hyperref}

\usepackage{amsfonts,dsfont,mathtools, mathrsfs,amsthm, amssymb,amsmath,mathdots} 
\usepackage{dsfont}
\usepackage{mathabx}
\usepackage{bbm}
\usepackage{accents}
\usepackage{adjustbox}
\usepackage{siunitx}

\usepackage{setspace}

\usepackage{fancyhdr,mdframed}

\usepackage[T1]{fontenc}

\usepackage{algcompatible}

\usepackage{algorithm}
\usepackage{algorithmicx,multirow, titlecaps}

\usepackage{multirow}
\usepackage{bigstrut}
\usepackage{wrapfig}
\usepackage[font=small, hypcap=false]{caption} 
\usepackage{nicefrac}

\usepackage{epsfig}
\usepackage{graphicx}
\usepackage{float}
\usepackage{placeins} 

\usepackage{stackengine}
\usepackage{booktabs}

\usepackage{subcaption}
\makeatletter

\makeatletter

\usepackage{algpseudocode}

\usepackage{enumitem}
\setlist{noitemsep, topsep=0cm}

\usepackage{tikz, pgfplots}
\pgfplotsset{compat=1.18}
\usetikzlibrary{mindmap}
\usepgfplotslibrary{groupplots}
\newlength\figureheight
\newlength\figurewidth

\usepackage[utf8]{inputenc}
\usepackage{lscape}

\usepackage{colortbl}
\usepackage{svg}

\usepackage{tcolorbox}
\tcbuselibrary{theorems, skins, breakable}

\usepackage{lipsum}

\allowdisplaybreaks
\date{\today}



\makeatletter
\def\@settitle{\begin{center}%
		\baselineskip14\p@\relax
		\normalfont\LARGE\scshape\bfseries
		\@title
	\end{center}%
}

\def\@setauthors{%
  \begingroup
  \def\thanks{\protect\thanks@warning}%
  \trivlist
  \centering\small \@topsep30\p@\relax
  \advance\@topsep by -\baselineskip
  \item\relax
  \author@andify\authors
  \def\\{\protect\linebreak}%
  \authors%
  \ifx\@empty\contribs
  \else
    ,\penalty-3 \space \@setcontribs
    \@closetoccontribs
  \fi
  \endtrivlist
  \endgroup
}

\makeatother
\makeatletter

\def\subsection{\@startsection{subsection}{2}%
	\z@{.5\linespacing\@plus.7\linespacing}{.5\linespacing}%
	{\normalfont\large\bfseries}}

\def\subsubsection{\@startsection{subsubsection}{3}%
	\z@{.5\linespacing\@plus.7\linespacing}{.5\linespacing}%
	{\normalfont\itshape}}

\algnewcommand\algorithmicvariant{\textbf{Variant:}}
\algnewcommand\Variant{\item[\algorithmicvariant]}

\newtheorem{theorem}{Theorem}[section]
\newtheorem*{theorem*}{Theorem}
\newtheorem{definition}{Definition}
\newtheorem*{definition*}{Definition}
\newtheorem{assumption}{Assumption}

\newtheorem{lemma}[theorem]{Lemma}
\newtheorem{remark}[theorem]{Remark}

\newtheorem*{example*}{Example}

\newtheorem*{problem*}{Problem}

\newenvironment{examplepar}[1][]
{%
  \refstepcounter{example}%
  \paragraph{\textbf{Example~\theexample%
    \if\relax\detokenize{#1}\relax\else\ #1\fi}}%
}%
{}

\newcommand{\cmdb}[1]{{\color{darkblue2}{#1}}}

\newcommand*{\sumOp}{\operatornamewithlimits{\sum}\limits}

\newcommand{\binarySum}[2]{\!\!\!\!\!\!\!\!\!\!\!{\sumOp_{\;\;\;\;\;\;\; {#1}} \!\!\!\!\!\!\!\!{#2}}}
\newcommand*{\prodOp}{\operatornamewithlimits{\prod}\limits}

\newcommand*{\minOp}{\operatornamewithlimits{min}\limits}

\newcommand{\drm}{\mathrm{d}}           

\newcommand{\nth}{{\text{\tiny{th}}}}
\newcommand{\trace}{\mathrm{tr}} 
\newcommand{\tr}{\!^{\scalebox{0.65}{$\mathsf{T}$}}}
\newcommand{\inv}{\!\!^{\scalebox{0.65}{$-1$}}}
\newcommand{\gradient}{\nabla}

\newcommand{\Expectation}[1]{\mathbb{E}[#1]}
\newcommand{\bigExpectation}[1]{\mathbb{E}\big[#1\big]}

\newcommand{\BigExpectation}[1]{\mathbb{E}\Big[#1\Big]}
\newcommand{\CondExpectation}[2]{\mathbb{E}_{#1}\Big[#2\Big]}

\newcommand{\diag}{\mathrm{diag}}

\newcommand{\Real}{{\mathbb{R}}}
\newcommand{\Complex}{{\mathbb{C}}}

\newcommand{\NormalDist}{{\mathcal{N}}}
\newcommand{\UniformDist}{{\mathcal{U}}}
\newcommand{\eye}{\mathbb{I}}

\newcommand{\expS}[1]{\mathrm{exp}{(#1)}}
\newcommand{\norm}[1]{\left\lVert#1\right\rVert}

\newcommand{\innerS}[2]{{\langle {#1,#2} \rangle}}

\newcommand{\Pdist}{{\mathbb{P}}}
\newcommand{\zero}{0}
\newcommand{\zeromx}{0}

\newcommand{\matrixSqrt}[1]{{#1^{^{\frac{1}{2}}}}}

\newcommand{\fracComma}{\raisebox{0.5ex}{,}}
\newcommand{\fracDot}{\raisebox{0.45ex}{.}}
\newcommand{\vectorize}{\mathrm{vec}}

\renewcommand{\hat}[1]{\widehat{#1}}

\newcommand{\trueFunc}{{h}}
\newcommand{\estimatorFunc}{\mathscr{F}_{\!\vecTheta}}
\newcommand{\estimatorAffine}[1]{\mathscr{A}_{#1}}

\newcommand{\operatorGaussianDBS}{\matrixPhi_{\!\scalebox{0.6}{$\#$}}}
\newcommand{\operatorDBS}{\mathcal{M}}
\newcommand{\operatorParam}{\mathcal{S}}

\newcommand{\vecTheta}{{\mathrm{\theta}}}
\newcommand{\timedTheta}[1]{\vecTheta^{#1}}

\newcommand{\Ntheta}{\mathrm{N}}
\newcommand{\Nparticle}{N_{P}}
\newcommand{\vecY}{{\mathrm{y}}}

\newcommand{\vecV}{{\mathrm{v}}}
\newcommand{\matrixPhi}{\mathrm{\Phi}}
\newcommand{\matrixPhibar}{\mu_{\matrixPhi}}
\newcommand{\matrixPhibarG}{\mu_{\matrixPhi \scriptscriptstyle( \nu \scriptscriptstyle)}}

\newcommand{\MuTheta}[1]{\mu_{\vecTheta_{#1}}}
\newcommand{\timedMuTheta}[1]{\mu_{\vecTheta}^{#1}}
\newcommand{\SigmaTheta}[1]{\Sigma_{\vecTheta_{#1}}}

\newcommand{\SigmaV}[1]{\Sigma_{\vecV_{#1}}}

\newcommand{\sigmaTheta}[1]{\sigma_{\vecTheta_{#1}}}

\newcommand{\Trajectory}{T}
\newcommand{\vecX}{{\mathrm{x}}}

\newcommand{\nX}{n_{\scalebox{0.65}{$\mathrm{x}$}}}
\newcommand{\nY}{n_{\scalebox{0.65}{$\mathrm{y}$}}}
\newcommand{\nZ}[1]{n_{#1}}
\newcommand{\SigmaX}[1]{\Sigma_{\vecX_{#1}}}

\newcommand{\MuX}[1]{\mu_{\vecX_{#1}}}
\newcommand{\vecA}{{\mathrm{a}}}
\newcommand{\A}{{\mathrm{A}}}
\newcommand{\B}{{\mathrm{B}}}
\newcommand{\vecU}{{\mathrm{u}}}
\newcommand{\nU}{n_{\scalebox{0.65}{$\mathrm{u}$}}}
\newcommand{\vecW}{{\mathrm{w}}}
\newcommand{\SigmaW}[1]{\Sigma_{\vecW_{#1}}}

\newcommand{\Abar}{\A}
\newcommand{\Bbar}{\B}
\newcommand{\vecXbar}{\vecX}
\newcommand{\vecUbar}{\vecU}
\newcommand{\vecWbar}{\vecW}
\newcommand{\SigmaWbar}[1]{\Sigma_{\vecWbar_{#1}}}
\newcommand{\vecYbar}{\vecY}
\newcommand{\vecVbar}{\vecV}

\newcommand{\SigmaVbar}[1]{\Sigma_{\vecVbar_{#1}}}

\newcommand{\sigmaV}[1]{\sigma_{\vecV_{#1}}}

\newcommand{\matrixS}{{\mathrm{S}}}

\newcommand{\deltaPhi}{{\Delta\phi}}

\newcommand{\DeltaTheta}{{\Delta\vecTheta}}

\newcommand{\Cost}{{\mathcal{J}}}

\newcommand{\matrixG}{{\mathrm{G}}}

\newcommand{\funcG}{{\mathrm{g}}}

\newcommand{\Jacobian}{{\mathrm{J}}}
\newcommand{\matrixC}{{\mathrm{C}}}

\newcommand{\hatSigmaXbar}[1]{\Sigma_{\hat{\vecXbar}_{#1}}}

\newcommand{\hatSigmaTheta}[1]{\Sigma_{\hat{\vecTheta}_{#1}}}
\newcommand{\hatSigmaPhi}[1]{\Sigma_{\hat{\matrixPhi}_{#1}}}

\newcommand{\matrixQ}{{\mathrm{Q}}}

\newcommand{\phiBar}{\Bar{\phi}}

\newcommand{\matrixP}{\mathrm{P}}

\newcommand{\SigmaPhi}{\Sigma_{\matrixPhi}}
\newcommand{\SigmaPhiG}{\Sigma_{\matrixPhi \scriptscriptstyle( \nu \scriptscriptstyle)}}
\newcommand{\TimedSigmaPhi}[1]{\Sigma_{\phi}^{#1}}
\newcommand{\MuPhi}{\mu_{\phi}}

\newcommand{\Cbar}{\matrixC}

\newcommand{\PsiPhi}{\Psi_{\!\phi}}
\newcommand{\psiPhi}{\psi_{\phi}}

\newcommand{\PsiX}{\Psi_{\!\vecXbar}}
\newcommand{\psiX}{\psi_{\vecXbar}}

\newcommand{\PsiTheta}{\Psi_{\!\vecTheta}}
\newcommand{\psiTheta}{\psi_{\vecTheta}}

\title[Nonlinear Bayesian Estimator for Parameter Learning]{\Large Nonlinear Bayesian Estimator for Parameter Learning: \\ A Fixed-Point Characterization}

\author[S. Vakili]{Sasan Vakili$^{1}$}
\author[Dani\"{e}l Woonings]{Dani\"{e}l Woonings$^{1}$}
\author[P. Paruchuri]{Pradyumna Paruchuri$^{1}$}
\author[P. {Mohajerin Esfahani}]{Peyman {Mohajerin Esfahani$^{1,2}$}
\\
\\
$^1$Delft University of Technology, The Netherlands\\
$^2$University of Toronto, Canada}

\thanks{This work was supported by the Marie Sk\l{}odowska-Curie Grant under Agreement 956200, the ERC Starting Grant TRUST-949796, and the NSERC Discovery grant RGPIN-2025-06544.}

\begin{document}
\begin{abstract}
This paper presents a nonlinear parameter estimator for Wiener-type state-space models obtained as a fixed-point architecture that couples two affine minimum mean-squared error (MMSE) estimators: one for the unknown parameters and one for latent variables. The architecture retains the functional structure of the optimal affine MMSE parameter estimator while incorporating Dynamic Basis Statistics (DBS) estimates that summarize nonlinear basis-function evaluations. Two DBS construction strategies are developed, leading to two nonlinear estimator frameworks. The dual basis-parameter estimator combines an affine basis estimator with the affine parameter estimator, whereas the dual state-parameter estimator first computes affine state estimates and their covariances, then maps these state-estimate statistics through a Gaussian DBS operator to obtain DBS estimates. Both dual estimators admit fixed-point characterizations that alternate between estimating each component using the updated prior of the other, obtained from that component's plug-in estimate statistics from the previous iteration. The efficacy of the proposed methods is examined via extensive Monte Carlo experiments, showing that the dual basis-parameter estimator attains parameter mean-squared errors comparable to those of the purely affine parameter estimator, while the dual state-parameter estimator achieves the lowest parameter mean-squared error, outperforming both the dual basis-parameter and purely affine parameter estimators, as well as sequential Monte Carlo variants of classical Particle Gibbs and Expectation-Maximization schemes.
\end{abstract}

\maketitle

\noindent \textbf{Keywords:} \textbf{Bayesian inference}, \textbf{state estimation}, \textbf{parameter learning}, \textbf{system identification}, \textbf{state-space model}, \textbf{Wiener model} 
\section{Introduction} \label{sec:Introduction}
The identification of unknown parameters in mathematical models is a foundational problem that has been studied extensively using a wide range of techniques, including subspace methods in system identification~\cite{Ljung1999SysId} and statistical inference~\cite{bishop_pattern_2006}. Among the two main paradigms of parameter inference, namely the frequentist and Bayesian approaches, the Bayesian approach has received increasing attention in recent years due to its ability to quantify parameter uncertainty and incorporate prior information within a coherent probabilistic framework~\cite{gelman2013bayesian}. Identifying a parametrized unknown output function becomes particularly challenging when the inputs are corrupted by correlated noise arising from underlying state dynamics, as in state-space models, because these additional sources of randomness further complicate the inference process~\cite{sarkka2013bayesian}. Consider a \emph{known} discrete-time linear time-varying dynamical system
\begin{equation}\label{eq:Dynamic_Observation_Models}
\vecX_{t+1} = \A_{t}\vecX_{t} + \B_{t}\vecU_t + \vecW_{t+1}, \qquad
\vecY_{t} = \trueFunc_{\vecTheta}(\vecX_{t}) + \vecV_{t},
\end{equation}
where $t$ is the time index, $\vecX_{t} \in \Real^{\nX}$ denotes the system state vector, $\A_{t} \in \Real^{\nX \times \nX}$ and $\B_{t} \in \Real^{\nX \times \nU}$ are known system matrices, and $\vecU_{t} \in \Real^{\nU}$ represents the sequence of known or controlled inputs. The process noise $\vecW_{t+1} \in \Real^{\nX}$ is modelled as a zero-mean random vector with known covariance $\SigmaW{t+1}$, that is, $\vecW_{t+1} \sim \Pdist(\zero, \SigmaW{t+1})$. The observation $\vecY_t \in \Real^{\nY}$ is generated via an \emph{unknown} parameterized function $\trueFunc_{\vecTheta} \colon \Real^{\nX} \!\! \to \! \Real^{\nY}$ corrupted by measurement noise $\vecV_t \in \Real^{\nY}$ and $\vecV_t \sim \Pdist(0, \SigmaV{t})$. Throughout this work, and without loss of generality, we focus on the single-output case $\nY \!=\! 1$, since the extension to multi-output measurements ($\vecY_t \in \Real^{\nY}$) is straightforward and essentially reduces to a set of independent single-output problems~\cite[Rem.~2.1]{vakili2025optimal}. The objective is to estimate the unknown parameter vector $\vecTheta$ characterizing the observation model $\trueFunc_{\vecTheta}(\cdot)$, given only the noisy observations ${\vecY_t}$ and known inputs ${\vecU_t}$. We further parameterize the output function $\trueFunc_{\vecTheta} \colon \Real^{\nX} \!\! \to \! \Real$ as a linear combination of known basis functions $\phi_{n} \colon \Real^{\nX} \!\! \to \! \Complex$, i.e.,
\begin{equation} \label{eq:Function_Space}
\trueFunc_{\vecTheta}(\vecX_{t}) = \sumOp_{n=0}^{\Ntheta} \vecTheta_{n} \phi_{n}(\vecX_{t}) = \innerS{\phi(\vecX_{t})}{\vecTheta},
\end{equation}
where the number of basis functions is $\Ntheta+1$, the coefficient vector $\vecTheta = [\vecTheta_{0}, \ldots, \vecTheta_{\Ntheta}]^{\tr}$ contains the unknown parameters, and $\phi(\vecX_{t}) = [ \phi_{0}(\vecX_{t}), \ldots, \phi_{\Ntheta}(\vecX_{t}) ]^{\tr}$ is the vector of basis functions evaluated at $\vecX_{t}$. Additionally, we incorporate prior information about the unknown parameters through a probability distribution characterized by mean $\MuTheta{}$ and covariance $\SigmaTheta{}$, i.e., $\vecTheta \sim \Pdist(\MuTheta{}, \SigmaTheta{})$. This formulation encompasses many nonlinear system identification problems.

Such systems are important in many application domains that are modelled as block-oriented structures consisting of a linear dynamical process followed by a static nonlinear observation model, commonly known as Wiener models~\cite{schoukens2017identification}.
In this setting, parameter estimation is inherently challenging because the system states, $\vecX_t$, are stochastic due to the process noise, $\vecW_{t+1}$, which propagates uncertainty through the linear dynamics. Given this stochastic nature of the system states appearing as inputs to the nonlinear function $\trueFunc_{\vecTheta}(\cdot)$, the probability density function of the observations $\vecY_t$ is, in general, not available in closed-form~\cite{doucet2000sequential, sarkka2013bayesian}. This intractability fundamentally complicates Bayesian parameter estimation and necessitates the use of approximate inference techniques~\cite{bishop_pattern_2006}. In particular, the optimal Bayesian minimum mean squared error (MMSE) estimator requires integrating over the joint distribution of the unobserved states and unknown parameters, which is computationally prohibitive. As a result, research in this area has mainly focused on fully Bayesian inference techniques~\cite{doucet2001introduction, tokdar2010importance} as well as more tractable alternatives such as maximum a posteriori (MAP) estimation~\cite{cappe2005inference}.

\paragraph{\bf Related literature.}
Fully Bayesian methods aim to characterize the entire posterior distribution, typically only approximately, using techniques such as Markov Chain Monte Carlo (MCMC)~\cite[Ch.~6]{robert_monte_2004}, sequential Monte Carlo (SMC) ~\cite{andrieu_particle_2010, chopin_introduction_2020, naesseth2019elements}, and variational inference~\cite{blei2017variational, courts2023variational}. MCMC algorithms construct a Markov chain that converges in distribution to the posterior~\cite{roberts_general_2004}. Classic examples include the Metropolis-Hastings (MH) algorithm~\cite{metropolis1953equation, hastings1970monte} and the Gibbs sampler~\cite{gelfand_sampling-based_1990}. However, in nonlinear, non-Gaussian state-space models, such conventional MCMC algorithms cannot be applied directly, since MH method requires evaluating the likelihood up to a normalizing constant, and Gibbs sampler relies on sampling from exact conditional distributions, both of which are generally infeasible in this setting.

To address these challenges, several approximate variants have been developed, such as MCMC methods that incorporate the unscented Kalman filter (UKF-MCMC)~\cite{ukf_smoothing_mcmc}, and combinations of MCMC and SMC techniques~\cite{pitt_properties_2012}, collectively referred to as particle MCMC (PMCMC) algorithms. By exploiting different theoretical properties of the SMC framework~\cite{chopin_introduction_2020}, the MH and Gibbs sampler algorithms can be extended to the particle marginal Metropolis-Hastings (PMMH) method~\cite{schon_system_2011} and the particle Gibbs (PG) algorithm~\cite{chopin_particle_2015}, respectively. Although these methods provide reliable uncertainty quantification and improved performance, they are typically computationally demanding \cite{chopin_complexity} and scale poorly to high-dimensional problems~\cite{bardenet2017markov, rajaratnam_mcmc-based_2015}. In this work, the particle Gibbs with ancestor sampling (PGAS) algorithm~\cite{lindsten2014particle}, a prominent particle sampler for Bayesian learning of state-space models, is included as one of the benchmark methods for numerical evaluation in Section~\ref{sec:Numerical_Experiments}.

In contrast to fully Bayesian approaches, MAP estimation seeks point estimates by identifying the mode of the posterior. The expectation maximization (EM) algorithm is widely used for maximum likelihood (ML) and MAP estimation in the presence of latent variable models. EM alternates between estimating the distribution over latent variables (E-step) and updating model parameters (M-step). In dynamical systems with unobserved latent states~\cite{dempster_maximum_1977, schon_system_2011}, the E-step entails a smoothing problem, where the latent states distribution given all observations is estimated, while the M-step maximizes the expected complete-data log-likelihood, typically incorporating prior information.

When both steps have closed-form solutions, as in linear Gaussian models~\cite{ghahramani1996parameter}, EM guarantees a monotonic increase in the likelihood~\cite[Ch.~13]{gelman2013bayesian}. However, for nonlinear or non-Gaussian models, these quantities are generally intractable, so approximate methods are used in the E-step, such as extended Kalman smoothing~\cite[Ch.~6]{haykin2004kalman}, stochastic methods~\cite{delyon_convergence_1999}, and variational approaches~\cite{lindley_variational_2003}. Among approximate E-step methods, particle filters and kernels such as PGAS sample the smoothing distribution to approximate the likelihood~\cite{lindsten_efficient_2013, schon_system_2011}. These approximations, while often effective, do not guarantee monotonic likelihood improvement and can lead to suboptimal modes or slow convergence~\cite{balakrishnan2017statistical}. Nevertheless, we use the EM algorithm~\cite{schon_system_2011} as a benchmark in our experiments of Section~\ref{sec:Numerical_Experiments}.

A recent perspective restricts attention to the class of affine estimators rather than approximating full posterior distributions~\cite{vakili2025optimal}. Specifically, this approach leverages the fact that the optimal Bayesian MMSE estimator is affine when the joint distribution of the observations and parameters is elliptical. The resulting closed-form optimal affine estimator for unknown parameters relies on computing the so-called Dynamic Basis Statistics (DBS), which capture the influence of stochastic system states on the estimation process. In other words, when $\trueFunc_{\vecTheta}(\cdot)$ is expressed as a linear combination of known basis functions as in~\eqref{eq:Function_Space}, the DBS capture the statistics of these basis functions evaluated at different time steps along the stochastic state trajectory.

To illustrate how state uncertainty affects the estimation of the parameter vector $\vecTheta$, we consider the following simple example with a linear basis function $\phi(\vecX_{t}) = \vecX_{t}$ in~\eqref{eq:Dynamic_Observation_Models}, where $\vecTheta$ is a scalar parameter to be inferred from a single measurement. Specifically, we focus on a one-dimensional dynamical system with $\nX = 1$ and only an initial-state prior $\vecX_{0} \sim \Pdist(\MuX{0}, \SigmaX{0})$. This example demonstrates that the presence of a latent state variable can give rise to a nonlinear parameter estimator. In this simple setting, the empirical MMSE estimate is tractable and is taken to represent the optimal Bayesian MMSE estimator. The estimator obtained from the optimal Bayesian MMSE estimate of $\vecTheta$ exhibits pronounced nonlinearity and deviates significantly from the affine estimator.

\begin{tcolorbox}[colback=darkblue2!7, colframe=darkblue2!35, boxrule=0.5pt, arc=2pt, breakable]
\begin{examplepar}[(1-D linear state-space model)] 
\label{ex:1d_simple}
Let the output function $\trueFunc_{\vecTheta}$ in~\eqref{eq:Dynamic_Observation_Models} be defined using a single linear basis function, $\phi_n(\vecX) = \vecX$, so that
\begin{equation}
\label{eq:linear_measurement}
\vecY = \vecTheta \vecX + \vecV,
\end{equation}
where $\vecX \in \Real$ is the initial state with $\vecX \sim \NormalDist(\MuX{}, \sigma_{\vecX}^{2})$, $\vecTheta \in \Real$ is an unknown scalar parameter, and $\vecV \sim \NormalDist(0, \sigmaV{}^{2})$ is measurement noise with $\sigmaV{}^{2} = 0.16$. In this simulation, the prior on $\vecTheta$ is taken to be the uniform distribution on $[-1, 5]$ with mean $\MuTheta{} = 2$ and variance $\sigmaTheta{}^{2} = 3$. Figure~\ref{Fig:Compare_MMSE_Affine} shows the optimal affine and empirical MMSE estimators for $\vecTheta$ as functions of the measurement $\vecY$, i.e., $\hat{\vecTheta}(\vecY)$, for two choices of initial-state statistics: $(\MuX, \sigma_{\vecX}^{2}) = (0.5, 0.05)$ (left plot) and $(\MuX, \sigma_{\vecX}^{2}) = (0.5, 4)$ (right plot). The empirical MMSE estimator (red) is computed using MCMC methods, while the affine MMSE estimator (blue) is obtained from the analytical expression in~\cite{vakili2025optimal}. This example illustrates that, depending on the prior on $\vecX$ and the measurement range, the true MMSE estimator may be close to the optimal affine MMSE estimator in some regions of $\vecY$, while exhibiting nonlinear deviations in others.

\medskip
\begin{center}
\includegraphics[width=0.45\textwidth]{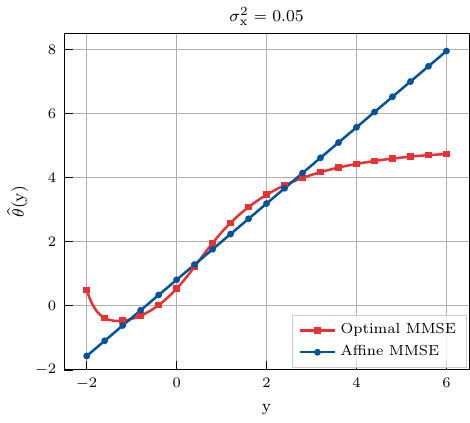}
\quad
\includegraphics[width=0.4585\textwidth]{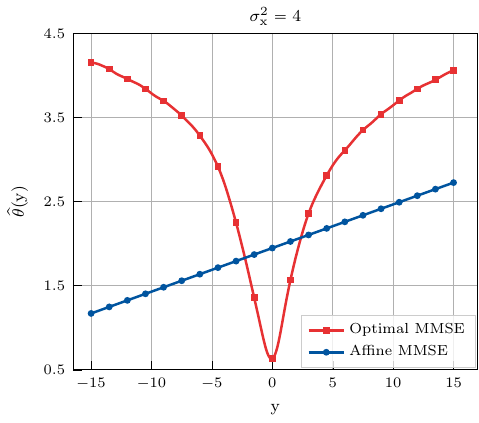}
\vspace{-0.5\baselineskip}
\captionof{figure}{\small Optimal MMSE in~\eqref{eq:Exact_Bayesian_MMSE} vs.\ affine MMSE in~\eqref{eq:Parameters_Affine_Estimation} for estimating $\vecTheta$ given $\vecY$}
\label{Fig:Compare_MMSE_Affine}
\end{center}
\end{examplepar}
\end{tcolorbox}

In this work, we propose an analytical approach to construct a nonlinear estimator that improves upon the affine class. We revisit Example~\ref{ex:1d_simple} throughout the paper to illustrate the behaviour of the proposed estimation algorithms.
\paragraph{\bf Contributions}
Building on this direction, we propose a nonlinear parameter estimator that retains the functional structure of the affine MMSE estimator for the unknown parameters while using new DBS estimates to reduce the mean-squared estimation error, as illustrated in Figure~\ref{Fig:Nonlinear_Parameter_Estimator_Architecture}. As shown in the block diagram, the proposed estimator consists of two coupled components: block~(I), described in Section~\ref{sec:Bayesian_Estimation_for_Wiener_Model}, implements the affine MMSE parameter estimator, which takes as input the entire measurement sequence $\{\vecY_{0}, \ldots, \vecY_{\Trajectory}\}$, the prior $(\MuTheta{}, \SigmaTheta{})$ on the unknown parameters, and DBS estimates $(\hat{\matrixPhi}, \hatSigmaPhi{})$ summarizing the basis-function evaluations. This block further produces the parameter estimates and their estimation-error covariance $(\hat{\vecTheta}, \hatSigmaTheta{})$, which are used as input to the DBS estimator in block~(II). Block~(II), discussed in Section~\ref{sec:Nonlinear_Fixedpoint_Estimation}, implements the DBS estimator, which uses the parameter estimates $(\hat{\vecTheta}, \hatSigmaTheta{})$, the same measurement sequence, and information about the state-trajectory distribution $p(\vecX_{0}, \ldots, \vecX_{\Trajectory})$ to construct updated DBS statistics. This nonlinear estimator architecture induces an interdependence between the two blocks: the parameter estimator requires DBS statistics, and the DBS estimator, in turn, depends on the parameter estimates. We resolve this interdependence via a fixed-point characterization that iteratively refines both sets of statistics. Leveraging the closed-form structure of the affine MMSE estimator, we investigate two DBS estimation methods for the DBS estimator in block~(II) of Figure~\ref{Fig:Nonlinear_Parameter_Estimator_Architecture}, as detailed in Figure~\ref{Fig:DBS_Estimator_Architecture}:

\begin{figure}[t]
\centering 
\includegraphics[width=0.8\linewidth]{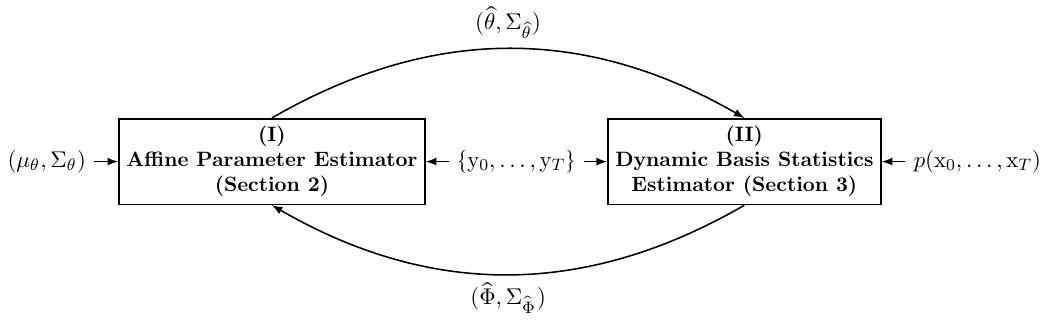} 
\caption{Architecture of the proposed nonlinear parameter estimator}
\label{Fig:Nonlinear_Parameter_Estimator_Architecture}
\end{figure}

\begin{itemize}[itemsep = 2mm, topsep = 1mm, leftmargin = 7.5mm]
\item \textbf{Basis-estimation approach (\subref{Fig:DBS_Estimator_Architecture_v1}):} This approach directly estimates basis-function evaluations over the entire trajectory. Within a Bayesian setting, we derive a closed-form expression for the optimal affine MMSE estimator of these unknown basis evaluations (Lemma~\ref{lemma:MMSE_Affine_Basis_Estimator}), which depends on the prior mean and covariance of the unknown parameters. As shown in subfigure~\subref{Fig:DBS_Estimator_Architecture_v1}, the DBS estimator uses information about the state-trajectory distribution $p(\vecX_{0}, \ldots, \vecX_{\Trajectory})$ to construct prior DBS statistics $(\matrixPhibar, \SigmaPhi)$ for the basis evaluations via Definition~\ref{definition:Dynamic_Basis_Statistics_Collection}. This prior, together with the measurement sequence and the parameter-estimate statistics $(\hat{\vecTheta}, \hatSigmaTheta{})$, is then processed by the affine basis estimator to obtain DBS estimates $(\hat{\matrixPhi}, \hatSigmaPhi{})$. Using this DBS estimator in block~(II) and combining it with the affine parameter estimator in block~(I) yields the dual basis-parameter (\ref{eq:Dual_basis_parameter_estimator}) estimator within the architecture of Figure~\ref{Fig:Nonlinear_Parameter_Estimator_Architecture}. However, numerical experiments show that this approach can underperform the purely affine MMSE parameter estimator in certain regimes (cf.~Figure~\ref{Fig:basisParam_Error_Setup1}), highlighting the limitations discussed in Remark~\ref{remark:Limitations_dual_basis_parameter} and the importance of explicitly estimating the latent state dynamics.

\item \textbf{State-estimation approach (\subref{Fig:DBS_Estimator_Architecture_v2}):} In contrast to the basis-estimation approach, this method first estimates the latent state trajectory and then constructs the DBS from the resulting state estimates. Assuming Gaussian process noise, $\vecW_{t+1} \sim \NormalDist(\zero, \SigmaW{t+1})$, we derive a closed-form expression for the optimal affine MMSE state estimator (Lemma~\ref{lemma:MMSE_Affine_State_Estimator}). As shown in subfigure~\subref{Fig:DBS_Estimator_Architecture_v2}, the affine state estimator uses information about the state-trajectory distribution $p(\vecX_{0}, \ldots, \vecX_{\Trajectory})$, together with the measurement sequence and the parameter-estimate statistics $(\hat{\vecTheta}, \hatSigmaTheta{})$ in place of its prior, to compute state estimates and their estimation-error covariance. These state-estimate statistics $(\hat{\vecXbar}, \hatSigmaXbar{})$ are then interpreted as the mean and covariance of a Gaussian prior over the system states and passed to the Gaussian DBS operator in Definition~\ref{definition:Gaussian_DBS_operator} to obtain DBS estimates $(\hat{\matrixPhi}, \hatSigmaPhi{})$. Using this DBS estimator in block~(II) and pairing it with the affine parameter estimator in block~(I) yields the dual state-parameter (\ref{eq:Dual_state_parameter_estimator}) estimator within the architecture of Figure~\ref{Fig:Nonlinear_Parameter_Estimator_Architecture}. Extensive numerical experiments indicate that this approach achieves a lower average parameter estimation error than both the dual basis-parameter (\ref{eq:Dual_basis_parameter_estimator}) estimator and a purely affine parameter estimator in two different configurations (cf.~Figures~\ref{Fig:basisParam_Error_Setup1}, \ref{Fig:param_Error_distributions}, and \ref{Fig:param_Error_differentMeasurements}).
\end{itemize}

\begin{figure}[t]
\centering
\begin{subfigure}[t]{0.49\linewidth}
  \centering
  \includegraphics[width=\linewidth]{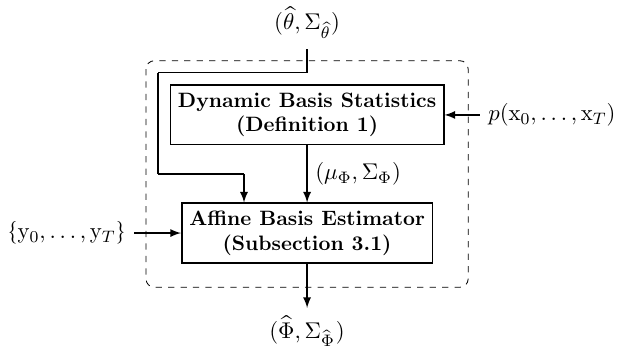}
  \caption{Basis-estimation approach}
  \label{Fig:DBS_Estimator_Architecture_v1}
\end{subfigure}
\hfill
\begin{subfigure}[t]{0.49\linewidth}
  \centering
  \includegraphics[width=\linewidth]{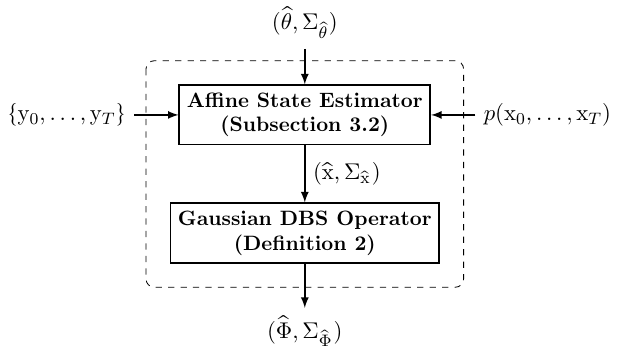}
  \caption{State-estimation approach}
  \label{Fig:DBS_Estimator_Architecture_v2}
\end{subfigure}
\caption{Architectures of the Dynamic Basis Statistics estimator}
\label{Fig:DBS_Estimator_Architecture}
\end{figure}

Combining either of the proposed DBS estimation methods with the affine parameter estimator induces an interdependence between the parameters and the DBS statistics, resolved by the algebraic equations defining \ref{eq:Dual_basis_parameter_estimator} and \ref{eq:Dual_state_parameter_estimator} estimators. These estimators naturally lead to fixed-point characterizations, implemented by Algorithm~\ref{algo:Dual_Estimators}, that alternate between estimating each set of unknowns using the updated prior of the other set derived from that set's estimate statistics from the previous iteration (see the plug-in interpretation in Remark~\ref{remark:Plugin_interpretation} for the prior-update concept). This mechanism is enabled by the structure of the affine MMSE estimators, whose coefficients depend only on the means and covariances of the unknowns rather than their full distributions. All affine estimators in \ref{eq:Dual_basis_parameter_estimator} and \ref{eq:Dual_state_parameter_estimator} admit explicit expressions for Fourier basis functions, yielding fully explicit DBS formulas in this case~\cite{vakili2025optimal}. To facilitate reproducibility, an open-source MATLAB library implementing all proposed methods for Fourier basis functions is provided at \href{https://github.com/sasanvakili/nonlinearBayesian4Wiener}{https://github.com/sasanvakili/nonlinearBayesian4Wiener}.

Similar ideas of alternating between state and parameter estimates have been explored in the literature, most notably in the Dual Kalman Smoother (DKS) of~\cite{wan1996dual}, the iterative smoothing scheme of the dual estimation method~\cite[Ch.~5]{haykin2004kalman}, and in SMC-based approaches such as PGAS~\cite{lindsten2014particle} and SMC-EM (particle EM)~\cite{schon_system_2011}. Both DKS and PGAS target the joint smoothing distribution $p(\vecTheta, \{\vecX_{t}\}_{t=0}^{\Trajectory} \!\mid\! \{\vecY_{t}\}_{t=0}^{\Trajectory})$ and conceptually treat the two unknown sets as separate blocks, updating one while treating the other as fixed at its current estimate. DKS approximates the conditional posteriors of the latent trajectory and the parameters by Gaussian distributions and alternates between two conditional smoother blocks: it iteratively estimates the system states using the latest parameter estimates via an extended Kalman smoother, and then estimates the parameters from the currently smoothed states until convergence. By contrast, PGAS does not make this Gaussian approximation and instead alternates between sampling the latent trajectory and the parameters from their full conditional posteriors. EM-based methods, on the other hand, treat the parameters as fixed in the E-step to approximate the state-trajectory posterior $p(\{\vecX_{t}\}_{t=0}^{\Trajectory} \!\mid\! \vecTheta, \{\vecY_{t}\}_{t=0}^{\Trajectory})$, and then update $\vecTheta$ in the M-step by maximizing an objective function obtained from integrating over this posterior. Our approach differs from these schemes in that it propagates both first- and second-order statistics between the coupled estimators. Extensive numerical experiments show that the proposed dual state-parameter (\ref{eq:Dual_state_parameter_estimator}) estimator achieves a lower average parameter estimation error than both PGAS and the EM-based method (cf.~Figure~\ref{Fig:stateParam_Error_differentMethods}).
\paragraph{\bf Organization}
Section~\ref{sec:Bayesian_Estimation_for_Wiener_Model} formulates the MMSE estimation problem for the parameterized output function~\eqref{eq:Function_Space}, reviews the affine parameter estimator of~\cite{vakili2025optimal}, and motivates the nonlinear parameter estimator structure. Section~\ref{sec:Nonlinear_Fixedpoint_Estimation} develops two approaches for constructing DBS estimates and introduces their fixed-point characterizations when combined with the parameter estimator, resulting in the dual basis-parameter and dual state-parameter estimators. Section~\ref{sec:Numerical_Experiments} presents numerical experiments comparing the proposed dual estimators with the affine parameter estimator and SMC-based methods. Detailed proofs of the mathematical statements are provided in Appendix~\ref{appendix:Technical_proofs}, titled ``Technical Proofs.''
\paragraph{\bf Notation}
Throughout this paper, $\Real$ and $\Real^{n \times m}$ denote the set of real numbers and the set of $n \times m$ real matrices, respectively. The symbol $\eye$ refers to the identity matrix. Subscripts denote elements of a vector (e.g., $\vecX_{t,i}$ represents the $i^{\nth}$ element of the vector $\vecX_{t}$), while superscripts represent the time index of vector or matrix elements (e.g., $\MuPhi^{t}$ for a vector and $\SigmaPhi^{tt'}$ for a matrix). The trace operator is denoted by $\trace(\cdot)$, the transpose of a matrix $\A$ is denoted by $\A^{\tr}\!$, and $\diag(\A_1,\ldots,\A_k)$ represents a block-diagonal matrix with diagonal entries $\A_1,\ldots,\A_k$. Given $\A\in\Real^{m \times n}$, a matrix with columns $\vecA_{1},\ldots,\vecA_{n}\in\Real^{m}$, $\vectorize(\A)$ is the vector $[\vecA_{1}^{\tr},\ldots,\vecA_{n}^{\tr}]^{\tr}\in\Real^{mn}$. The inner product of two vectors $\vecX$ and $\vecY$ is given by $\innerS{\vecX}{\vecY} = \vecX^{\tr}\vecY$, and the corresponding 2-norm is $\norm{\vecX} = \sqrt{\innerS{\vecX}{\vecX}}$. The inner product of two matrices $\A, \B \in \Real^{m \times n}$ is defined as $\innerS{\A}{\B} = \trace(\A^{\!\tr}\B)$. The conic inequality $\A \preceq \B$ means that the matrix difference $\B - \A$ is positive semidefinite, i.e., $\B - \A \succeq 0$. The notation $\Pdist(\mu,\Sigma)$ refers to an arbitrary distribution with mean $\mu$ and covariance matrix $\Sigma$, while a multivariate normal (Gaussian) distribution is denoted by $\NormalDist(\mu,\Sigma)$ and a uniform distribution over $[a,b]$ is denoted by $\UniformDist(a,b)$. The symbol $\sim$ stands for \emph{``distributed according to''}. For a random vector $\vecTheta \in \Real^{n}$, an estimate and its corresponding estimation-error covariance are denoted by $\hat{\vecTheta}$ and $\hatSigmaTheta{}$, respectively, and the indices of these quantities indicate the type of estimator: $(\hat{\vecTheta}_{\mathrm{opt}}, \hatSigmaTheta{\mathrm{opt}})$ are obtained from the optimal Bayesian MMSE estimator, $(\hat{\vecTheta}_{\mathrm{af}}, \hatSigmaTheta{\mathrm{af}})$  from the affine estimator, and $(\hat{\vecTheta}_{\mathrm{nl}}, \hatSigmaTheta{\mathrm{nl}})$ from a nonlinear estimator.
\section{Bayesian estimation for the Wiener model} \label{sec:Bayesian_Estimation_for_Wiener_Model}
In this section, we formulate the Bayesian MMSE estimation problem for the Wiener model and review key results on the Bayesian affine MMSE estimator from~\cite{vakili2025optimal}. We then discuss the relevance of these results and their connections to the proposed nonlinear estimator.
\subsection{Problem description} \label{subsec:Problem-Description}
Let the process noise $\vecW_t$, the measurement noise $\vecV_t$, the initial state $\vecX_0$, and the vector of parameters $\vecTheta$ be independent of one another at all times. Given the parametrized observation model~\eqref{eq:Function_Space} and state representation in~\eqref{eq:Dynamic_Observation_Models}, together with the sequence of measurements $\vecYbar = [\vecY_{0}, \ldots, \vecY_{\Trajectory}]^{\tr}\!$, our objective is to design an estimator $\hat{\vecTheta}(\cdot)$ that minimizes the following mean squared error (MSE):
\begin{equation} \label{problem:Problem}
\min_{\hat{\vecTheta}(\cdot) \in \estimatorFunc} \, \bigExpectation{\big\|\vecTheta - \hat{\vecTheta}(\vecYbar)\big\|^2},
\end{equation}
where $\estimatorFunc \coloneqq \left\{ \hat{\vecTheta} \colon \Real^{\Trajectory+1} \!\! \to \! \Theta \right\}$ denotes the class of all estimators, $\Trajectory+1$ is the length of the data sequence, and $\Theta$ is the parameter space. By definition, among all $\hat{\vecTheta}$ in $\estimatorFunc$, the optimal Bayesian MMSE estimator $\hat{\vecTheta}_{\mathrm{opt}}$ for~\eqref{problem:Problem} coincides with the posterior mean of $\vecTheta$ given the measurements~\cite[Ch.~4]{levy2008principles}. Accordingly, the parameter estimate $\hat{\vecTheta}_{\mathrm{opt}}$ and its error covariance $\hatSigmaTheta{\mathrm{opt}}$ are
\begin{equation} \label{eq:Exact_Bayesian_MMSE}
\hat{\vecTheta}_{\mathrm{opt}} = \Expectation{\vecTheta|\vecYbar} = \int_{\Theta}\! \vecTheta p(\vecTheta|\vecYbar) \drm \vecTheta, \quad \hatSigmaTheta{\mathrm{opt}} = \Expectation{( \vecTheta - \hat{\vecTheta}_{\mathrm{opt}})(\vecTheta - \hat{\vecTheta}_{\mathrm{opt}})^{\!\tr}}.
\end{equation} 
However, this estimator is generally computationally intractable due to the complexity of the posterior distribution $p(\vecTheta|\vecYbar)$ and the lack of an explicit solution for the associated integral. In the following subsection, we briefly review the affine MMSE parameter estimator, which admits a closed-form solution~\cite{vakili2025optimal} and serves as the foundation for the new contributions presented in this work.
\subsection{From affine to nonlinear estimators} \label{subsec:Affine_Nonlinear_Estimators}
To obtain a tractable alternative to the Bayesian MMSE estimator in~\eqref{eq:Exact_Bayesian_MMSE}, we first restrict attention to the class of affine estimators, characterize the corresponding MMSE solution, and subsequently use this construction as a stepping stone towards more general nonlinear estimators. The following generic result characterizes the optimal affine MMSE estimator in terms of the mean and covariance of a pair of random vectors, serving as the basis for our subsequent specialization to the parameter estimation.

\begin{lemma}[\textbf{Affine MMSE estimator}] \label{lemma:Generic_affine_MMSE_estimator}
Consider a random~$\xi = (\xi_{1}, \xi_{2})$ with components $\xi_{1} \in \Real^{\nZ{1}}$ and $\xi_{2} \in \Real^{\nZ{2}}$, whose joint distribution is
$$
\begin{bmatrix} \xi_{1} \\ \xi_{2} \end{bmatrix}
\sim \Pdist\!\left(
\begin{bmatrix} \mu_{1} \\ \mu_{2} \end{bmatrix},
\begin{bmatrix}
\Sigma_{11} & \Sigma_{12} \\
\Sigma_{21} & \Sigma_{22}
\end{bmatrix}
\right),
$$
where $\mu_{i} \coloneqq \Expectation{\xi_{i}}$ and $\Sigma_{ij} \coloneqq \Expectation{(\xi_{i}-\mu_{i})(\xi_{j}-\mu_{j})^{\tr}\!}$ for $i,j \in \{1,2\}$ denote the means and covariances, respectively. Suppose that $\Sigma_{22}$ is invertible and restrict the estimation of $\xi_{1}$ from $\xi_{2}$ to affine estimators,
\begin{equation} \label{eq:Affine_Class}
\estimatorAffine{\zeta} \!
\coloneqq \left\{\, \hat{\xi}_{1}(\xi_{2}) = \Psi \xi_{2} + \psi \,\middle|\, \Psi \in \Real^{\nZ{1} \times \nZ{2}},\, \psi \in \Real^{\nZ{1}} \right\}.
\end{equation}
\begin{subequations} \label{eq:Affine_MMSE_estimate_Cov}
For the Bayesian MMSE problem $\Cost^{\star} := \min_{\hat{\xi}_{1}(\cdot) \in \estimatorAffine{\zeta}} \, \Expectation{\|\xi_{1} - \hat{\xi}_{1}(\xi_{2})\|^2}$,
the solution comprises the affine MMSE estimate, its error covariance, and optimal cost:
\begin{equation} \label{eq:Affine_MMSE_closed_form}
\begin{cases}
\hat{\xi}_{1}(\xi_{2}) = \Psi^{\star} \xi_{2} + \psi^{\star}, \\[0.2em]
\Sigma_{\hat{\xi}_{1}} \coloneqq \bigExpectation{\big(\xi_{1} - \hat{\xi}_{1}(\xi_{2})\big) \big(\xi_{1} - \hat{\xi}_{1}(\xi_{2})\big)^{\!\!\tr}\!}
= \Sigma_{11} - \Sigma_{12}\Sigma_{22}^{\inv}\Sigma_{21}, \\[0.2em]
\Cost^{\star} = \trace(\Sigma_{\hat{\xi}_{1}}),
\end{cases}
\end{equation}
with the optimal coefficients given by
\begin{equation} \label{eq:Affine_optimal_coefficients}
\Psi^{\star} = \Sigma_{12}\Sigma_{22}^{\inv},
\qquad
\psi^{\star} = \mu_{1} - \Psi^{\star}\mu_{2}.
\end{equation}
\end{subequations}
\end{lemma}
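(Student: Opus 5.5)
We will prove this by direct minimization of the quadratic cost over $(\Psi,\psi)$, in two stages: first eliminate the offset $\psi$, then optimize the gain $\Psi$ by completing the square.

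\emph{Step 1 (offset).} For fixed $\Psi$, write $e = \xi_{1} - \Psi\xi_{2} - \psi$. Since $\Expectation{\|e\|^2} = \trace(\mathrm{Cov}(e)) + \|\Expectation{e}\|^2$ and $\mathrm{Cov}(e)$ does not depend on $\psi$, the cost is minimized exactly when $\Expectation{e}=0$. This gives $\psi = \mu_{1} - \Psi\mu_{2}$, which matches~\eqref{eq:Affine_optimal_coefficients} once $\Psi=\Psi^{\star}$. With this choice, $e = (\xi_{1}-\mu_{1}) - \Psi(\xi_{2}-\mu_{2})$.

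\emph{Step 2 (gain).} Expanding the error covariance in terms of the blocks $\Sigma_{ij}$ gives
\begin{equation*}
\mathrm{Cov}(e) = \Sigma_{11} - \Psi\Sigma_{21} - \Sigma_{12}\Psi^{\tr} + \Psi\Sigma_{22}\Psi^{\tr}.
\end{equation*}
Since $\Sigma_{22}$ is invertible and positive semidefinite, it is positive definite. Completing the square yields
\begin{equation*}
\mathrm{Cov}(e) = \Sigma_{11} - \Sigma_{12}\Sigma_{22}^{\inv}\Sigma_{21} + (\Psi - \Sigma_{12}\Sigma_{22}^{\inv})\,\Sigma_{22}\,(\Psi - \Sigma_{12}\Sigma_{22}^{\inv})^{\tr},
\end{equation*}
where we use $\Sigma_{21}=\Sigma_{12}^{\tr}$. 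The last term is positive semidefinite, so $\mathrm{Cov}(e) \succeq \Sigma_{11} - \Sigma_{12}\Sigma_{22}^{\inv}\Sigma_{21}$. Equality holds if and only if $\Psi = \Psi^{\star} = \Sigma_{12}\Sigma_{22}^{\inv}$, using positive definiteness of $\Sigma_{22}$ for the ``only if''.

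\emph{Step 3 (conclusion).} Taking traces, the cost is minimized at $(\Psi^{\star},\psi^{\star})$, and this minimizer is unique. The error covariance $\Sigma_{\hat{\xi}_{1}}$ equals the Schur complement from Step 2, because the error has zero mean, so its second moment coincides with its covariance. Hence $\Cost^{\star} = \trace(\Sigma_{\hat{\xi}_{1}})$. The argument in fact shows optimality in the Loewner order for the full error covariance, not just for its trace.

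The only step requiring any care is the completion of the square, specifically tracking the transposes and the symmetry of $\Sigma_{22}^{\inv}$. There is no genuine obstacle: the lemma only requires finite second moments, and no distributional assumptions beyond those.
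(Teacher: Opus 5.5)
Your proof is correct. It shares its first step with the paper's, and it differs in how the gain is obtained.

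For the offset, the paper centers $\xi_i=\mu_i+\Delta\xi_i$, notes that the cross term vanishes, and picks $\psi$ to kill the squared-bias term. That is your decomposition $\mathbb{E}\|e\|^2=\mathrm{tr}(\mathrm{Cov}(e))+\|\mathbb{E}e\|^2$.

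For the gain, the paper works only with the scalar cost $\mathcal{J}(\Psi)=\mathrm{tr}(\Sigma_{11}-2\Psi\Sigma_{21}+\Psi\Sigma_{22}\Psi^{\top})$. It sets $\partial\mathcal{J}/\partial\Psi=-2\Sigma_{12}+2\Psi\Sigma_{22}$ to zero and invokes strict convexity for uniqueness. You instead complete the square on the full matrix $\mathrm{Cov}(e)$.

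Your route avoids calculus and gives more:
\begin{itemize}
\item It shows that $\Psi^\star$ minimizes the error covariance in the Loewner order. Combined with $\mathbb{E}[ee^{\top}]=\mathrm{Cov}(e)+\mathbb{E}e\,\mathbb{E}e^{\top}$, the pair $(\Psi^\star,\psi^\star)$ is then optimal for any Loewner-monotone criterion, such as $\mathbb{E}[e^{\top}We]$ with $W\succeq 0$, not just the trace.
\item It makes explicit that an invertible covariance matrix is positive definite. The paper uses this only implicitly when it asserts strict convexity.
\item You note that $\Sigma_{\hat{\xi}_1}$ is defined as an uncentered second moment and equals the Schur complement only because the optimal error has zero mean. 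The paper passes over this point in its final substitution.
\end{itemize}

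The paper's first-order argument is slightly shorter, but it certifies optimality only for the trace criterion.
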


The proof of this lemma is provided in Appendix~\ref{proof:Generic_affine_MMSE_estimator_proof}. Finding the affine MMSE estimator for estimating $\vecTheta$ from $\vecYbar$ using Lemma~\ref{lemma:Generic_affine_MMSE_estimator} requires the observation covariance $\Sigma_{\vecYbar}$ and the cross-covariance between the unknown parameters and the observation vector, $\Sigma_{\vecTheta\vecYbar}$. This task is nontrivial because the observation model~\eqref{eq:Function_Space} depends on the inner product of two unknown quantities, namely the basis-function evaluations of the unknown system states $\phi(\vecX)$ and the unknown parameters $\vecTheta$. A central component in characterizing the affine estimator for the MMSE problem~\eqref{problem:Problem}, as detailed in~\cite{vakili2025optimal}, is the propagation of the state trajectory through the output basis functions. The resulting quantities, known as Dynamic Basis Statistics (DBS), are formalized as follows.

\begin{definition} [\textbf{Dynamic Basis Statistics}]
\label{definition:Dynamic_Basis_Statistics_Collection}
Let the state sequence $\vecXbar = [\vecX_0^{\tr}, \ldots, \vecX_{\Trajectory}^{\tr}]^{\tr}\!$ be a trajectory of~\eqref{eq:Dynamic_Observation_Models} for $t \in \{0, \ldots, \Trajectory\}$, and let $\matrixPhi(\vecXbar) = [\phi(\vecX_{0}), \ldots, \phi(\vecX_{\Trajectory})]$ be the matrix whose columns are the basis function vectors evaluated at each $\vecX_t$ as in~\eqref{eq:Function_Space}. The Dynamic Basis Statistics (DBS) collection with respect to the joint distribution of the state trajectory is defined as
\begin{equation} \label{eq:DBS_Collection}
\matrixPhibar = [\MuPhi^{0}, \ldots, \MuPhi^{\Trajectory}], \qquad \SigmaPhi = \begin{bmatrix} \TimedSigmaPhi{^{00}} & \ldots & \TimedSigmaPhi{^{0\Trajectory}} \\ 
\vdots & \ddots &  \vdots \\ 
\TimedSigmaPhi{^{\Trajectory0}} & \ldots & \TimedSigmaPhi{^{\Trajectory\Trajectory}}
\end{bmatrix},
\end{equation}
where $\MuPhi^{t}$ is the mean of $\phi(\vecX_{t})$ and $\TimedSigmaPhi{^{tt'}}$ is the covariance between $\phi(\vecX_{t})$ and $\phi(\vecX_{t'})$, given by
\begin{equation} \label{eq:Random_Phi}
\MuPhi^{t} \coloneqq \Expectation{\phi(\vecX_{t})}, \qquad \TimedSigmaPhi{^{tt'}} \coloneqq \Expectation{\phi(\vecX_{t})\phi^{\tr}\!\!(\vecX_{t'})} - 
\Expectation{\phi(\vecX_{t})}\Expectation{\phi(\vecX_{t'})}^{\tr}\!.
\end{equation}
\end{definition}

This DBS collection enables marginalization over the latent system states, which in turn allows derivation of the closed-form solution for the Bayesian affine MMSE estimator.

\begin{theorem}[\textbf{Affine MMSE parameter estimator}{~\cite[Thm.~3.2]{vakili2025optimal}}] \label{theorem:MMSE_Affine_Parameter_Estimator}
The Bayesian MMSE estimator of Problem~\ref{problem:Problem} within the affine class, analogous to~\eqref{eq:Affine_Class}, and its corresponding estimation-error covariance with the optimal cost, as functions of $\matrixPhibar$ and $\SigmaPhi$, are given by 
\begin{subequations}\label{eq:opt_Bayes_Parameter}
\begin{equation} \label{eq:Parameters_Affine_Estimation}
\begin{cases}
\hat{\vecTheta}_{\mathrm{af}}(\vecYbar ; \matrixPhibar, \SigmaPhi) = \PsiTheta^{\star}(\matrixPhibar, \SigmaPhi) \vecYbar + \psiTheta^{\star}(\matrixPhibar, \SigmaPhi), \\[0.2em]
\hatSigmaTheta{\mathrm{af}}\!(\matrixPhibar, \SigmaPhi) = \SigmaTheta{} - \SigmaTheta{}\matrixPhibar \big( \matrixPhibar^{\tr} \SigmaTheta{} \matrixPhibar + \operatorDBS(\SigmaPhi) + \SigmaVbar{} \big)^{\inv}\!\matrixPhibar^{\tr} \SigmaTheta{}, \\[0.2em]
\Cost^{\star}_{\vecTheta}(\matrixPhibar, \SigmaPhi) 
= \trace(\hatSigmaTheta{\mathrm{af}}\!(\matrixPhibar, \SigmaPhi)),
\end{cases}
\end{equation}
where the coefficient matrices are defined as
\begin{equation} \label{eq:MMSE_LinearEstimator}
\PsiTheta^{\star}(\matrixPhibar, \SigmaPhi) = \SigmaTheta{}\matrixPhibar \big( \matrixPhibar^{\tr} \SigmaTheta{}\matrixPhibar + \operatorDBS(\SigmaPhi) + \SigmaVbar{} \big)^{\inv}\!, \quad 
\psiTheta^{\star}(\matrixPhibar, \SigmaPhi) = \MuTheta{} - \PsiTheta^{\star}(\matrixPhibar, \SigmaPhi) \matrixPhibar^{\tr} \MuTheta{}, 
\end{equation}
$\operatorDBS \colon \Real^{((\Ntheta+1)(\Trajectory+1))^{2}} \!\! \to \! \Real^{(\Trajectory+1)^2}$ is a linear operator whose $(t,t')^{\nth}$ component is 
\begin{equation} \label{eq:Covariance_Operator}
\operatorDBS^{tt'}\!(\SigmaPhi) = \innerS{\TimedSigmaPhi{^{tt'}}}{(\SigmaTheta{} + \MuTheta{}\MuTheta{}^{\tr})},
\end{equation}
and $\SigmaVbar{}$ is the covariance matrix of the measurement noise sequence $\vecVbar = [\vecV_{0}, \ldots, \vecV_{\Trajectory}]^{\tr}\!$, which is the diagonal matrix~$\SigmaVbar{} = \diag(\sigmaV{0}^{2}, \ldots, \sigmaV{\Trajectory}^{2})$, when they are mutually independent.
\end{subequations}
\end{theorem}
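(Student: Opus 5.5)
The plan is to apply Lemma~\ref{lemma:Generic_affine_MMSE_estimator} with $\xi_{1} = \vecTheta$ and $\xi_{2} = \vecYbar$. The whole task then reduces to computing the first and second moments $\mu_{\vecYbar}$, $\Sigma_{\vecTheta\vecYbar}$ and $\Sigma_{\vecYbar}$ in terms of the DBS collection of Definition~\ref{definition:Dynamic_Basis_Statistics_Collection}. I write the stacked measurement model as $\vecYbar = \matrixPhi(\vecXbar)^{\tr}\vecTheta + \vecVbar$, where $\vecY_t = \innerS{\phi(\vecX_t)}{\vecTheta} + \vecV_t$. The key structural fact is the independence assumption of Subsection~\ref{subsec:Problem-Description}. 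Since $\vecXbar$ is a deterministic function of $(\vecX_0, \vecW_1, \ldots, \vecW_{\Trajectory})$ and the known inputs, the random objects $\vecXbar$, $\vecTheta$ and $\vecVbar$ are mutually independent. Consequently every expectation factorizes, so I would condition on $\vecTheta$ or on $\vecXbar$ as convenient.

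\textbf{Step 1 (means and cross-covariance).} By independence and the zero mean of $\vecVbar$, $\mu_{\vecYbar} = \matrixPhibar^{\tr}\MuTheta{}$. For the cross-covariance,
\[
\Expectation{\vecTheta\vecYbar^{\tr}} = \Expectation{\vecTheta\vecTheta^{\tr}}\matrixPhibar = (\SigmaTheta{} + \MuTheta{}\MuTheta{}^{\tr})\matrixPhibar .
\]
Subtracting $\MuTheta{}\MuTheta{}^{\tr}\matrixPhibar$ gives $\Sigma_{\vecTheta\vecYbar} = \SigmaTheta{}\matrixPhibar$.

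\textbf{Step 2 (observation covariance).} This is the main computation. For the $(t,t')$ entry I would use the tower property conditioned on $\vecXbar$:
\[
\Expectation{\vecY_t\vecY_{t'}} = \Expectation{\phi(\vecX_t)^{\tr}(\SigmaTheta{}+\MuTheta{}\MuTheta{}^{\tr})\phi(\vecX_{t'})} + \Sigma_{\vecVbar}^{tt'} .
\]
I then rewrite the first term as the trace $\innerS{\Expectation{\phi(\vecX_t)\phi(\vecX_{t'})^{\tr}}}{\SigmaTheta{}+\MuTheta{}\MuTheta{}^{\tr}}$. Next I split $\Expectation{\phi(\vecX_t)\phi(\vecX_{t'})^{\tr}} = \TimedSigmaPhi{^{tt'}} + \MuPhi^{t}\MuPhi^{t'\tr}$. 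The $\TimedSigmaPhi{^{tt'}}$ part yields exactly $\operatorDBS^{tt'}(\SigmaPhi)$. The mean part yields $\MuPhi^{t\tr}\SigmaTheta{}\MuPhi^{t'} + \MuPhi^{t\tr}\MuTheta{}\MuTheta{}^{\tr}\MuPhi^{t'}$. Subtracting $\mu_{\vecYbar}\mu_{\vecYbar}^{\tr}$ cancels the $\MuTheta{}\MuTheta{}^{\tr}$ term. The result is
\[
\Sigma_{\vecYbar} = \matrixPhibar^{\tr}\SigmaTheta{}\matrixPhibar + \operatorDBS(\SigmaPhi) + \SigmaVbar{},
\]
with $\SigmaVbar{}$ diagonal when the noises are mutually independent. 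The obstacle here is bookkeeping rather than depth. One must carry the second moment $\SigmaTheta{}+\MuTheta{}\MuTheta{}^{\tr}$, not the covariance, inside $\operatorDBS$. One must also verify that the operator is linear in $\SigmaPhi$, which is immediate from the trace form. If the basis is complex-valued, I would either restrict to real bases or note that real-valuedness of $\trueFunc_\vecTheta$ lets the same identities be read with transposes.

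\textbf{Step 3 (conclusion).} I would note that $\Sigma_{\vecYbar}$ is invertible, since $\SigmaVbar{}\succ 0$ and the other two summands are positive semidefinite: $\operatorDBS(\SigmaPhi)$ is the covariance-like Gram matrix $\Expectation{\tilde\phi^{\tr}_t M \tilde\phi_{t'}}$ of the centred basis evaluations with $M\succeq0$. Substituting Steps~1--2 into \eqref{eq:Affine_optimal_coefficients} gives $\PsiTheta^{\star}$ and $\psiTheta^{\star} = \MuTheta{} - \PsiTheta^{\star}\matrixPhibar^{\tr}\MuTheta{}$. Substituting them into \eqref{eq:Affine_MMSE_closed_form} gives $\hatSigmaTheta{\mathrm{af}}$ and $\Cost^{\star}_{\vecTheta} = \trace(\hatSigmaTheta{\mathrm{af}})$, which completes the argument.
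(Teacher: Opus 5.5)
Your proposal is correct and follows the paper's route: the paper also applies Lemma~\ref{lemma:Generic_affine_MMSE_estimator} with $\xi_{1}=\vecTheta$, $\xi_{2}=\vecYbar$ and computes the needed moments from the lifted model $\vecYbar = \matrixPhi^{\tr}\!\!(\vecXbar)\vecTheta + \vecVbar$, deferring the details to the cited reference. Your moment computations are right, including $\Sigma_{\vecTheta\vecYbar} = \SigmaTheta{}\matrixPhibar$ and the cancellation of the $\MuTheta{}\MuTheta{}^{\tr}$ contribution in $\Sigma_{\vecYbar}$, and your invertibility argument via $\SigmaVbar{} \succ 0$ is the same one the paper uses in its analogous basis-estimator proof.
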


\begin{remark}[\textbf{Functional dependence}]
In principle, the estimator $\hat{\vecTheta}_{\mathrm{af}}(\cdot)$ is a function of three arguments. However, the notation used in~\eqref{eq:Parameters_Affine_Estimation}, i.e., $\hat{\vecTheta}_{\mathrm{af}}(\vecYbar ; \matrixPhibar, \SigmaPhi)$, emphasizes the dependence of the estimator on the DBS collection $(\matrixPhibar,\SigmaPhi)$ in the last two arguments. This estimator depends linearly on the measurements $\vecYbar$ and nonlinearly on $(\matrixPhibar,\SigmaPhi)$. Note that the DBS collection itself is generated by a nonlinear transformation of the inputs and system dynamics and does not involve the measurements.
\end{remark}

Given Definition~\ref{definition:Dynamic_Basis_Statistics_Collection}, the proof of Theorem~\ref{theorem:MMSE_Affine_Parameter_Estimator} follows directly from the generic affine MMSE estimator of Lemma~\ref{lemma:Generic_affine_MMSE_estimator} by setting $\xi_{1}=\vecTheta$ and $\xi_{2}=\vecYbar$ (see~\cite[Thm.~3.2]{vakili2025optimal} for a complete proof). While $\hat{\vecTheta}_{\mathrm{af}}(\cdot)$ is provably optimal within the affine class, the proximity of its MSE to that of the Bayesian MMSE estimator, $\hat{\vecTheta}_{\mathrm{opt}}$, remains an open question. When perfect state knowledge is available, the measurements contain uncertainty only through the measurement noise. In this case, the affine MMSE estimator returns $\hat{\vecTheta}_{\mathrm{opt}}$ if the prior on the unknown parameters, $\Pdist(\MuTheta{}, \SigmaTheta{})$, and the distribution of the measurement noise sequence, $\Pdist(0, \SigmaVbar{})$, belong to an elliptical family~\cite[Rem.~3.1]{vakili2025optimal}. As observed in the simple case of Example~\ref{ex:1d_simple}, $\hat{\vecTheta}_{\mathrm{af}}(\cdot)$ can track $\hat{\vecTheta}_{\mathrm{opt}}$ well for certain priors on $\vecX$ and ranges of $\vecY$, particularly when the state uncertainty is small. Let us revisit Example~\ref{ex:1d_simple} to see how Theorem~\ref{theorem:MMSE_Affine_Parameter_Estimator} specializes to an explicit expression for the affine estimator in that setting.

\begin{tcolorbox}[colback=darkblue2!7, colframe=darkblue2!35, boxrule=0.5pt, arc=2pt, breakable]
\paragraph{\textbf{Example~\ref{ex:1d_simple} (continued)}}
In the setting of~\eqref{eq:linear_measurement}, the dynamical system is one-dimensional, and the observation model involves the linear basis function $\phi(\vecX) = \vecX$. In this case, the DBS from Definition~\ref{definition:Dynamic_Basis_Statistics_Collection} simplify to
$$
\matrixPhibar = \MuX{}, \qquad \SigmaPhi = \sigma_{\vecX}^{2},
$$
and the affine MMSE estimator and its estimation-error covariance from Theorem~\ref{theorem:MMSE_Affine_Parameter_Estimator} are
\begin{equation}
\label{eq:affine_param_1D}
\begin{aligned}
\hat{\vecTheta}_{\mathrm{af}}(\vecY ; \MuX{}, \sigma_{\vecX}^{2}) & = \MuTheta{} + \dfrac{\sigmaTheta{}^{2} \MuX{}}{\MuX{}^{2}\sigmaTheta{}^{2} + \sigma_{\vecX}^{2}\sigmaTheta{}^{2} + \sigma_{\vecX}^{2}\MuTheta{}^{2} + \sigmaV{}^{2}}\, (\vecY - \MuX{} \MuTheta{}) \fracComma \\
\hatSigmaTheta{\mathrm{af}}\!(\MuX{}, \sigma_{\vecX}^{2}) & = \sigmaTheta{}^{2} - \dfrac{\sigmaTheta{}^{4} \MuX{}^{2}}{\MuX{}^{2}\sigmaTheta{}^{2} + \sigma_{\vecX}^{2}\sigmaTheta{}^{2} + \sigma_{\vecX}^{2}\MuTheta{}^{2} + \sigmaV{}^{2}} \fracDot
\end{aligned}
\end{equation}
\end{tcolorbox}

However, as shown in Figure~\ref{Fig:Compare_MMSE_Affine}, the Bayesian MMSE estimator for Example~\ref{ex:1d_simple} can become highly nonlinear as the uncertainty in the system state increases. Consequently, the MSE of the affine estimator in~\eqref{eq:affine_param_1D} may be far from that of $\hat{\vecTheta}_{\mathrm{opt}}$. More generally, for hidden Markov processes (HMPs), the MSE decomposes into two terms~\cite{ephraim2002hidden, ephraim2002lower}: the MSE achieved when the system state is perfectly known, and an additional cross-error term for which no explicit expression is generally available. An analogous decomposition holds for the class of affine estimators marginalized over the latent system states. The following lemma demonstrates that the expected error of the affine MMSE estimator is bounded below by the fundamental limit corresponding to complete knowledge of the system state.

\begin{lemma}[\textbf{Lower bound on the MSE of the affine estimator}] \label{lemma:MMSE_Lowerbound}
Consider the affine MMSE estimator $\hat{\vecTheta}_{\mathrm{af}}(\vecYbar ; \matrixPhibar, \SigmaPhi)$ and its optimal error~$\Cost^{\star}_{\vecTheta}(\matrixPhibar, \SigmaPhi)$ as defined in~\eqref{eq:Parameters_Affine_Estimation}. , we have
\begin{equation}
\label{eq:MSE_lower_bound}
\Cost^{\star}_{\vecTheta}(\matrixPhibar, \SigmaPhi) \geq \CondExpectation{\vecXbar}{\Cost^{\star}_{\vecTheta}(\matrixPhi(\vecXbar), 0)} = \CondExpectation{\vecXbar}{\min_{\hat{\vecTheta}(\cdot) \in \estimatorAffine{\vecTheta}} \, \bigExpectation{\big\|\vecTheta - \hat{\vecTheta}(\vecYbar)\big\|^2 \,\big|\, \vecXbar}},
\end{equation}
where $\vecXbar$ is the state trajectory, $\matrixPhi(\vecXbar)$ is the DBS in Definition~\ref{definition:Dynamic_Basis_Statistics_Collection}, the set $\estimatorAffine{\vecTheta}$ denotes the class of affine estimators of $\vecTheta$ from $\vecYbar$, analogous to~\eqref{eq:Affine_Class}, and the inner expectation on the right-hand side is conditional on $\vecXbar$, while the outer expectation is taken with respect to the marginal distribution of $\vecXbar$.
\end{lemma}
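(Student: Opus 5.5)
The argument rests on one observation: conditioning on more information can only lower the optimal cost, even inside the affine class. I would work with the affine estimator $\hat{\vecTheta}_{\mathrm{af}}(\cdot\,;\matrixPhibar,\SigmaPhi)$, whose coefficients $(\PsiTheta^{\star},\psiTheta^{\star})$ from \eqref{eq:MMSE_LinearEstimator} are fixed and do not depend on $\vecXbar$.

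\textbf{Step 1: tower property.} By the law of total expectation,
\[
\Cost^{\star}_{\vecTheta}(\matrixPhibar,\SigmaPhi)=\bigExpectation{\|\vecTheta-\PsiTheta^{\star}\vecYbar-\psiTheta^{\star}\|^{2}}
=\CondExpectation{\vecXbar}{\bigExpectation{\|\vecTheta-\PsiTheta^{\star}\vecYbar-\psiTheta^{\star}\|^{2}\,\big|\,\vecXbar}}.
\]

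\textbf{Step 2: pointwise comparison.} For each fixed realization of $\vecXbar$, the map $\vecYbar\mapsto\PsiTheta^{\star}\vecYbar+\psiTheta^{\star}$ belongs to $\estimatorAffine{\vecTheta}$. Its conditional MSE is therefore at least the conditional minimum over $\estimatorAffine{\vecTheta}$. Taking the outer expectation of this pointwise inequality gives the inequality in \eqref{eq:MSE_lower_bound}. The point is that the minimizer inside the conditional expectation may depend on $\vecXbar$, whereas $\hat{\vecTheta}_{\mathrm{af}}$ may not.

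\textbf{Step 3: the equality.} I would identify the conditional affine optimum with $\Cost^{\star}_{\vecTheta}(\matrixPhi(\vecXbar),0)$. Conditioned on $\vecXbar$, the measurement model is $\vecYbar=\matrixPhi(\vecXbar)^{\tr}\vecTheta+\vecVbar$. By the independence assumptions of Section~\ref{subsec:Problem-Description}, $\vecTheta$ and $\vecVbar$ keep their distributions $\Pdist(\MuTheta{},\SigmaTheta{})$ and $\Pdist(0,\SigmaVbar{})$ given $\vecXbar$. The conditional moments are then
\[
\mathbb{E}[\vecYbar\mid\vecXbar]=\matrixPhi^{\tr}\MuTheta{},\qquad
\Sigma_{\vecYbar\mid\vecXbar}=\matrixPhi^{\tr}\SigmaTheta{}\matrixPhi+\SigmaVbar{},\qquad
\Sigma_{\vecTheta\vecYbar\mid\vecXbar}=\SigmaTheta{}\matrixPhi.
\]
Applying Lemma~\ref{lemma:Generic_affine_MMSE_estimator} with $\xi_1=\vecTheta$ and $\xi_2=\vecYbar$ under the conditional law gives an error covariance equal to the second line of \eqref{eq:Parameters_Affine_Estimation}, with $\matrixPhibar$ replaced by $\matrixPhi(\vecXbar)$ and $\operatorDBS(0)=0$. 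Its trace is exactly $\Cost^{\star}_{\vecTheta}(\matrixPhi(\vecXbar),0)$. Equivalently, this is Theorem~\ref{theorem:MMSE_Affine_Parameter_Estimator} applied to a degenerate state distribution concentrated at $\vecXbar$, whose DBS collection is $(\matrixPhi(\vecXbar),0)$.

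\textbf{Main obstacle.} Most of the work is bookkeeping. The one genuinely delicate point is measurability and integrability of the conditional minimum as a function of $\vecXbar$. This is resolved by the explicit closed form in Step 3, which is a continuous function of $\matrixPhi(\vecXbar)$ and lies between $0$ and $\trace(\SigmaTheta{})$, so the outer expectation is well defined. Invertibility of $\matrixPhi^{\tr}\SigmaTheta{}\matrixPhi+\SigmaVbar{}$ follows from $\SigmaVbar{}\succ0$.
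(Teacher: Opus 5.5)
Your proposal is correct and follows the same route as the paper's proof. Both arguments use the tower property over $\vecXbar$, then compare the fixed-coefficient estimator $\hat{\vecTheta}_{\mathrm{af}}(\cdot\,;\matrixPhibar,\SigmaPhi)$ pointwise against the conditional affine minimum, and finally identify that minimum with $\Cost^{\star}_{\vecTheta}(\matrixPhi(\vecXbar),0)$; your Step~3 (conditional moments plus Lemma~\ref{lemma:Generic_affine_MMSE_estimator}) and your measurability remark just make explicit what the paper asserts directly.
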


The proof can be found in Appendix~\ref{proof:MMSE_Lowerbound_proof}. Notably, the term~$\Cost^{\star}_{\vecTheta}\big(\matrixPhi(\vecXbar), 0\big)$ in \eqref{eq:MSE_lower_bound} is indeed the optimal estimation error achievable by an affine MMSE estimator when the state trajectory~$\vecXbar$ is perfectly known. In this light, the lower bound in Lemma~\ref{lemma:MMSE_Lowerbound} suggests a fundamental performance limit attainable by affine estimators when the state trajectory is perfectly known. This gap motivates the development of nonlinear parameter estimators that more effectively exploit the DBS structure and the latent state information, with the aim of reducing the difference between the achievable MSE and the fundamental limit suggested by Lemma~\ref{lemma:MMSE_Lowerbound}. Consequently, this study proposes a nonlinear parameter estimator by composing the affine MMSE mapping~\eqref{eq:opt_Bayes_Parameter} with data-dependent DBS estimates. Let $\hat{\matrixPhi}(\vecYbar)$ and $\hatSigmaPhi{}(\vecYbar)$ denote estimates of the DBS components constructed from the measurements $\vecYbar$. Substituting these estimates into affine MMSE expressions $\hat{\vecTheta}_{\mathrm{af}}(\vecYbar ; \cdot, \cdot)$ and $\hatSigmaTheta{\mathrm{af}}\!(\cdot, \cdot)$ in~\eqref{eq:Parameters_Affine_Estimation} yields the nonlinear estimator
\begin{equation} \label{eq:Nonlinear_Parameter_Estimation}
\begin{cases}
\hat{\vecTheta}_{\mathrm{nl}}(\vecYbar) = \hat{\vecTheta}_{\mathrm{af}}\big(\vecYbar ; \hat{\matrixPhi}(\vecYbar), \hatSigmaPhi{}(\vecYbar)\big), \\[0.2em]
\hatSigmaTheta{\mathrm{nl}}\!(\vecYbar) = \hatSigmaTheta{\mathrm{af}}\!\big(\hat{\matrixPhi}(\vecYbar), \hatSigmaPhi{}(\vecYbar)\big).
\end{cases}
\end{equation}
Thus, $\hat{\vecTheta}_{\mathrm{nl}}(\vecYbar)$ has the same functional form as the affine MMSE estimator, but is nonlinear with respect to $\vecYbar$ because its DBS arguments are replaced by data-driven estimates. 

\begin{remark}[\textbf{Plug-in interpretation}] \label{remark:Plugin_interpretation}
The nonlinear estimator~\eqref{eq:Nonlinear_Parameter_Estimation} can be interpreted as a plug-in construction with respect to the DBS statistics. Whereas the affine MMSE estimator~\eqref{eq:Parameters_Affine_Estimation} treats $(\matrixPhibar,\SigmaPhi)$ as known hyperparameters, the nonlinear variant substitutes their data-dependent estimates $\big(\hat{\matrixPhi}(\vecYbar), \hatSigmaPhi{}(\vecYbar)\big)$ into the same affine formulas. This procedure is analogous to empirical Bayes methods, where hyperparameters are first estimated from data and subsequently treated as fixed when evaluating the conditional estimator~\cite{CarlinLouis2000}. Consequently, the independence assumptions underlying the affine derivation no longer hold exactly, since the effective DBS now depend on $\vecYbar$. The covariance $\hatSigmaTheta{\mathrm{nl}}\!(\vecYbar)$ is thus computed under the approximation that the plugged-in DBS are fixed rather than random. Nevertheless, this construction retains the affine functional form while incorporating DBS estimates.
\end{remark}

The subsequent section presents two approaches for constructing such DBS estimates, each resulting in a fixed-point characterization of the nonlinear parameter estimator.
\section{Fixed-Point Characterization of Nonlinear Estimation}
\label{sec:Nonlinear_Fixedpoint_Estimation}
The nonlinear parameter estimator~\eqref{eq:Nonlinear_Parameter_Estimation} depends critically on accurate DBS estimates $(\hat{\matrixPhi}(\vecYbar), \hatSigmaPhi{}(\vecYbar))$ that improve parameter estimation performance beyond the affine counterpart. Since $(\matrixPhibar,\SigmaPhi)$ are determined by the latent state sequence, we investigate two distinct strategies: (i) direct estimation of the basis-function evaluations $\phi(\vecX)$ in~\eqref{eq:Function_Space} over the entire trajectory, and (ii) estimation of the internal latent system states $\vecXbar$ followed by DBS construction from the resulting state estimates. Crucially, an interdependency arises in both strategies, as the DBS estimates require parameter estimates and vice versa, which are addressed by finding a fixed-point solution in the following subsections.
\subsection{Dual basis-parameter estimation} \label{subsec:Dual_basis_parameter_estimation}
The fundamental limit in Lemma~\ref{lemma:MMSE_Lowerbound} suggests that DBS estimates should satisfy $\hat{\matrixPhi}(\vecYbar) \! \to \! \matrixPhi(\vecXbar)$ and $\hatSigmaPhi{}(\vecYbar) \! \to \! 0$ to improve the parameter estimation performance of $\hat{\vecTheta}_{\mathrm{nl}}(\vecYbar)$ compared to $\hat{\vecTheta}_{\mathrm{af}}(\vecYbar ; \matrixPhibar, \SigmaPhi)$. This subsection implements the first strategy by estimating the basis-function evaluations $\phi(\vecX)$ in~\eqref{eq:Function_Space} via an affine MMSE estimator, and then forming $\hat{\matrixPhi}(\vecYbar)$ and $\hatSigmaPhi{}(\vecYbar)$ from the resulting basis estimates and their estimation-error covariance. The following lemma is analogous to Theorem~\ref{theorem:MMSE_Affine_Parameter_Estimator} and formalizes this construction as an immediate consequence of Lemma~\ref{lemma:Generic_affine_MMSE_estimator}.

\begin{lemma} [\textbf{DBS estimation via affine MMSE basis estimator}] \label{lemma:MMSE_Affine_Basis_Estimator}
Let $\matrixPhi(\vecXbar)$ be the unknown basis-function matrix over the trajectory, with prior DBS $(\matrixPhibar,\SigmaPhi)$ as in Definition~\ref{definition:Dynamic_Basis_Statistics_Collection}, and define the stacked basis-function vector $\phi \coloneqq \vectorize(\matrixPhi(\vecXbar)) = [\phi^{\tr}\!\!(\vecX_{0}),\ldots,\phi^{\tr}\!\!(\vecX_{\Trajectory})]^{\tr}\!$. Given the prior mean and covariance of the unknown parameters $(\MuTheta{},\SigmaTheta{})$, the Bayesian affine MMSE estimator of $\phi$ admits the closed form $\hat{\phi}_\mathrm{af}(\vecYbar ; \MuTheta{}, \SigmaTheta{}) = \PsiPhi^{\star}(\MuTheta{}, \SigmaTheta{}) \vecYbar + \psiPhi^{\star}(\MuTheta{}, \SigmaTheta{})$, where
\begin{subequations}\label{eq:opt_Bayes_Basis}
\begin{equation} \label{eq:MMSE_BasisEstimator}
\PsiPhi^{\star}(\MuTheta{}, \SigmaTheta{}) = \SigmaPhi \mu_{\Theta} \big( \mu_{\Theta}^{\tr} \SigmaPhi \mu_{\Theta} + \operatorParam(\SigmaTheta{}) + \SigmaVbar{} \big)^{\inv}\!, \quad \psiPhi^{\star}(\MuTheta{}, \SigmaTheta{}) = \phiBar - \PsiPhi^{\star}(\MuTheta{}, \SigmaTheta{}) \matrixPhibar^{\tr} \MuTheta{},
\end{equation}
with $\phiBar = \vectorize(\matrixPhibar)$, 
$\mu_{\Theta} = \diag(\timedMuTheta{0}, \ldots, \timedMuTheta{\Trajectory})$ where $\timedMuTheta{t} = \MuTheta{}$ for each $t \in \{0, \ldots, \Trajectory\}$, $\SigmaVbar{}$ identical to that in the affine MMSE parameter estimator~\eqref{eq:opt_Bayes_Parameter}, and the linear operator $\operatorParam \colon \Real^{(\Ntheta+1)^{2}} \!\! \to \! \Real^{(\Trajectory+1)^2}$ having $(t,t')^{\nth}$ component
\begin{equation} \label{eq:MMSE_Basis_Covariance_Operator}
\operatorParam^{tt'}\!(\SigmaTheta{}) = \innerS{\SigmaTheta{}}{( \TimedSigmaPhi{^{tt'}}\! + \MuPhi^{t}\MuPhi^{t'^{\tr}\!})}.
\end{equation}
Consequently, the resulting DBS estimates and the corresponding MMSE are
\begin{equation} \label{eq:BasisCollection_Affine_Estimation}
\begin{cases}
\hat{\matrixPhi}_{\mathrm{af}}(\vecYbar ; \MuTheta{}, \SigmaTheta{}) = [\hat{\phi}_\mathrm{af}^{\mkern1mu 0}(\vecYbar ; \MuTheta{}, \SigmaTheta{}), \ldots, \hat{\phi}_\mathrm{af}^{\mkern1mu \Trajectory}(\vecYbar ; \MuTheta{}, \SigmaTheta{})], \\[0.2em]
\hatSigmaPhi{\mathrm{af}}\!(\MuTheta{}, \SigmaTheta{}) = \SigmaPhi - \SigmaPhi \mu_{\Theta} \big( \mu_{\Theta}^{\tr} \SigmaPhi \mu_{\Theta} + \operatorParam(\SigmaTheta{}) + \SigmaVbar{} \big)^{\inv}\! \mu_{\Theta}^{\tr} \SigmaPhi, \\[0.2em]
\Cost^{\star}_{\phi}(\MuTheta{}, \SigmaTheta{}) = \trace(\hatSigmaPhi{\mathrm{af}}\!(\MuTheta{}, \SigmaTheta{})),
\end{cases}
\end{equation}
where $\hat{\phi}_\mathrm{af}^{\mkern1mu t}(\vecYbar ; \MuTheta{}, \SigmaTheta{})$ is the estimate of $\phi(\vecX_{t})$.
\end{subequations}
\end{lemma}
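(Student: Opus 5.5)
We apply Lemma~\ref{lemma:Generic_affine_MMSE_estimator} with $\xi_{1}=\phi=\vectorize(\matrixPhi(\vecXbar))$ and $\xi_{2}=\vecYbar$. Then the only work is to compute the four moments $\mu_{1},\mu_{2},\Sigma_{12},\Sigma_{22}$ in terms of $(\matrixPhibar,\SigmaPhi)$ and $(\MuTheta{},\SigmaTheta{})$. The key structural observation is that the measurement can be rewritten as $\vecYbar=\matrixPhi(\vecXbar)^{\tr}\vecTheta+\vecVbar=\mu_{\Theta}^{\tr}\phi+\matrixPhi(\vecXbar)^{\tr}(\vecTheta-\MuTheta{})+\vecVbar$. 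Here $\mu_{\Theta}=\diag(\MuTheta{},\ldots,\MuTheta{})$ is block diagonal of size $(\Ntheta+1)(\Trajectory+1)\times(\Trajectory+1)$, so that $\mu_{\Theta}^{\tr}\phi$ has $t^{\nth}$ entry $\innerS{\phi(\vecX_{t})}{\MuTheta{}}$. The three summands are mutually uncorrelated: $\vecTheta$, $\vecXbar$ (hence $\phi$) and $\vecVbar$ are independent, and $\vecTheta-\MuTheta{}$ has zero mean.

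First, the means. We have $\mu_{1}=\phiBar$, and by independence of $\vecTheta$ and $\vecXbar$, $\mu_{2}=\Expectation{\matrixPhi(\vecXbar)}^{\tr}\MuTheta{}=\matrixPhibar^{\tr}\MuTheta{}$. Second, the cross-covariance $\Sigma_{12}=\Expectation{(\phi-\phiBar)(\vecYbar-\matrixPhibar^{\tr}\MuTheta{})^{\tr}}$. Only the first summand contributes: the middle term vanishes after conditioning on $\vecXbar$, since $\Expectation{\vecTheta-\MuTheta{}}=0$, and the noise term vanishes by independence. This gives $\Sigma_{12}=\SigmaPhi\mu_{\Theta}$. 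Third, $\Sigma_{22}$ is the sum of the three covariances:
\begin{itemize}
\item the first summand gives $\mu_{\Theta}^{\tr}\SigmaPhi\mu_{\Theta}$;
\item the noise gives $\SigmaVbar{}$;
\item the middle term has $(t,t')$ entry $\Expectation{\phi^{\tr}(\vecX_{t})(\vecTheta-\MuTheta{})(\vecTheta-\MuTheta{})^{\tr}\phi(\vecX_{t'})}$.
\end{itemize}
Conditioning on $\vecXbar$, the middle-term entry becomes $\Expectation{\phi^{\tr}(\vecX_{t})\SigmaTheta{}\phi(\vecX_{t'})}=\innerS{\SigmaTheta{}}{\Expectation{\phi(\vecX_{t})\phi^{\tr}(\vecX_{t'})}}=\innerS{\SigmaTheta{}}{\TimedSigmaPhi{^{tt'}}+\MuPhi^{t}\MuPhi^{t'^{\tr}}}$, which is exactly $\operatorParam^{tt'}(\SigmaTheta{})$ in~\eqref{eq:MMSE_Basis_Covariance_Operator}. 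The cross terms among the three summands vanish by the same zero-mean and independence arguments.

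Substituting these moments into~\eqref{eq:Affine_optimal_coefficients} yields $\PsiPhi^{\star}$ and $\psiPhi^{\star}=\phiBar-\PsiPhi^{\star}\matrixPhibar^{\tr}\MuTheta{}$. Substituting them into~\eqref{eq:Affine_MMSE_closed_form} yields $\hatSigmaPhi{\mathrm{af}}$ and $\Cost^{\star}_{\phi}=\trace(\hatSigmaPhi{\mathrm{af}})$. Reshaping $\hat{\phi}_{\mathrm{af}}$ into blocks of size $\Ntheta+1$ gives the columns $\hat{\phi}^{\,t}_{\mathrm{af}}$ of $\hat{\matrixPhi}_{\mathrm{af}}$.

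Two side points need attention. Invertibility of $\Sigma_{22}$ follows if $\SigmaVbar{}\succ0$, because the other two terms are positive semidefinite covariances. If the basis is complex-valued, transposes should be read as Hermitian transposes, or the real and imaginary parts stacked; I would note this rather than redo the argument.

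The main obstacle is the middle-term covariance computation. It requires careful use of the tower property over $\vecXbar$ and of the trace identity. It also requires checking that the decomposition's cross terms vanish, for example $\Expectation{(\phi-\phiBar)(\vecTheta-\MuTheta{})^{\tr}\matrixPhi(\vecXbar)}=0$, which holds because $\vecTheta$ is independent of $\vecXbar$. Everything else is bookkeeping with the block-diagonal $\mu_{\Theta}$.
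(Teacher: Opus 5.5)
Your proposal is correct and follows the paper's proof. Both apply Lemma~\ref{lemma:Generic_affine_MMSE_estimator} with $\xi_{1}=\phi$ and $\xi_{2}=\vecYbar$ through the lifted observation model, compute $\mu_{\vecYbar}=\matrixPhibar^{\tr}\MuTheta{}$, $\Sigma_{\phi\vecYbar}=\SigmaPhi\mu_{\Theta}$ and $\Sigma_{\vecYbar}$, and get invertibility from $\SigmaVbar{}\succ 0$. The only difference is bookkeeping: you keep $\phi$ whole in the fluctuation term $\matrixPhi^{\tr}(\vecXbar)(\vecTheta-\MuTheta{})$ and read off $\operatorParam(\SigmaTheta{})$ from the second moment $\Expectation{\phi(\vecX_{t})\phi^{\tr}(\vecX_{t'})}$, whereas the paper fully expands $(\mu_{\Theta}+\Delta\matrixG_{\theta})^{\tr}(\phiBar+\deltaPhi)$ into separate terms $\matrixP$, $\matrixQ$, $\matrixS$ and must show $\matrixS=0$.
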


\begin{remark}[\textbf{Observation covariance forms}] \label{remark:Observations_Covariance_Equivalence}
The Bayesian affine MMSE estimators for the parameters in~\eqref{eq:opt_Bayes_Parameter} and for the basis collection in~\eqref{eq:opt_Bayes_Basis} both rely on the observation covariance
\begin{equation} \label{eq:Equivalent_Observation_Covariance}
\Sigma_{\vecYbar}
= \matrixPhibar^{\tr} \SigmaTheta{}\matrixPhibar + \operatorDBS(\SigmaPhi) + \SigmaVbar{}
= \mu_{\Theta}^{\tr} \SigmaPhi \mu_{\Theta} + \operatorParam(\SigmaTheta{}) + \SigmaVbar{},
\end{equation}
where $\operatorDBS(\SigmaPhi)$ and $\operatorParam(\SigmaTheta{})$ are defined in~\eqref{eq:Covariance_Operator} and~\eqref{eq:MMSE_Basis_Covariance_Operator}, respectively. The two expressions follow from the lifted observation model written either as $\vecYbar = \matrixPhi^{\tr}\!\!(\vecXbar)\vecTheta + \vecVbar$ in~\cite{vakili2025optimal} or $\vecYbar = \matrixG_{\theta}^{\tr} \phi + \vecVbar$ in~\eqref{eq:Equivalent_Lifted_ObservationModel}, where $\phi = \vectorize(\matrixPhi(\vecXbar))$ and $\matrixG_{\theta} = \diag(\timedTheta{0}, \ldots, \timedTheta{\Trajectory})$ with with $\timedTheta{t} = \vecTheta$ for all $t \in \{ 0, \ldots, \Trajectory \}$ (see also Appendix~\ref{proof:MMSE_Affine_Basis_Estimator_proof}). The former highlights the dependence on DBS and is used for affine parameter estimation, whereas the latter highlights the dependence on the parameter prior and is used for affine estimation of DBS.
\end{remark}

For the proof of Lemma~\ref{lemma:MMSE_Affine_Basis_Estimator}, see Appendix~\ref{proof:MMSE_Affine_Basis_Estimator_proof}. In the context of Example~\ref{ex:1d_simple}, the DBS estimation admits an explicit expression via Lemma~\ref{lemma:MMSE_Affine_Basis_Estimator}. In this setting, because the basis function is linear, basis evaluations and states coincide. Hence, the DBS estimates in~\eqref{eq:BasisCollection_Affine_Estimation} for a single measurement is essentially an optimal affine MMSE state estimator.

\begin{tcolorbox}[colback=darkblue2!7, colframe=darkblue2!35, boxrule=0.5pt, arc=2pt, breakable]
\paragraph{\textbf{Example~\ref{ex:1d_simple} (continued)}}
Given the linear basis function $\phi(\vecX) = \vecX$ in~\eqref{eq:linear_measurement}, the DBS estimator described in~\eqref{eq:BasisCollection_Affine_Estimation} is equal to the optimal affine MMSE state estimator
\begin{equation}
\label{eq:affine_basis_1D}
\begin{aligned}
\hat{\matrixPhi}_{\mathrm{af}}(\vecYbar ; \MuTheta{}, \sigmaTheta{}^{2}) & = \MuX{} + \dfrac{\sigma_{\vecX}^{2} \MuTheta{}}{\MuTheta{}^{2} \sigma_{\vecX}^{2} + \sigmaTheta{}^{2} \sigma_{\vecX}^{2} + \sigmaTheta{}^{2} \MuX{}^{2} + \sigmaV{}^{2}}\, (\vecY - \MuX{} \MuTheta{}) \fracComma \\
\hatSigmaPhi{\mathrm{af}}\!(\MuTheta{}, \sigmaTheta{}^{2}) & = \sigma_{\vecX}^{2} - \dfrac{\sigma_{\vecX}^{4} \MuTheta{}^{2}}{\MuTheta{}^{2} \sigma_{\vecX}^{2} + \sigmaTheta{}^{2} \sigma_{\vecX}^{2} + \sigmaTheta{}^{2} \MuX{}^{2} + \sigmaV{}^{2}} \fracDot \\
\end{aligned}
\end{equation}
Observe that the operators in~\eqref{eq:Covariance_Operator} and~\eqref{eq:MMSE_Basis_Covariance_Operator} evaluate to
$$
\operatorDBS(\sigma_{\vecX}^{2}) = \sigma_{\vecX}^{2} (\sigmaTheta{}^{2} + \MuTheta{}^{2}), \qquad \operatorParam(\sigmaTheta{}^{2}) = \sigmaTheta{}^{2}(\sigma_{\vecX}^{2} + \MuX{}^{2}),
$$
and that the denominators in~\eqref{eq:affine_param_1D} and~\eqref{eq:affine_basis_1D} are equal, representing the observation covariance for a scalar measurement.
\end{tcolorbox}

The DBS estimates in~\eqref{eq:BasisCollection_Affine_Estimation} depend on the prior parameter statistics $(\MuTheta{}, \SigmaTheta{})$. Leveraging the fact that these estimates would improve if we had perfect knowledge of $\vecTheta$, we apply a plug-in interpretation analogous to Remark~\ref{remark:Plugin_interpretation} by treating the parameter estimates $\big(\hat{\vecTheta}(\vecYbar), \hatSigmaTheta{}(\vecYbar)\big)$ as fixed prior moments,
\begin{equation} \label{eq:DBS_Nonlinear_Estimation}
\begin{cases}
\hat{\matrixPhi}_{\mathrm{nl}}(\vecYbar) = \hat{\matrixPhi}_{\mathrm{af}}(\vecYbar ; \hat{\vecTheta}(\vecYbar), \hatSigmaTheta{}(\vecYbar)), \\[0.2em]
\hatSigmaPhi{\mathrm{nl}}\!(\vecYbar) = \hatSigmaPhi{\mathrm{af}}\!(\hat{\vecTheta}(\vecYbar), \hatSigmaTheta{}(\vecYbar)).
\end{cases}
\end{equation}
The mutual substitution of~\eqref{eq:DBS_Nonlinear_Estimation} and~\eqref{eq:Nonlinear_Parameter_Estimation} creates a circular dependency: the DBS estimates require the parameter estimates, which in turn require the DBS estimates. This interdependency yields a nonlinear estimator, the main message of this study, characterized as a solution (a.k.a. fixed-point) to the following algebraic equations.
 
\begin{tcolorbox}[colback=gray!10, colframe=gray!50, boxrule=0.5pt, arc=2pt, breakable]
\textbf{Dual basis-parameter estimator:} For any measurements vector~$\vecYbar$, there exists a tuple of basis-parameter estimators $\big(\hat{\vecTheta}_{\mathrm{nl}}^{\star}, \hatSigmaTheta{\mathrm{nl}}^{\star}, \hat{\matrixPhi}_{\mathrm{nl}}^{\star}, \hatSigmaPhi{\mathrm{nl}}^{\star}\big)$ such that
\begin{equation*}
\label{eq:Dual_basis_parameter_estimator} \tag{DB-P}
\begin{cases}
\hat{\vecTheta}_{\mathrm{nl}}^{\star} = \hat{\vecTheta}_{\mathrm{af}}(\vecYbar ; \hat{\matrixPhi}_{\mathrm{nl}}^{\star}, \hatSigmaPhi{\mathrm{nl}}^{\star}), \\[0.2em]
\hatSigmaTheta{\mathrm{nl}}^{\star} = \hatSigmaTheta{\mathrm{af}}\!(\hat{\matrixPhi}_{\mathrm{nl}}^{\star}, \hatSigmaPhi{\mathrm{nl}}^{\star}),
\end{cases}
\quad
\begin{cases}
\hat{\matrixPhi}_{\mathrm{nl}}^{\star} = \hat{\matrixPhi}_{\mathrm{af}}(\vecYbar ; \hat{\vecTheta}_{\mathrm{nl}}^{\star}, \hatSigmaTheta{\mathrm{nl}}^{\star}), \\[0.2em]
\hatSigmaPhi{\mathrm{nl}}^{\star} = \hatSigmaPhi{\mathrm{af}}\!(\hat{\vecTheta}_{\mathrm{nl}}^{\star}, \hatSigmaTheta{\mathrm{nl}}^{\star}),
\end{cases}
\end{equation*}
where the functions~$\big(\hat{\vecTheta}_{\mathrm{af}}(\cdot),\hatSigmaTheta{\mathrm{af}}\!(\cdot)\big)$ and $\big(\hat{\matrixPhi}_{\mathrm{af}}(\cdot),\hatSigmaPhi{\mathrm{af}}\!(\cdot)\big)$ are the closed form solutions of the affine Bayesian MMSE defined in \eqref{eq:Parameters_Affine_Estimation} and \eqref{eq:BasisCollection_Affine_Estimation}, respectively.
\end{tcolorbox}

The dual basis-parameter estimation framework thus comprises two affine MMSE estimators, each relying on the estimates of the other. Using the closed-form expressions of the affine MMSE parameter and DBS estimates in~\eqref{eq:affine_param_1D} and~\eqref{eq:affine_basis_1D} for Example~\ref{ex:1d_simple}, we illustrate the dual basis-parameter~\eqref{eq:Dual_basis_parameter_estimator} estimator on the same data and compare it with the affine and empirical MMSE estimators.

\begin{tcolorbox}[colback=darkblue2!7, colframe=darkblue2!35, boxrule=0.5pt, arc=2pt, breakable]
\paragraph{\textbf{Example~\ref{ex:1d_simple} (continued)}} 
The dual basis-parameter~\eqref{eq:Dual_basis_parameter_estimator} estimator for $\vecTheta$ given a single measurement, constructed using the affine MMSE parameter and basis estimators in~\eqref{eq:affine_param_1D} and~\eqref{eq:affine_basis_1D}, is shown in green in Figure~\ref{Fig:Various_Estimators}. The algorithm used to construct the dual basis-parameter estimator (green curve) is described in Section~\ref{subsec:Algorithm_Description}. As can be seen, the resulting nonlinearity closely follows the one obtained from the empirical MMSE estimator (red curve).

\medskip
\begin{center}
\includegraphics[width=0.45\textwidth]{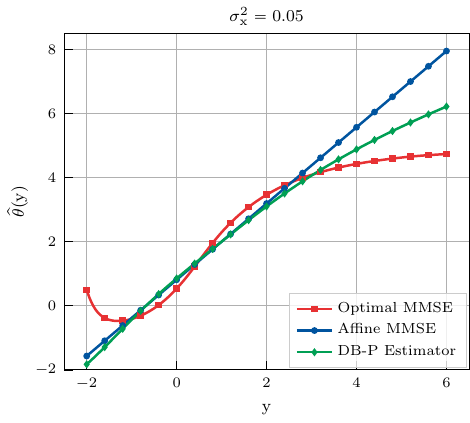}
\quad
\includegraphics[width=0.4585\textwidth]{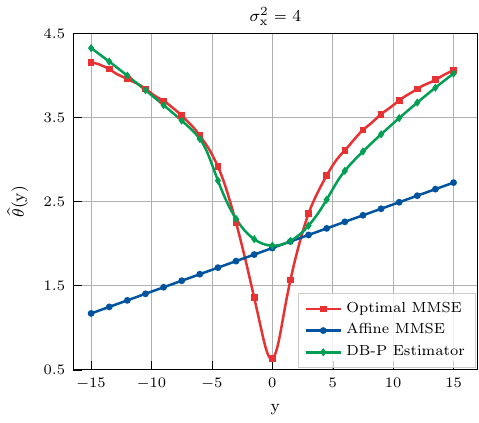}
\vspace{-0.5\baselineskip}
\captionof{figure}{\ref{eq:Dual_basis_parameter_estimator},
optimal MMSE~\eqref{eq:Exact_Bayesian_MMSE},
and affine~\eqref{eq:Parameters_Affine_Estimation} estimators.}
\label{Fig:Various_Estimators}
\end{center}
\end{tcolorbox}

In the above example, the basis function was linear, so estimating basis-function evaluations is equivalent to estimating the latent state. However, for nonlinear basis functions, this equivalence no longer holds, and the dual basis-parameter estimator can become both statistically and computationally inefficient. These limitations motivate an alternative, state-based DBS construction, in which the latent states are estimated directly, and the basis representation is treated as a derived quantity. 
In particular, the following remark provides a more detailed discussion of the main limitations of the dual basis-parameter estimation approach.

\begin{remark}[\textbf{Limitations of the dual basis-parameter estimator}] \label{remark:Limitations_dual_basis_parameter}
Two notable limitations of the dual basis-parameter estimator are
\begin{enumerate}[label=(\roman*), itemsep = 2mm, topsep = 1mm, leftmargin = 7.5mm]
\item \textbf{Elliptical-distribution approximation:} 
The affine MMSE estimators rely only on mean and covariance statistics and are exact when the joint distributions are Gaussian or, more generally, elliptical. In non-Gaussian settings, the performance of such estimators is limited to what one can achieve by approximating the true prior and posterior distributions of the unknown quantities by elliptical models, which may not hold in practice. In particular, the distribution of the stacked basis-function vector $\phi = \vectorize(\matrixPhi(\vecXbar))$ is generally unknown (e.g., when using Fourier basis functions), making such elliptical approximations potentially poor.

\item \textbf{High-dimensional basis representation:} 
In practice, the number of unknown basis functions, $(\Ntheta+1)(\Trajectory+1)$, typically exceeds the dimensionality of the state sequence $\vecXbar$, which is $(\nX)(\Trajectory+1)$. This high-dimensional representation increases both the statistical and computational complexity of the estimation problem.
\end{enumerate}
\end{remark}

Given these limitations, an alternative approach exploits the system dynamics in~\eqref{eq:Dynamic_Observation_Models} by first estimating the latent states and then computing DBS from these estimates statistics. This state-based DBS approach, developed in the next subsection, estimates the system states directly rather than the basis-function evaluations.
\subsection{Dual state-parameter estimation} \label{subsec:Dual_state_parameter_estimation}

In contrast to the previous approach, which directly estimates the basis collection as in~\eqref{eq:BasisCollection_Affine_Estimation}, this approach constructs DBS estimates from the state estimate sequence $\hat{\vecXbar}(\vecYbar) = [\hat{\vecX}_0^{\tr}(\vecYbar), \ldots, \hat{\vecX}_{\Trajectory}^{\tr}(\vecYbar)]^{\tr}\!$ and its corresponding covariance $\hatSigmaXbar{}(\vecYbar)$. This approach is particularly effective when the process noise is Gaussian, addressing a key limitation of the previous method discussed in Remark~\ref{remark:Limitations_dual_basis_parameter}.

\begin{assumption}[\textbf{Gaussian process noise}] \label{assumption:Gaussian_process_noise}
Throughout this subsection, the initial state and the process noise are drawn from a Gaussian distribution, i.e., $\vecX_{0} \sim \NormalDist(\MuX{0}, \SigmaX{0})$ and $\vecW_{t+1} \sim \NormalDist(\zero, \SigmaW{t+1})$.
\end{assumption}

Under this assumption, the state trajectory admits a Gaussian prior distribution. We adopt the \emph{lifted} representation of the process model~\eqref{eq:Dynamic_Observation_Models} as
\begin{equation} \label{eq:Lifted_Dynamic}
\vecXbar = \Abar (\Bbar \vecUbar + \vecWbar),
\end{equation}
where the input sequence $\vecUbar = [\MuX{0}^{\tr}, \vecU_0^{\tr}, \ldots, \vecU_{\Trajectory-1}^{\tr}]^{\tr}\!$ starts with the initial state mean, and the process noise sequence $\vecWbar = [\vecW_{0}^{\tr}, \vecW_1^{\tr}, \ldots, \vecW_{\Trajectory}^{\tr}]^{\tr}\!$ includes the initial state uncertainty, with $\vecW_{0} \sim \Pdist(\zero, \SigmaX{0})$. The lifted matrices $\Abar$ and $\Bbar$ have the following lower-triangular and block-diagonal structure, respectively:
\begin{equation} \label{eq:matrixAandB}
\Abar = \begin{bmatrix} \eye & \zeromx & \zeromx & \ldots & \zeromx \\ 
\A_0 & \eye & \zeromx & \ldots & \vdots \\ 
\A_1 \A_0 & \A_1 & \eye & \ddots & \vdots \\ 
\vdots & \vdots &  \vdots & \ddots & \zeromx \\ 
\prodOp_{i=0}^{\Trajectory-1} \A_{i} & \prodOp_{i=1}^{\Trajectory-1} \A_{i} &  \ldots & \A_{\Trajectory-1} & \eye
\end{bmatrix}, \qquad 
\Bbar = \, \diag(\eye, \B_{0}, \ldots, \B_{\Trajectory-1}),
\end{equation}
where $\prodOp_{i=t}^{t'} \A_{i} = \A_{t'}\ldots\A_{t}$. The process noise vector $\vecWbar \sim \NormalDist(\zero, \SigmaWbar{})$ follows a block-diagonal covariance structure under temporal independence, $\SigmaWbar{} = \diag(\SigmaX{0}, \SigmaW{1}, \ldots, \SigmaW{\Trajectory})$. Temporal correlations may also be incorporated if the corresponding covariance matrix is known. With the lifted representation~\eqref{eq:Lifted_Dynamic}, the state trajectory prior becomes $\vecXbar \sim \NormalDist(\Abar\Bbar\vecUbar, \Abar\SigmaWbar{}\Abar^{\!\tr}\!)$, which is Gaussian and enables a systematic computation of the DBS. Since a Gaussian distribution is fully characterized by its mean and covariance, the DBS computation in Definition~\ref{definition:Dynamic_Basis_Statistics_Collection} can be specialized to an operator requiring only these moments rather than the full distribution or higher moments. The following Gaussian DBS operator formalizes this specialization.

\begin{definition}[\textbf{Gaussian DBS operator}]
\label{definition:Gaussian_DBS_operator}
For a Gaussian state trajectory $\nu \sim \NormalDist(\mu_{\nu}, \Sigma_{\nu})$, the DBS can be computed from only the mean and covariance as
\begin{equation} \label{eq:Gaussian_DBS_operator}
(\matrixPhibarG, \SigmaPhiG) = \operatorGaussianDBS(\mu_{\nu}, \Sigma_{\nu}), 
\end{equation}
where $\operatorGaussianDBS \colon \Real^{\nX(\Trajectory+1)} \times \Real^{(\nX(\Trajectory+1))^{2}} \!\! \to \! \Real^{(\Ntheta+1)(\Trajectory+1)} \times \Real^{((\Ntheta+1)(\Trajectory+1))^{2}}$ maps the first two moments of the state trajectory to the basis statistics as in Definition~\ref{definition:Dynamic_Basis_Statistics_Collection}.
\end{definition}

Given the linearity of the basis function in Example~\ref{ex:1d_simple}, the Gaussian DBS operator reduces to the identity map. To highlight its nontrivial behaviour, we now consider another one-dimensional dynamical system in which the output function~\eqref{eq:Function_Space} contains a single sinusoidal basis function. In this setting, the Gaussian DBS operator transforms the mean and covariance of a single state into the exact statistics of the corresponding basis-function evaluation, illustrated in the following example.

\begin{tcolorbox}[colback=darkblue2!7, colframe=darkblue2!35, boxrule=0.5pt, arc=2pt, breakable]
\begin{examplepar}[(1-D Wiener model with sinusoidal basis function)] \label{ex:1D_nonlinear}
Consider the following sinusoidal nonlinearity in the measurement model:
\begin{equation}
\label{eq:1d_sin}
\vecY = \vecTheta \sin(f \vecX) + \vecV.
\end{equation}
where $\vecX \in \Real$ is a scalar random variable representing the initial state, with $\vecX \sim \Pdist(\MuX{}, \sigma_{\vecX}^{2})$, and $f$ is a known frequency. The Gaussian DBS operator $\operatorGaussianDBS(\cdot, \cdot)$ in Definition~\ref{definition:Gaussian_DBS_operator} maps the statistics of $\vecX$ to the following DBS, evaluated according to~\cite[Lem.~4.1, Ex.~1]{vakili2025optimal}:
\begin{equation}
\label{eq:GaussianDBS_Sin}
\begin{aligned}
\matrixPhibar & = \expS{-\dfrac{1}{2}f^{2} \sigma_{\vecX}^{2}}\sin(f \MuX{}), \\
\SigmaPhi & = \dfrac{1}{2} - \dfrac{1}{2}\expS{-2f^{2} \sigma_{\vecX}^{2}} \cos(2f\MuX{}) - \expS{-f^{2} \sigma_{\vecX}^{2}} \sin^{2}(f \MuX{}).
\end{aligned}
\end{equation}
\end{examplepar}
\end{tcolorbox}

For the lifted prior in~\eqref{eq:Lifted_Dynamic}, the Gaussian DBS operator provides an equivalent computation of the basis statistics from Definition~\ref{definition:Dynamic_Basis_Statistics_Collection} via $(\matrixPhibar, \SigmaPhi) = \operatorGaussianDBS(\Abar\Bbar\vecUbar, \Abar\SigmaWbar{}\Abar^{\!\tr}\!)$. Once state estimates $\hat{\vecXbar}(\vecYbar)$ and estimation-error covariance $\hatSigmaXbar{}(\vecYbar)$ are obtained, we apply a plug-in interpretation analogous to Remark~\ref{remark:Plugin_interpretation}: the estimated mean and covariance are treated as fixed values of a Gaussian prior distribution for the subsequent DBS computation, i.e., $\vecXbar \sim \NormalDist(\hat{\vecXbar}(\vecYbar), \hatSigmaXbar{}(\vecYbar))$, analogous to an empirical Bayes procedure applied to the state trajectory. This procedure naturally leads to data-driven DBS estimates via the Gaussian DBS operator,
\begin{equation} \label{eq:state_estimate_DBS}
\big(\hat{\matrixPhi}(\vecYbar), \hatSigmaPhi{}(\vecYbar)\big) = \operatorGaussianDBS(\hat{\vecXbar}(\vecYbar), \hatSigmaXbar{}(\vecYbar)).
\end{equation}
The affine MMSE state estimator is a natural choice for providing $\hat{\vecXbar}(\vecYbar)$ and $\hatSigmaXbar{}(\vecYbar)$, as it computes the posterior mean and covariance under a Gaussian approximation of the state trajectory posterior. The derivation of the affine MMSE state estimator as a direct consequence of Lemma~\ref{lemma:Generic_affine_MMSE_estimator} requires the cross-covariance $\Sigma_{\vecXbar\vecYbar}$ between the state trajectory and observation sequence. Under Assumption~\ref{assumption:Gaussian_process_noise}, this cross-covariance can be explicitly derived using Stein's identity~\cite{Stein1981}.

\begin{remark}[\textbf{Cross-covariance via Stein's identity}]
For a random vector $\nu \sim \NormalDist(\mu, \Sigma)$ and a differentiable function $\trueFunc$ with appropriate integrability conditions, Stein's identity states
\begin{equation} \label{eq:Stein_identity}
\Expectation{(\nu-\mu)\trueFunc(\nu)} = \Sigma \Expectation{\gradient_{\!\nu} \trueFunc(\nu)}.
\end{equation}
While this study applies Stein's identity under Gaussian noise, the identity extends to elliptical distributions~\cite{landsman2008stein}, enabling broader applicability to process noise models. Additionally, extensions to other distributions such as Poisson and $t$-distributions exist~\cite{liu1994siegel}, suggesting potential avenues for non-elliptical process noise distributions.
\end{remark}

The following lemma leverages this identity to derive a closed-form affine MMSE state estimator, which requires computing the mean of the Jacobian matrix of the basis-function evaluations with respect to the state distribution.

\begin{lemma} [\textbf{Affine MMSE state estimator}]
\label{lemma:MMSE_Affine_State_Estimator}
Under Assumption~\ref{assumption:Gaussian_process_noise}, suppose that for each basis function $\phi_{n}(\cdot)$ defined in~\eqref{eq:Function_Space}, the gradient vector $\gradient_{\!\vecX_{t}} \phi_{n}(\vecX_{t})$, for $t \in \{0, \ldots, \Trajectory\}$, has components that are continuous almost everywhere and satisfy the bounded expectation condition
\begin{equation} \label{eq:Bounded_expectations_conditions}
\gradient_{\!\vecX_{t}} \phi_{n}(\vecX_{t}) = \begin{bmatrix} \dfrac{\partial \phi_{n}(\vecX_{t})}{\partial \vecX_{t, 1}} \fracComma \cdots \fracComma \dfrac{\partial \phi_{n}(\vecX_{t})}{\partial \vecX_{t, \nX}} \end{bmatrix}^{\tr}\!\! \fracComma \quad
\BigExpectation{\big\lvert \dfrac{\partial \phi_{n}(\vecX_{t})}{\partial \vecX_{t, i}} \big\rvert} < \infty, \quad n = 0,\ldots,\Ntheta, \quad i = 1,\ldots,\nX,
\end{equation}
where $\vecX_{t, i}$ denotes the $i^{\nth}$ element of the state vector $\vecX_{t} \in \Real^{\nX}$. Then, the affine MMSE estimator for the state trajectory~\eqref{eq:Lifted_Dynamic}, its estimation-error covariance, and the optimal MSE, all as functions of $(\MuTheta{},\SigmaTheta{})$, are given by
\begin{subequations}\label{eq:opt_Bayes_States}
\begin{equation} \label{eq:MMSE_StateEstimation}
\begin{cases}
\hat{\vecXbar}_{\mathrm{af}}(\vecYbar ; \MuTheta{}, \SigmaTheta{}) =  \PsiX^{\star}(\MuTheta{},\SigmaTheta{})\vecYbar+\psiX^{\star}(\MuTheta{},\SigmaTheta{}),\\[0.2em]
\hatSigmaXbar{\mathrm{af}}\!(\MuTheta{}, \SigmaTheta{}) =  \Abar\SigmaWbar{}\Abar^{\!\tr}\! - \Abar\SigmaWbar{}\Abar^{\!\tr}\! \Cbar^{\tr}\!\mu_{\Theta}(\mu_{\Theta}^{\tr} \SigmaPhi \mu_{\Theta} + \operatorParam(\SigmaTheta{}) + \SigmaVbar{})^{\inv}\! \mu_{\Theta}^{\tr}\Cbar \Abar\SigmaWbar{}\Abar^{\!\tr}\!,\\[0.2em]
\Cost^{\star}_{\vecXbar}(\MuTheta{}, \SigmaTheta{}) = \trace(\hatSigmaXbar{\mathrm{af}}\!(\MuTheta{}, \SigmaTheta{})),
\end{cases}
\end{equation}
with the optimal coefficient matrices
\begin{equation} \label{eq:MMSE_State_LinearEstimator}
\PsiX^{\star}(\MuTheta{},\SigmaTheta{})=\Abar\SigmaWbar{}\Abar^{\!\tr}\! \Cbar^{\tr}\! \mu_{\Theta} (\mu_{\Theta}^{\tr} \SigmaPhi \mu_{\Theta} + \operatorParam(\SigmaTheta{}) + \SigmaVbar{})^{\inv}\!, \quad \psiX^{\star}(\MuTheta{},\SigmaTheta{}) = \Abar\Bbar\vecUbar - \PsiX^{\star}(\MuTheta{},\SigmaTheta{})\matrixPhibar^{\tr} \MuTheta{}, 
\end{equation}
where $(\matrixPhibar, \SigmaPhi) = \operatorGaussianDBS(\Abar\Bbar\vecUbar, \Abar\SigmaWbar{}\Abar^{\!\tr}\!)$ is the DBS for the Gaussian state trajectory prior from Definition~\ref{definition:Gaussian_DBS_operator}, $\mu_{\Theta} = \diag(\timedMuTheta{0}, \ldots, \timedMuTheta{\Trajectory})$ with $\timedMuTheta{t} = \MuTheta{}$ for $t \in \{0, \ldots, \Trajectory\}$, the linear operator $\operatorParam(\cdot)$ is defined in~\eqref{eq:MMSE_Basis_Covariance_Operator}, and
\begin{equation} \label{eq:Cbar_matrix}
\Cbar = \, \diag(\matrixC_{0}, \ldots, \matrixC_{\Trajectory}), \qquad \matrixC_{t} = \Expectation{\Jacobian_{\phi}(\vecX_{t})},
\end{equation}
with $\Jacobian_{\phi}(\vecX_{t}) = \big[ \gradient_{\!\vecX_{t}} \phi_{0}(\vecX_{t}), \ldots,\, \gradient_{\!\vecX_{t}} \phi_{\Ntheta}(\vecX_{t}) \big]^{\tr}\!$ being the Jacobian matrix of basis functions evaluated at $\vecX_{t}$.
\end{subequations}
\end{lemma}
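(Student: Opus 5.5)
We apply Lemma~\ref{lemma:Generic_affine_MMSE_estimator} with $\xi_{1}=\vecXbar$ and $\xi_{2}=\vecYbar$. Four moments are needed: $\mu_{1}$, $\mu_{2}$, $\Sigma_{22}=\Sigma_{\vecYbar}$ and $\Sigma_{12}=\Sigma_{\vecXbar\vecYbar}$. The lifted dynamics~\eqref{eq:Lifted_Dynamic} together with Assumption~\ref{assumption:Gaussian_process_noise} give $\mu_{1}=\Abar\Bbar\vecUbar$ and $\Sigma_{11}=\Abar\SigmaWbar{}\Abar^{\!\tr}$. For the observation moments we use the lifted form $\vecYbar=\matrixG_{\theta}^{\tr}\phi+\vecVbar$ from Remark~\ref{remark:Observations_Covariance_Equivalence}, with $\phi=\vectorize(\matrixPhi(\vecXbar))$ and $(\matrixPhibar,\SigmaPhi)=\operatorGaussianDBS(\Abar\Bbar\vecUbar,\Abar\SigmaWbar{}\Abar^{\!\tr})$. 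Since $\vecTheta$, $\vecXbar$ and $\vecVbar$ are mutually independent and $\vecVbar$ has zero mean, $\mu_{2}=\mu_{\Theta}^{\tr}\phiBar=\matrixPhibar^{\tr}\MuTheta{}$. The covariance $\Sigma_{22}$ is the second expression in~\eqref{eq:Equivalent_Observation_Covariance}, which is already established through Lemma~\ref{lemma:MMSE_Affine_Basis_Estimator}; we reuse it without rederiving it.

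The central step is the cross-covariance. Write $\vecY_{t}=\innerS{\phi(\vecX_{t})}{\vecTheta}+\vecV_{t}$. Independence of $\vecTheta$ from $\vecXbar$, and of $\vecVbar$ from both, gives
\begin{equation*}
\Expectation{(\vecXbar-\mu_{1})\vecY_{t}}=\Expectation{(\vecXbar-\mu_{1})\phi^{\tr}(\vecX_{t})}\,\MuTheta{}.
\end{equation*}
Now apply Stein's identity~\eqref{eq:Stein_identity} to the Gaussian vector $\nu=\vecXbar\sim\NormalDist(\Abar\Bbar\vecUbar,\Abar\SigmaWbar{}\Abar^{\!\tr})$, separately for each scalar function $h(\vecXbar)=\phi_{n}(\vecX_{t})$. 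The gradient of this function with respect to the full trajectory is zero except in the block belonging to $\vecX_{t}$, where it equals $\gradient_{\!\vecX_{t}}\phi_{n}(\vecX_{t})$. Stacking over $n$ gives
\begin{equation*}
\Expectation{(\vecXbar-\mu_{1})\phi^{\tr}(\vecX_{t})}=\Abar\SigmaWbar{}\Abar^{\!\tr}E_{t}\,\matrixC_{t}^{\tr},
\end{equation*}
where $E_{t}$ is the block selector of time $t$. Stacking over $t$ then yields
\begin{equation*}
\Sigma_{\vecXbar\vecYbar}=\Abar\SigmaWbar{}\Abar^{\!\tr}\Cbar^{\tr}\mu_{\Theta},
\end{equation*}
because $\Cbar^{\tr}\mu_{\Theta}$ is the block-diagonal matrix with diagonal blocks $\matrixC_{t}^{\tr}\MuTheta{}$.

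The main obstacle is justifying Stein's identity under the weak hypotheses assumed: gradients only continuous almost everywhere, with finite first absolute moments, as in~\eqref{eq:Bounded_expectations_conditions}. We plan to reduce to the standard one-dimensional Stein lemma. Whiten $\vecXbar$ as $\mu+L z$ with $z$ standard normal, using a factorization $\Abar\SigmaWbar{}\Abar^{\!\tr}=LL^{\tr}$ (possibly rank-deficient, which does not matter because we only need $L$, not its inverse). Then apply Gaussian integration by parts coordinatewise in $z$ together with the chain rule; Fubini is justified by the integrability condition. If some $\phi_{n}$ are complex-valued, as with Fourier bases, we apply the argument separately to the real and imaginary parts.

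Finally, we substitute these moments into~\eqref{eq:Affine_optimal_coefficients}. This gives $\PsiX^{\star}=\Sigma_{\vecXbar\vecYbar}\Sigma_{\vecYbar}^{\inv}$ and $\psiX^{\star}=\Abar\Bbar\vecUbar-\PsiX^{\star}\matrixPhibar^{\tr}\MuTheta{}$, and the error covariance $\Sigma_{11}-\Sigma_{12}\Sigma_{22}^{\inv}\Sigma_{21}$ from~\eqref{eq:Affine_MMSE_closed_form} matches~\eqref{eq:MMSE_StateEstimation}. The trace expression for $\Cost^{\star}_{\vecXbar}$ follows immediately from that lemma. Invertibility of $\Sigma_{\vecYbar}$ holds whenever $\SigmaVbar{}\succ0$, since the other two terms in $\Sigma_{\vecYbar}$ are positive semidefinite.
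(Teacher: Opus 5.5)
Your proposal is correct and follows essentially the same route as the paper. Both arguments apply Lemma~\ref{lemma:Generic_affine_MMSE_estimator} with $\xi_1=\vecXbar$ and $\xi_2=\vecYbar$, and reuse $\mu_{\vecYbar}$ and $\Sigma_{\vecYbar}$ from the proof of Lemma~\ref{lemma:MMSE_Affine_Basis_Estimator}. Both then reduce the cross-covariance by independence to $\Expectation{(\vecXbar-\Abar\Bbar\vecUbar)\MuTheta{}^{\tr}\matrixPhi(\vecXbar)}$ and evaluate it with Stein's identity after writing the trajectory as an affine image of a standard Gaussian, giving $\Sigma_{\vecXbar\vecYbar}=\Abar\SigmaWbar{}\Abar^{\!\tr}\Cbar^{\tr}\mu_{\Theta}$. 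The only difference is bookkeeping: you apply Stein once to the full trajectory with a block selector $E_t$, whereas the paper computes each $(t,t')$ block via $\vecX_{t}=\Abar_{t}(\Bbar\vecUbar+\matrixSqrt{\SigmaWbar{}}\nu)$ and the chain rule, which is the same computation.
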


A detailed proof is presented in Appendix~\ref{proof:MMSE_Affine_State_Estimator_proof}. The explicit computation and tractability of the affine MMSE state estimator depend on the choice of basis functions, since the closed-form expression~\eqref{eq:MMSE_StateEstimation} requires both the DBS $(\matrixPhibar, \SigmaPhi)$ and the expected Jacobian of the basis functions for constructing the matrix $\Cbar$ in~\eqref{eq:Cbar_matrix}. The sinusoidal basis of Example~\ref{ex:1D_nonlinear} satisfies the bounded expectation condition in~\eqref{eq:Bounded_expectations_conditions}, and the corresponding matrix $\matrixC$ admits a closed-form expression, as discussed below.

\begin{tcolorbox}[colback=darkblue2!7, colframe=darkblue2!35, boxrule=0.5pt, arc=2pt, breakable]
\paragraph{\textbf{Example~\ref{ex:1D_nonlinear} (continued)}}
In the context of~\eqref{eq:1d_sin}, the matrix $\matrixC$ is given by
\begin{equation}
\label{eq:matrixC_Sin}
\matrixC = \Expectation{\dfrac{\partial \phi(\vecX)}{\partial \vecX}} = \Expectation{f\cos(f\vecX)} = f \expS{-\dfrac{1}{2}f^{2} \sigma_{\vecX}^{2}}\cos(f \MuX{}),
\end{equation}
where $\Jacobian_{\phi}(\vecX)$ in~\eqref{eq:Cbar_matrix} reduces to a scalar derivative since $\vecX$ is a scalar variable. Given the explicit expressions for the DBS quantities in~\eqref{eq:GaussianDBS_Sin} and for $\matrixC$ in~\eqref{eq:matrixC_Sin}, the affine MMSE state estimator~\eqref{eq:MMSE_StateEstimation} yields
\begin{equation}
\label{eq:affine_state_1D}
\begin{aligned}
\hat{\vecXbar}_{\mathrm{af}}(\vecYbar ; \MuTheta{}, \SigmaTheta{}) & = \MuX{} + \dfrac{\sigma_{\vecX}^{2} \matrixC \MuTheta{}}{\MuTheta{}^{2} \SigmaPhi + \sigmaTheta{}^{2} \SigmaPhi + \sigmaTheta{}^{2} \matrixPhibar^{2} + \sigmaV{}^{2}}(\vecY - \matrixPhibar\MuTheta{}) \fracComma \\
\hatSigmaXbar{\mathrm{af}}\!(\MuTheta{}, \SigmaTheta{}) & = \sigma_{\vecX}^{2} - \dfrac{\sigma_{\vecX}^{4} \matrixC^{2} \MuTheta{}^{2}}{\MuTheta{}^{2} \SigmaPhi + \sigmaTheta{}^{2} \SigmaPhi + \sigmaTheta{}^{2} \matrixPhibar^{2} + \sigmaV{}^{2}} \fracDot
\end{aligned}
\end{equation}
Furthermore, the affine MMSE parameter and basis estimators for the setting of~\eqref{eq:1d_sin} follow their descriptions in Example~\ref{ex:1d_simple}, given in~\eqref{eq:affine_param_1D} and~\eqref{eq:affine_basis_1D}, respectively, after replacing the pair $(\MuX{}, \sigma_{\vecX}^{2})$ in those expressions with the DBS quantities obtained in~\eqref{eq:GaussianDBS_Sin}.
\end{tcolorbox}

In the more general Fourier-basis setting, both the DBS, $(\matrixPhibar, \SigmaPhi)$, and the matrix $\Cbar$ admit explicit expressions via the characteristic function of the Gaussian distribution~\cite[Lem.~4.1,~Ex.~1]{vakili2025optimal}, relying on the lifted representation~\eqref{eq:Lifted_Dynamic}. The following remark shows that the affine MMSE state estimates preserve this lifted structure, enabling the same explicit formulas to be applied in the plug-in construction~\eqref{eq:state_estimate_DBS}.

\begin{remark}[\textbf{Lifted plug-in prior for DBS estimates}]
\label{remark:Lifted_Plugin_Prior_DBS}
An equivalent representation of the affine MMSE state estimator in Lemma~\ref{lemma:MMSE_Affine_State_Estimator} is
$$
\hat{\vecXbar}_{\mathrm{af}}(\vecYbar ; \MuTheta{}, \SigmaTheta{}) = \Abar(\Bbar\vecUbar+\zeta), \qquad 
\hatSigmaXbar{\mathrm{af}}\!(\MuTheta{}, \SigmaTheta{}) = \Abar(\SigmaWbar{}-Z)\Abar^{\!\tr}\!,
$$
with measurement-driven corrections
$$
\begin{cases}
\zeta = \SigmaWbar{}\Abar^{\!\tr}\! \Cbar^{\tr}\! \mu_{\Theta} (\mu_{\Theta}^{\tr} \SigmaPhi \mu_{\Theta} + \operatorParam(\SigmaTheta{}) + \SigmaVbar{})^{\inv}\! (\vecYbar- \matrixPhibar^{\tr} \MuTheta{}),\\[0.2em]
Z = \SigmaWbar{}\Abar^{\!\tr}\! \Cbar^{\tr}\!\mu_{\Theta}(\mu_{\Theta}^{\tr} \SigmaPhi \mu_{\Theta} + \operatorParam(\SigmaTheta{}) + \SigmaVbar{})^{\inv}\! \mu_{\Theta}^{\tr}\Cbar \Abar\SigmaWbar{}.
\end{cases}
$$
Thus, the plug-in empirical Bayes Gaussian prior on $\vecXbar$ induced by the affine estimator can be written in lifted form as $\vecXbar \sim \NormalDist(\Abar (\Bbar \vecUbar + \zeta),\Abar(\SigmaWbar{}-Z)\Abar^{\!\tr}\!)$. For Fourier basis functions, this structure results in closed-form DBS estimates in~\eqref{eq:state_estimate_DBS} via the explicit characteristic-function formulas of~\cite[Ex.~1]{vakili2025optimal}.
\end{remark}

The affine state estimator in~\eqref{eq:MMSE_StateEstimation} is parameterized by the prior statistics $(\MuTheta{}, \SigmaTheta{})$. Since knowledge of $\vecTheta$ would improve these estimates, we adopt the empirical Bayes perspective of Remark~\ref{remark:Plugin_interpretation} and construct a nonlinear state estimator by inserting the parameter estimates $\big(\hat{\vecTheta}(\vecYbar), \hatSigmaTheta{}(\vecYbar)\big)$ as if they were the true prior moments,
\begin{equation} \label{eq:State_Nonlinear_Estimation}
\begin{cases}
\hat{\vecXbar}_{\mathrm{nl}}(\vecYbar) = \hat{\vecXbar}_{\mathrm{af}}(\vecYbar ; \hat{\vecTheta}(\vecYbar), \hatSigmaTheta{}(\vecYbar)), \\[0.2em]
\hatSigmaXbar{\mathrm{nl}}\!(\vecYbar) = \hatSigmaXbar{\mathrm{af}}\!(\hat{\vecTheta}(\vecYbar), \hatSigmaTheta{}(\vecYbar)).
\end{cases}
\end{equation}
Using~\eqref{eq:State_Nonlinear_Estimation} within the DBS estimates construction~\eqref{eq:state_estimate_DBS} for the nonlinear parameter estimator~\eqref{eq:Nonlinear_Parameter_Estimation}, and simultaneously feeding the resulting parameter estimates back into~\eqref{eq:State_Nonlinear_Estimation}, induces a circular dependence: the state estimates require the parameter estimates, while the parameter estimates depend on DBS computed from those same state estimates. This coupling gives rise to a nonlinear dual state-parameter estimator, defined as a solution of the resulting system of algebraic equations.

\begin{tcolorbox}[colback=gray!10, colframe=gray!50, boxrule=0.5pt, arc=2pt, breakable]
\textbf{Dual state-parameter estimator:} For any measurements vector~$\vecYbar$, there exists a tuple of state-parameter estimators $\big(\hat{\vecTheta}_{\mathrm{nl}}^{\star}, \hatSigmaTheta{\mathrm{nl}}^{\star}, \hat{\vecXbar}_{\mathrm{nl}}^{\star}, \hatSigmaXbar{\mathrm{nl}}^{\star}\big)$ such that
\begin{equation*} \label{eq:Dual_state_parameter_estimator}\tag{DS-P}
\begin{cases}
\hat{\vecTheta}_{\mathrm{nl}}^{\star} = \hat{\vecTheta}_{\mathrm{af}}\big(\vecYbar ; \operatorGaussianDBS(\hat{\vecXbar}_{\mathrm{nl}}^{\star}, \hatSigmaXbar{\mathrm{nl}}^{\star})\big), \\[0.2em]
\hatSigmaTheta{\mathrm{nl}}^{\star} = \hatSigmaTheta{\mathrm{af}}\!\big(\operatorGaussianDBS(\hat{\vecXbar}_{\mathrm{nl}}^{\star}, \hatSigmaXbar{\mathrm{nl}}^{\star})\big),
\end{cases}
\quad
\begin{cases}
\hat{\vecXbar}_{\mathrm{nl}}^{\star} = \hat{\vecXbar}_{\mathrm{af}}\big(\vecYbar ; \hat{\vecTheta}_{\mathrm{nl}}^{\star}, \hatSigmaTheta{\mathrm{nl}}^{\star}\big) \\[0.2em]
\hatSigmaXbar{\mathrm{nl}}^{\star} = \hatSigmaXbar{\mathrm{af}}\!\big(\hat{\vecTheta}_{\mathrm{nl}}^{\star}, \hatSigmaTheta{\mathrm{nl}}^{\star}\big)
\end{cases}
\end{equation*}
where $\operatorGaussianDBS(\hat{\vecXbar}_{\mathrm{nl}}^{\star}, \hatSigmaXbar{\mathrm{nl}}^{\star})$ returns the DBS estimates $\big(\hat{\matrixPhi}(\vecYbar), \hatSigmaPhi{}(\vecYbar)\big)$ according to~\eqref{eq:state_estimate_DBS} and Definition~\ref{definition:Gaussian_DBS_operator}, and the functions~$\big(\hat{\vecTheta}_{\mathrm{af}}(\cdot),\hatSigmaTheta{\mathrm{af}}\!(\cdot)\big)$ and $\big(\hat{\vecXbar}_{\mathrm{af}}(\cdot),\hatSigmaXbar{\mathrm{af}}\!(\cdot)\big)$ are the closed form solutions of the affine Bayesian MMSE defined in \eqref{eq:Parameters_Affine_Estimation} and \eqref{eq:MMSE_StateEstimation}, respectively.
\end{tcolorbox}

The dual state-parameter~\eqref{eq:Dual_state_parameter_estimator} estimation framework likewise couples two affine MMSE estimators, one for the parameters and one for the states, each depending on the estimates produced by the other. Although this approach addresses the limitations of \ref{eq:Dual_basis_parameter_estimator} estimator raised in Remark~\ref{remark:Limitations_dual_basis_parameter}, the asymptotic performance of the parameter estimates with infinitely many samples depends critically on the properties of the underlying dynamical system~\cite[Prop.~4.2]{vakili2025optimal}.

\begin{remark}[\textbf{Asymptotic performance}] \label{remark:asymptotic_performance}
While the state-estimation step in~\ref{eq:Dual_state_parameter_estimator} estimator reduces the effective process-noise uncertainty, consistency of the nonlinear parameter estimator requires that the state-estimation error remain suitably bounded over the trajectory, playing a role analogous to the multiple-trajectory condition in~\cite[Prop.~4.3]{vakili2025optimal}. Characterizing conditions under which the state-estimation error is bounded and the nonlinear parameter estimator is consistent is an interesting direction for future research. Nevertheless, for finite data, the numerical experiments in Section~\ref{sec:Numerical_Experiments} demonstrate that the proposed dual state-parameter estimator can substantially outperform existing affine and nonlinear alternatives when Fourier bases are employed.
\end{remark}

The mutual dependence between the estimator characterizations in~\ref{eq:Dual_basis_parameter_estimator} and~\ref{eq:Dual_state_parameter_estimator} naturally suggests viewing the nonlinear parameter estimator architecture (cf.~Figure~\ref{Fig:Nonlinear_Parameter_Estimator_Architecture}) as a fixed point of suitable mappings. This perspective motivates computing the estimator via an iterative fixed-point scheme, in which the coupled estimators are applied repeatedly until convergence. From this viewpoint, each iteration can be interpreted as propagating information between the DBS and parameter estimator blocks, gradually reconciling their respective posterior summaries. Once the iterative scheme has converged, the resulting parameter estimates generally exhibit a highly nonlinear dependence on the measurement value. Whether the proposed estimators produce estimates close to the optimal affine MMSE parameter estimator or deviate significantly from it depends sensitively on the measurement region and on the prior distribution of the latent system states. To illustrate this nonlinear behaviour in a concrete setting, we now revisit Example~\ref{ex:1D_nonlinear} and examine the~\ref{eq:Dual_basis_parameter_estimator} and~\ref{eq:Dual_state_parameter_estimator} estimators for different measurement values, while the discussion of the algorithmic aspects of the fixed-point characterization is deferred to the next subsection.

\begin{tcolorbox}[colback=darkblue2!7, colframe=darkblue2!35, boxrule=0.5pt, arc=2pt, breakable]
\paragraph{\textbf{Example~\ref{ex:1D_nonlinear} (continued)}}
For the parameter-estimation simulations, consider the observation model~\eqref{eq:1d_sin} with sinusoidal frequency $f = \frac{\pi}{6}$, a scalar latent state $\vecX \sim \NormalDist(\MuX{}, \sigma_{\vecX}^{2})$ (interpreted as the initial state of a dynamical system) with $\MuX{} = 0.5$ and two choices of variance, $\sigma_{\vecX}^{2} = 0.05$ (left plot) and $\sigma_{\vecX}^{2} = 4$ (right plot), an unknown scalar parameter $\vecTheta \in \Real$ with prior $\vecTheta \sim \UniformDist(-1, 5)$, and measurement noise $\vecV \sim \NormalDist(0, \sigmaV{}^{2})$ with $\sigmaV{}^{2} = 0.16$. Figure~\ref{Fig:Various_Estimators_Example2} shows the estimates of the unknown parameter $\vecTheta$ as a function of the measurement value, i.e., $\hat{\vecTheta}(\vecY)$, for four estimators: the \ref{eq:Dual_state_parameter_estimator} estimator (orange), the \ref{eq:Dual_basis_parameter_estimator} estimator (green), the empirical MMSE estimator computed via MCMC methods (red), and the affine MMSE estimator obtained from~\eqref{eq:Parameters_Affine_Estimation} (blue). As anticipated from the fixed-point perspective, the dual estimators exhibit notable nonlinearities relative to the affine MMSE estimator. The algorithms used to construct these dual estimators are described in the next subsection.

\medskip
\begin{center}
\includegraphics[width=0.452\textwidth]{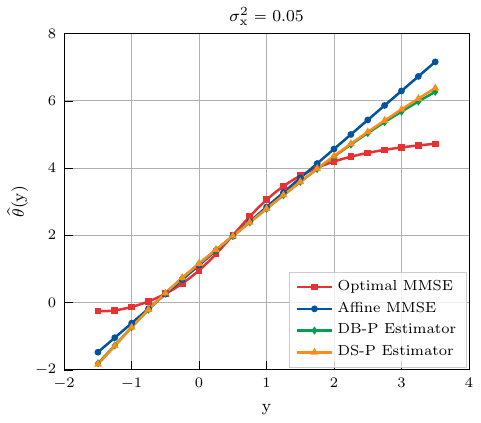}
\quad
\includegraphics[width=0.45\textwidth]{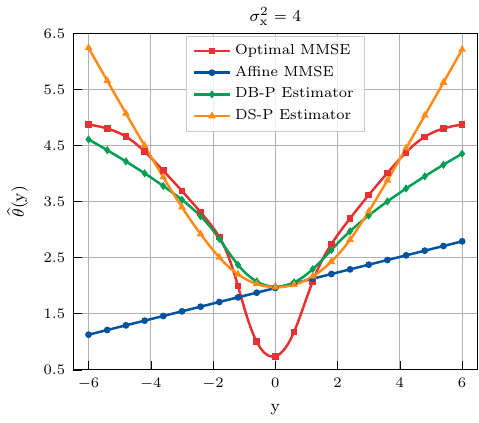}
\vspace{-0.5\baselineskip}
\captionof{figure}{\ref{eq:Dual_state_parameter_estimator}, \ref{eq:Dual_basis_parameter_estimator}, optimal MMSE~\eqref{eq:Exact_Bayesian_MMSE},
and affine MMSE~\eqref{eq:Parameters_Affine_Estimation} estimators
}
\label{Fig:Various_Estimators_Example2}
\end{center}
\end{tcolorbox}
\subsection{Fixed-point algorithm}
\label{subsec:Algorithm_Description}
The dual basis-parameter~\eqref{eq:Dual_basis_parameter_estimator} and dual state-parameter~\eqref{eq:Dual_state_parameter_estimator} estimators can be equivalently characterized by a fixed-point solution $\zeta^{\star}$ of an operator $\mathfrak{F}$, satisfying $\zeta^{\star} = \mathfrak{F}(\zeta^{\star})$. We collect the estimates and their covariances into the tuple: $\zeta \!=\! \big(\hat{\vecTheta}_{\mathrm{nl}}, \hatSigmaTheta{\mathrm{nl}}, \hat{\matrixPhi}_{\mathrm{nl}}, \hatSigmaPhi{\mathrm{nl}}\big)$ for~\ref{eq:Dual_basis_parameter_estimator} and $\zeta \!=\!  \big(\hat{\vecTheta}_{\mathrm{nl}}, \hatSigmaTheta{\mathrm{nl}}, \hat{\vecXbar}_{\mathrm{nl}}, \hatSigmaXbar{\mathrm{nl}}\big)$ for~\ref{eq:Dual_state_parameter_estimator}. For a given measurement sequence $\vecYbar$, the corresponding fixed-point operators are defined as
$$
\begin{cases}
\cmdb{\textbf{\ref{eq:Dual_basis_parameter_estimator}}:} \quad & \mathfrak{F}(\zeta) \coloneqq \big(
\hat{\vecTheta}_{\mathrm{af}}(\vecYbar ; \hat{\matrixPhi}_{\mathrm{nl}}, \hatSigmaPhi{\mathrm{nl}}),\, 
\hatSigmaTheta{\mathrm{af}}\!(\hat{\matrixPhi}_{\mathrm{nl}}, \hatSigmaPhi{\mathrm{nl}}),\, 
\hat{\matrixPhi}_{\mathrm{af}}(\vecYbar ; \hat{\vecTheta}_{\mathrm{nl}}, \hatSigmaTheta{\mathrm{nl}}),\, 
\hatSigmaPhi{\mathrm{af}}\!(\hat{\vecTheta}_{\mathrm{nl}}, \hatSigmaTheta{\mathrm{nl}})
\big), \\[0.2em]
\cmdb{\textbf{\ref{eq:Dual_state_parameter_estimator}}\mkern3.2mu:} \quad & \mathfrak{F}(\zeta) \coloneqq \big(
 \hat{\vecTheta}_{\mathrm{af}}\big(\vecYbar ; \operatorGaussianDBS(\hat{\vecXbar}_{\mathrm{nl}}, \hatSigmaXbar{\mathrm{nl}})\big),\, 
\hatSigmaTheta{\mathrm{af}}\!\big(\operatorGaussianDBS(\hat{\vecXbar}_{\mathrm{nl}}, \hatSigmaXbar{\mathrm{nl}})\big),\, 
\hat{\vecXbar}_{\mathrm{af}}(\vecYbar ; \hat{\vecTheta}_{\mathrm{nl}}, \hatSigmaTheta{\mathrm{nl}}),\, 
\hatSigmaXbar{\mathrm{af}}\!(\hat{\vecTheta}_{\mathrm{nl}}, \hatSigmaTheta{\mathrm{nl}})
\big),
\end{cases}
$$
respectively. A natural way to compute such a fixed point is via the simultaneous (Jacobi-style) iteration $\zeta^{(k+1)} = \mathfrak{F}(\zeta^{(k)})$, whose pseudo-code for both variants is summarized in Algorithm~\ref{algo:Dual_Estimators}. To improve the stability of the iterations, it is sometimes recommended to use an averaged update with $0 < \alpha \leq 1$, at the cost of a slower convergence rate,
\begin{equation}
\label{eq:averaged_update}
\zeta^{(k+1)} = \zeta^{(k)} + \alpha \big( \mathfrak{F}(\zeta^{(k)}) - \zeta^{(k)} \big),
\end{equation}
where $\alpha = 1$ corresponds to the Jacobi-style iteration, used in Algorithm~\ref{algo:Dual_Estimators}. Alternating updates (Gauss–Seidel-style), where the components of $\zeta$ are updated sequentially rather than simultaneously, are also common for coupled fixed-point problems and may improve convergence~\cite[Ch.~7]{Burden2010}.

\begin{remark}[\textbf{Convergence condition}]
\label{remark:Convergence_Conditions}

The convergence of the sequence of iterates $\{\zeta^{(k)}\}_{k\geq 0}$ in~\eqref{eq:averaged_update} depends on the properties of the operator $\mathfrak{F}$. A sufficient condition is that $\mathfrak{F}$ is nonexpansive~\cite[Thm.~5.14]{BauschkeCombettes2017}. A rigorous analysis of the convergence of these iterations and the properties of $\mathfrak{F}$ for both dual estimators is left for future work; however, the empirical results provide encouraging evidence supporting this behaviour.
\end{remark}

Building on this fixed-point characterization, the iterative scheme in Algorithm~\ref{algo:Dual_Estimators} for computing the dual estimators initializes the means and covariances of the parameters and DBS-related quantities. At iteration $k=1$, the estimates in the tuple $\zeta^{(1)}$ are obtained from a single pass of the Bayesian MMSE affine estimators for the unknown parameters together with the DBS or system states, depending on the chosen variant. In the \ref{eq:Dual_basis_parameter_estimator} variant, the associated DBS is computed once from the prior of the system states $\vecXbar$ according to Definition~\ref{definition:Dynamic_Basis_Statistics_Collection}. In contrast, in the \ref{eq:Dual_state_parameter_estimator} variant, the DBS is recomputed at every iteration by applying the Gaussian DBS operator $\operatorGaussianDBS(\cdot, \cdot)$ to the current state mean and covariance estimates. In this variant, however, the matrix $\Cbar$ in~\eqref{eq:Cbar_matrix} is evaluated only once, since it depends solely on the prior of the state $\vecXbar$, which does not change across iterations. The stopping criterion can be chosen based on the maximum number of iterations or on the deviation between subsequent iterates. In our implementation, we use a threshold $\epsilon$ on the improvement in the parameter MSE cost $\Cost_{\vecTheta}(\cdot, \cdot)$ in~\eqref{eq:Parameters_Affine_Estimation} between two iterations as the exit condition, i.e., $\big| \Cost^{(k)}_{\vecTheta} - \Cost^{(k-1)}_{\vecTheta} \big| < \epsilon$. Both algorithm variants, \ref{eq:Dual_basis_parameter_estimator} and \ref{eq:Dual_state_parameter_estimator}, converged to a fixed-point solution in almost all extensive numerical experiments reported in Section~\ref{sec:Numerical_Experiments}, with the \ref{eq:Dual_state_parameter_estimator} variant exhibiting rapid convergence. Moreover, both variants have a per-iteration computational complexity of $\mathcal{O}(\Trajectory^{3})$ when $\Ntheta, \nX \ll \Trajectory$, dominated by the covariance matrix inversions in the affine MMSE updates.

\begin{algorithm}[htbp]
\caption{Dual estimator: basis-parameter (\cmdb{\textbf{\ref{eq:Dual_basis_parameter_estimator}}}) or state-parameter (\cmdb{\textbf{\ref{eq:Dual_state_parameter_estimator}}})}
\label{algo:Dual_Estimators}
\begin{algorithmic}[1]
\Variant Choose either \cmdb{\textbf{\ref{eq:Dual_basis_parameter_estimator}}} or \cmdb{\textbf{\ref{eq:Dual_state_parameter_estimator}}}

\Require Tolerance $\epsilon$, maximum iterations $K$ 

\Ensure $(\hat{\vecTheta}_{\mathrm{nl}}, \hatSigmaTheta{\mathrm{nl}})$ and \cmdb{\textbf{\ref{eq:Dual_basis_parameter_estimator}}:} $(\hat{\matrixPhi}_{\mathrm{nl}}, \hatSigmaPhi{\mathrm{nl}})$ or \cmdb{\textbf{\ref{eq:Dual_state_parameter_estimator}}:} $(\hat{\vecXbar}_{\mathrm{nl}}, \hatSigmaXbar{\mathrm{nl}})$ 

\vspace{0.2em}
\State $(\hat{\vecTheta}_{\mathrm{nl}}^{\mkern2mu (0)}, \hatSigmaTheta{\mathrm{nl}}^{(0)}) \gets (\MuTheta{}, \SigmaTheta{})$ 
\Comment{\small $\vecTheta$ prior information} 

\vspace{0.2em}
\State $\Cost^{(0)}_{\vecTheta} \gets 0$
\Comment{\small initial cost}

\vspace{0.2em}
\State $(\hat{\matrixPhi}_{\mathrm{nl}}^{(0)}, \hatSigmaPhi{\mathrm{nl}}^{(0)}) \gets (\matrixPhibar, \SigmaPhi)$ 
\Comment{\small DBS, Definition~\ref{definition:Dynamic_Basis_Statistics_Collection}} 

\vspace{0.2em}
\For{$k = 1, 2, \ldots, K$} 
    
    \vspace{0.2em}
    \State $(\hat{\vecTheta}_{\mathrm{nl}}^{\mkern2mu (k)}, \hatSigmaTheta{\mathrm{nl}}^{(k)}) \gets \big(\hat{\vecTheta}_{\mathrm{af}}(\vecYbar ; \hat{\matrixPhi}_{\mathrm{nl}}^{(k-1)}, \hatSigmaPhi{\mathrm{nl}}^{(k-1)}), \hatSigmaTheta{\mathrm{af}}\!(\hat{\matrixPhi}_{\mathrm{nl}}^{(k-1)}, \hatSigmaPhi{\mathrm{nl}}^{(k-1)})\big)$ 
    \Comment{\small affine MMSE,~\eqref{eq:Parameters_Affine_Estimation}} 
    
    \vspace{0.2em}
    \State $\Cost^{(k)}_{\vecTheta} \gets \Cost^{\star}_{\vecTheta}(\hat{\matrixPhi}_{\mathrm{nl}}^{(k-1)}, \hatSigmaPhi{\mathrm{nl}}^{(k-1)})$ 
    \Comment{\small associated cost}
    
    \vspace{0.2em}
    \State \cmdb{\textbf{\ref{eq:Dual_basis_parameter_estimator}}:} $(\hat{\matrixPhi}_{\mathrm{nl}}^{(k)}, \hatSigmaPhi{\mathrm{nl}}^{(k)}) \gets \big(\hat{\matrixPhi}_{\mathrm{af}}(\vecYbar ; \hat{\vecTheta}_{\mathrm{nl}}^{\mkern2mu (k-1)}, \hatSigmaTheta{\mathrm{nl}}^{(k-1)}), \hatSigmaPhi{\mathrm{af}}\!(\hat{\vecTheta}_{\mathrm{nl}}^{\mkern2mu (k-1)}, \hatSigmaTheta{\mathrm{nl}}^{(k-1)})\big)$ 
    \Comment{\small affine MMSE,~\eqref{eq:BasisCollection_Affine_Estimation}} 
    
    \vspace{0.2em}    
    \State \cmdb{\textbf{\ref{eq:Dual_state_parameter_estimator}}:} $(\hat{\vecXbar}_{\mathrm{nl}}^{(k)}, \hatSigmaXbar{\mathrm{nl}}^{(k)}) \gets \big(\hat{\vecXbar}_{\mathrm{af}}(\vecYbar ; \hat{\vecTheta}_{\mathrm{nl}}^{\mkern2mu (k-1)}, \hatSigmaTheta{\mathrm{nl}}^{(k-1)}), \hatSigmaXbar{\mathrm{af}}\!(\hat{\vecTheta}_{\mathrm{nl}}^{\mkern2mu (k-1)}, \hatSigmaTheta{\mathrm{nl}}^{(k-1)})\big)$ 
    \Comment{\small affine MMSE,~\eqref{eq:MMSE_StateEstimation}} 
    
    \vspace{0.2em}
    \State \cmdb{\textbf{\ref{eq:Dual_state_parameter_estimator}}:} $(\hat{\matrixPhi}_{\mathrm{nl}}^{(k)}, \hatSigmaPhi{\mathrm{nl}}^{(k)}) \gets \operatorGaussianDBS(\hat{\vecXbar}_{\mathrm{nl}}^{(k)}, \hatSigmaXbar{\mathrm{nl}}^{(k)})$ 
    \Comment{\small Gaussian DBS, Definition~\ref{definition:Gaussian_DBS_operator}} 
    
    \vspace{0.2em}    
    \If{$\big| \Cost^{(k)}_{\vecTheta} - \Cost^{(k-1)}_{\vecTheta} \big| < \epsilon$} 
    \Comment{\small stopping criterion}
        \State \textbf{break} 
    \EndIf

\EndFor
\end{algorithmic}
\end{algorithm}
\section{Numerical Experiments} \label{sec:Numerical_Experiments}
In this section, we evaluate the proposed nonlinear estimators, dual basis-parameter (\ref{eq:Dual_basis_parameter_estimator}) and dual state-parameter (\ref{eq:Dual_state_parameter_estimator}), against the affine MMSE parameter estimator (Theorem~\ref{theorem:MMSE_Affine_Parameter_Estimator}), hereafter called A-MMSE, across two setups: $(\Ntheta = 2, \nX = 10)$ with more states than parameters, and $(\nX = 2, \Ntheta = 10)$ with more parameters than states. We then benchmark the dual state-parameter estimator (\ref{eq:Dual_state_parameter_estimator}) against two established sampling-based methods: Particle Gibbs with Ancestor Sampling (PGAS)~\cite{lindsten2014particle} and SMC-EM~\cite{schon_system_2011}. Both methods perform SMC-based smoothing at each iteration to draw from the state posterior. SMC methods suffer from particle degeneracy, which necessitates an exponential growth in the number of particles with state dimension~\cite{bengtsson2008curse}. While modern methods such as Nested Sequential Monte Carlo (NSMC)~\cite{naesseth2015nested} can improve SMC scalability in high dimensions, we restrict the benchmark comparisons to the second setup $(\nX = 2, \Ntheta = 10)$ for two reasons: 
(i)~The first setup primarily illustrates a regime where the dual basis-parameter estimator underperforms the affine MMSE estimator, demonstrating the limitations of the elliptical-distribution approximation discussed in Remark~\ref{remark:Limitations_dual_basis_parameter}; 
(ii)~Adapting PGAS and SMC-EM to use NSMC-based smoothing would require substantial additional development and tuning, beyond the scope of a fair performance comparison. 
Focusing on the lower-dimensional state case $(\nX = 2, \Ntheta = 10)$ mitigates the impact of state dimensionality on SMC-based methods and enables a more meaningful and interpretable comparison. We use a marginally stable linear time-invariant (LTI) dynamical system ($\A_{t} = \eye$) with a true function in the Fourier subspace
\begin{equation} \label{eq:Fourier_Basis}
\begin{cases}
\phi_{0}(\vecX) = 1 & n = 0 \\
\phi_{n}(\vecX) = \binarySum{\ell\in\{-1,1\}}{\expS{j\innerS{\ell f_{n}}{\vecX}}} & n \geq 1,
\end{cases}
\end{equation}
where $f_{n} \in \Real^{\nX}$ represents a \emph{known} frequency. 
This Fourier basis choice enables explicit computation of the DBS and cross-covariance terms, and it allows us to examine how increasing state uncertainty affects estimator performance as the number of measurements grows. The experiments use time-invariant Gaussian noise: $\vecV_{t} \sim \NormalDist(\zero, \sigma^{2}_{\vecV}\eye)$, $\vecW_{t+1}\sim \NormalDist(\zero, \sigma^{2}_{\vecW}\eye)$, and $\vecX_0 \sim \NormalDist(\MuX{0}, \sigma^{2}_{\vecX_{0}}\eye)$, with $\sigma^{2}_{\vecX_{0}} = \sigma^{2}_{\vecW}$. To assess the impact of process noise, we vary $\sigma^{2}_{\vecW} \in \{0.001,\ 0.01\}$ while holding the measurement-noise variance fixed at $\sigmaV{}^{2} = 0.01$ across all experiments. The input trajectories are generated using the active-learning procedure from~\cite[Sec.~5]{vakili2025optimal}, conducted prior to the experiments and optimized for the affine MMSE estimator. These same inputs are used for all methods in the study to ensure a fair comparison. For reproducibility, a MATLAB library implementing the affine MMSE estimator with optimal input design, as well as the dual basis-parameter and dual state-parameter estimators, is available at \href{https://github.com/sasanvakili/nonlinearBayesian4Wiener}{https://github.com/sasanvakili/nonlinearBayesian4Wiener}.

For each configuration, we generate $100$ independent samples of $\vecTheta$ and $100$ independent samples of $(\vecWbar, \vecVbar)$ for each trajectory length $\Trajectory$, resulting in $10,\!000$ experiments in total. We evaluate estimators using the squared-error criterion $||\vecTheta - \hat{\vecTheta}(\vecYbar)||^{2}$ and ensure fair comparison by using identical realizations of $\vecWbar$ and $\vecVbar$ across all methods. In the provided plots, dashed lines show the empirically computed MSE, $\Expectation{||\vecTheta - \hat{\vecTheta}(\vecYbar)||^{2}}$, for each method. Non-histogram plots include shaded regions indicating the $15^{\nth}$-$85^{\nth}$ percentile range of the squared error. Histograms display the probability density of $\log||\vecTheta - \hat{\vecTheta}(\vecYbar)||^{2}$ on a logarithmic vertical scale to resolve differences in the squared-error distributions. All histograms within each plot use the same bin width, and each histogram integrates to unity.
\paragraph{\textbf{Experiment setup~1} ($\boldsymbol{\Ntheta=2, \nX=10}$)}
This setup considers an LTI system with $\nX = 10$ states and $\nU = 2$ control inputs. The system dynamics are $\A_{t} = \eye$ and $\B_{t} = \Delta t [\eye, \ldots, \eye]^{\tr}\!$, where $\Delta t = 0.1$ is the sampling interval and the identity matrices have dimensions conformable with $\nX$ and $\nU$. The initial state is drawn from a standard normal distribution, $\MuX{0} \sim \NormalDist(0, \eye)$ and the input sequence $\vecU_{t} = 4.5\!\sum_{\upsilon \in \Upsilon} [\cos(\upsilon t\Delta t),\ \sin(\upsilon t\Delta t)]^{\tr}\!$, where $\Upsilon \in \{3, 5, 10, 20, 100\}$, is used to initialize the projected steepest descent algorithm in the active-learning procedure of~\cite[Sec.~5]{vakili2025optimal}, with each component constrained to $[-200, 200]$. The optimal input $\vecUbar^{\star}$ is precomputed once and used for all methods evaluated under this configuration. The unknown parameters follow the Fourier basis representation in~\eqref{eq:Fourier_Basis}. We use three unknown parameters $\vecTheta_{n}$ ($n = 0, 1, 2$) with known frequency vectors $f_n \in \Real^{10}$. Specifically, $f_{0} = 0$ (the constant basis), and the nonzero frequencies are drawn independently as $f_{n} \sim \NormalDist(0, \eye)$ for $n \in \{1, 2\}$. The prior over the unknown parameters is uniform, $\vecTheta_{n} \sim \UniformDist(2, 8)$ for each $n$, implying prior moments $\MuTheta{n} = 5$ and $\sigmaTheta{n}^{2} = 3$. True parameter values are generated from this same prior.
\paragraph{\textbf{Benchmark~1.1} (\textbf{High-dimensional states})}
This benchmark uses Experiment setup~1 with trajectory length $\Trajectory = 100$ (i.e., $101$ measurements) and two process-noise variances $\sigma^{2}_{\vecW} \in \{0.001,\, 0.01\}$, with $10,\!000$ simulations per configuration. We compute parameter estimates $\hat{\vecTheta}(\vecYbar)$ from A-MMSE~\eqref{eq:Parameters_Affine_Estimation} and the dual estimators \ref{eq:Dual_basis_parameter_estimator} and \ref{eq:Dual_state_parameter_estimator}. Figure~\ref{Fig:basisParam_Error_Setup1} shows histograms of $\log \|\vecTheta - \hat{\vecTheta}(\vecYbar)\|^{2}$ over all simulations, with dashed vertical lines indicating the empirical MSE: orange for \ref{eq:Dual_basis_parameter_estimator}, blue for A-MMSE, and red for \ref{eq:Dual_state_parameter_estimator}. Although one might expect \ref{eq:Dual_basis_parameter_estimator} to perform best, given that only two unknown basis variables and a scalar observation $\vecY_{t}$ are involved at each time, this setup provides a counterexample. In both noise regimes, \ref{eq:Dual_basis_parameter_estimator} leads to a higher empirical MSE than A-MMSE. This underperformance is consistent with the limitations of the elliptical prior approximation used in the basis estimation (cf.~Remark~\ref{remark:Limitations_dual_basis_parameter}). With a scalar measurement at each time and a high state dimension ($\nX = 10$), the latent states remain substantially uncertain, limiting the benefit of dual state-parameter estimation relative to the affine approach. Nevertheless, \ref{eq:Dual_state_parameter_estimator} consistently achieves the lowest empirical MSE (red), demonstrating that joint estimation of latent states and parameters improves parameter inference on average when the latent state process is Gaussian (cf.~Assumption~\ref{assumption:Gaussian_process_noise}).
\begin{figure}[htbp]
\centering 
\includegraphics[width=0.4523\linewidth]{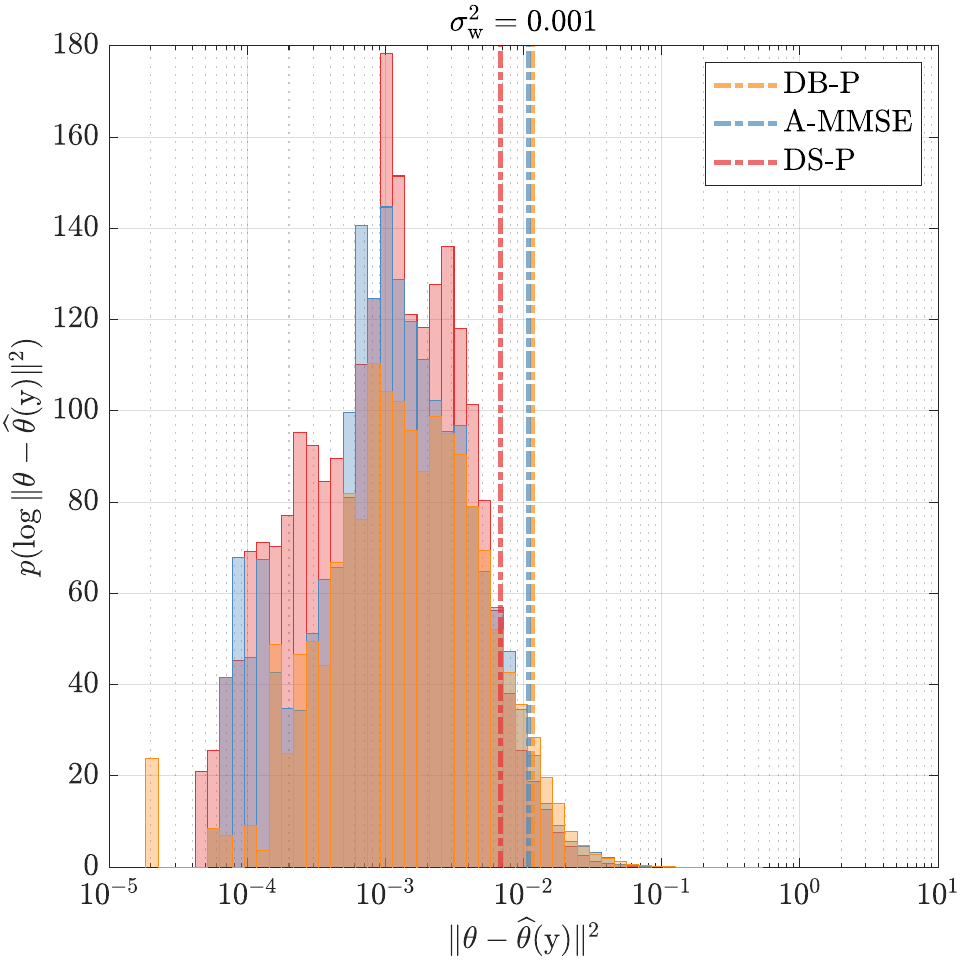} \qquad 
\includegraphics[width=0.4523\linewidth]{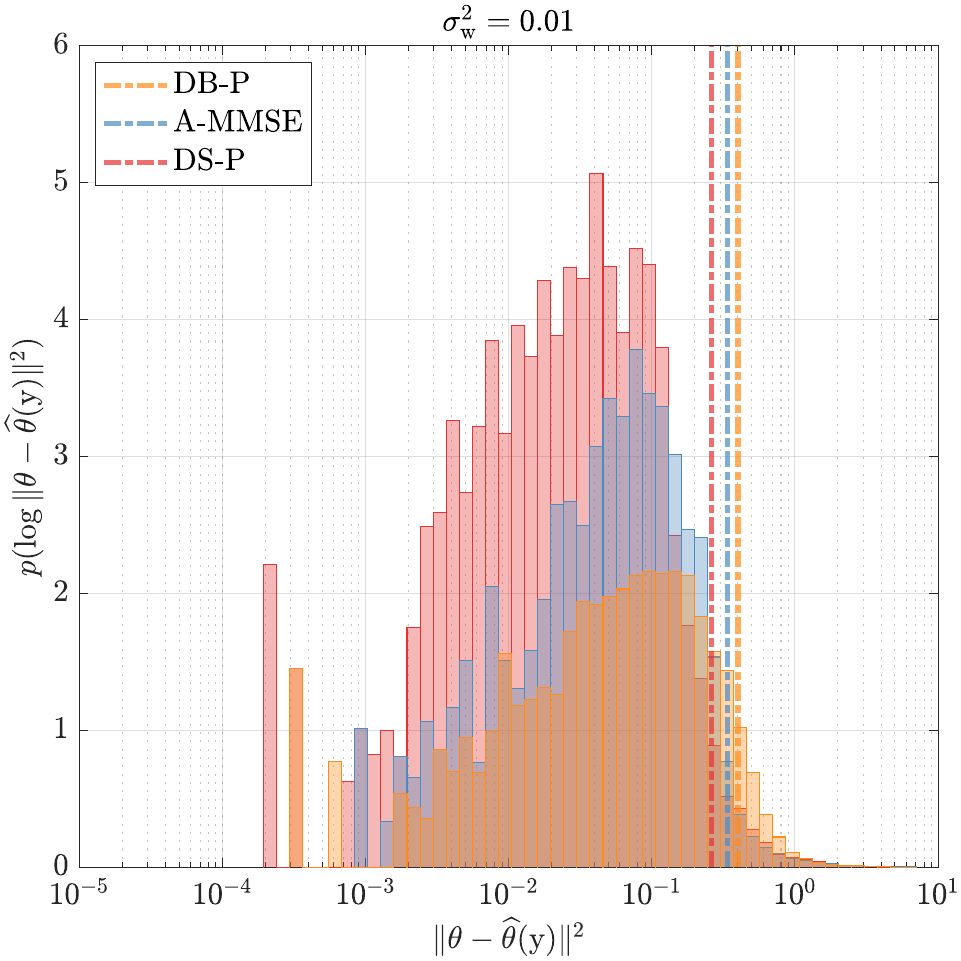}
\caption{Parameter estimation squared-error distributions for Experiment setup~1}
\label{Fig:basisParam_Error_Setup1}
\end{figure}
\paragraph{\bf Experiment setup~2 ($\boldsymbol{\Ntheta=10, \nX=2}$)}
This setup focuses on the complementary configuration of low-dimensional states and a larger number of parameters. The LTI system has two states, $\vecX_{t} \in \Real^{2}$, and shares the same structure as Experiment setup~1: $\vecU_{t} \in \Real^{2}$, $\A_{t} = \eye$, and $\B_{t} = \Delta t \eye$, with $\Delta t = 0.1$. The initial state is set to $\MuX{0} = [3.2, 2.8]^{\tr}\!$. The active-learning procedure uses the same input initialization structure $\vecU_{t} = 4.5\!\sum_{\upsilon \in \Upsilon} [\cos(\upsilon t\Delta t),\ \sin(\upsilon t\Delta t)]^{\tr}\!$, with $\Upsilon \in \{3, 5, 10, 20, 100\}$ and bounds $[-200, 200]$, and the resulting optimal trajectory $\vecUbar^{\star}$ is used for all methods in this setup. Under the Fourier basis representation in~\eqref{eq:Fourier_Basis}, we use eleven unknown parameters $\vecTheta_{n}$ ($n = 0, \ldots, 10$) with known frequency vectors $f_{n} \in \Real^{2}$ defined as: $f_{0} = [0, 0]^{\tr}\!$, $f_{n} = [n\frac{2\pi}{10}, 0]^{\tr}\!$ for $n \in \{1, 2, 3\}$, and $f_{n} = [(n-7)\frac{2\pi}{10},\, \frac{2\pi}{6}]^{\tr}\!$ for $n \in \{4, \ldots, 10\}$. The unknown parameters follow the prior $\UniformDist(2, 8)$, corresponding to $\MuTheta{n} = 5$ and $\sigmaTheta{n}^{2} = 3$, from which true parameter values are sampled.
\paragraph{\textbf{Benchmark~2.1} (\textbf{High-dimensional parameters})}
This benchmark uses Experiment setup~2 with trajectory length $\Trajectory=100$ (i.e., $101$ measurements) and two process-noise variances $\sigma^{2}_{\vecW} \in \{0.001,\, 0.01\}$, with $10,\!000$ simulations per configuration. Using identical realizations of $(\vecWbar, \vecVbar)$ across methods, we compute parameter estimates $\hat{\vecTheta}(\vecYbar)$ from A-MMSE~\eqref{eq:Parameters_Affine_Estimation} and the dual estimators \ref{eq:Dual_basis_parameter_estimator} and \ref{eq:Dual_state_parameter_estimator}. Figure~\ref{Fig:param_Error_distributions} shows histograms of $\log \|\vecTheta - \hat{\vecTheta}(\vecYbar)\|^{2}$ over all simulations, with dashed vertical lines indicating the empirical MSE for each estimator: orange for A-MMSE, blue for \ref{eq:Dual_basis_parameter_estimator}, and red for \ref{eq:Dual_state_parameter_estimator}. In contrast to Experiment setup~1, this configuration has only two latent states but ten unknown parameters, leading to a practically relevant identification problem with high-dimensional parameters and low-dimensional dynamics. In both noise regimes, A-MMSE (orange) exhibits a higher empirical MSE than both dual estimators. The low state dimension ($\nX = 2$) enables \ref{eq:Dual_state_parameter_estimator} to exploit latent-state dynamics more effectively than \ref{eq:Dual_basis_parameter_estimator}, achieving the lowest empirical MSE (red) and demonstrating that explicit state estimation can enhance parameter inference when Gaussian linear dynamics govern the latent states.
\begin{figure}[htbp]
\centering 
\includegraphics[width=0.4523\linewidth]{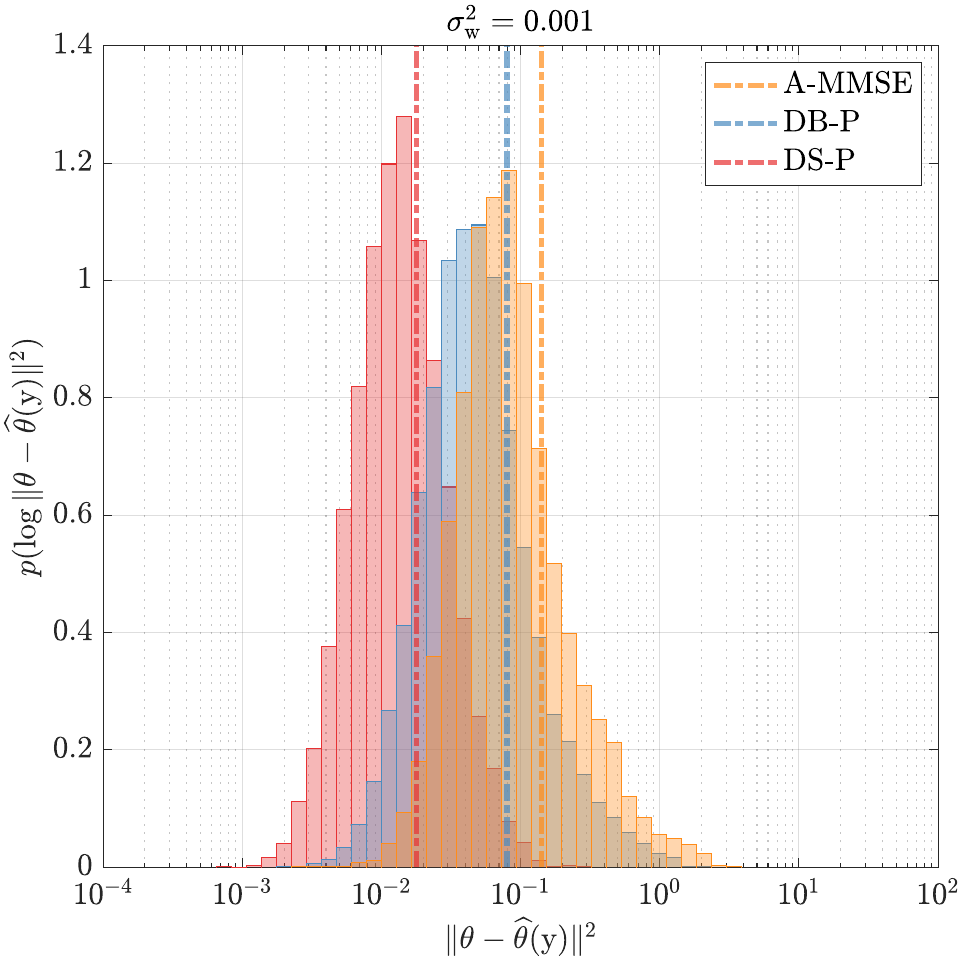} \qquad 
\includegraphics[width=0.4523\linewidth]{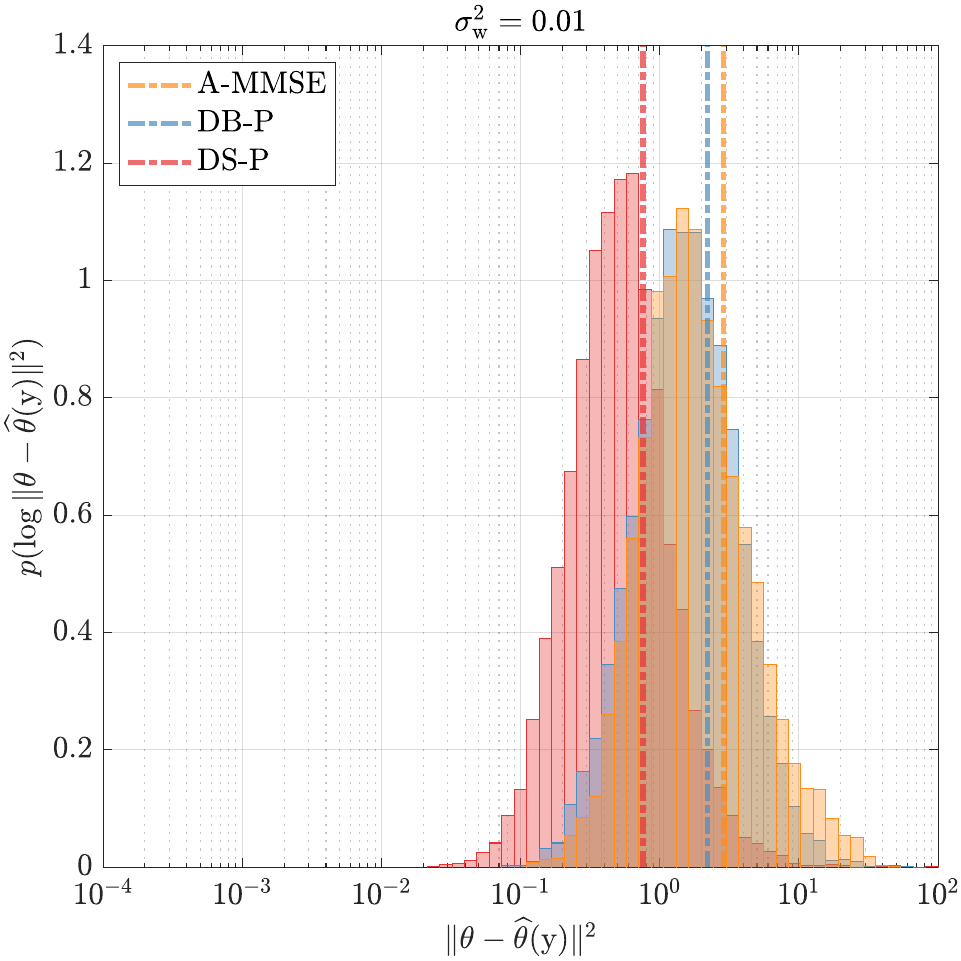}
\caption{Parameter estimation squared-error distributions for Experiment setup~2}
\label{Fig:param_Error_distributions}
\end{figure}
\paragraph{\textbf{Benchmark~2.2} (\textbf{Trajectory horizon})}
This experiment considers $10,\!000$ simulations with trajectories of varying lengths, $\Trajectory \in \{0,\, 4,\, 10,\, 13,\, 16,\, 20,\, 25,\, 32,\, 40,\, 50,\, 63,\, 79,\, 100\}$, to examine how parameter estimation error evolves as the number of measurements increases. Figure~\ref{Fig:param_Error_differentMeasurements} compares the squared errors for A-MMSE (orange), \ref{eq:Dual_basis_parameter_estimator} (blue), and \ref{eq:Dual_state_parameter_estimator} (red) under two process-noise variances $\sigma^{2}_{\vecW} \in \{0.001,\, 0.01\}$. Shaded regions show the $15^{\nth}$-$85^{\nth}$ percentile range of the squared error, while dashed lines indicate the empirical MSE. All estimators use an optimized input $\vecUbar^{\star}$ designed separately for each trajectory length. The results confirm that input optimization significantly reduces squared error for all three estimators when the number of measurements matches the number of unknown parameters. For both noise levels, the error reduction of A-MMSE and \ref{eq:Dual_basis_parameter_estimator} slows markedly as $\Trajectory$ increases and measurements become more strongly correlated. This behaviour aligns with the established limitations of A-MMSE for marginally stable systems~\cite[Prop.~4.2]{vakili2025optimal}, and the similar trend for \ref{eq:Dual_basis_parameter_estimator} suggests it may suffer from related limitations. In contrast, \ref{eq:Dual_state_parameter_estimator} exhibits a sustained downward trend in empirical MSE as $\Trajectory$ grows. In the low process-noise case ($\sigma^{2}_{\vecW} = 0.001$, left plot), the red dashed line decreases steadily with increasing $\Trajectory$. For higher process noise ($\sigma^{2}_{\vecW} = 0.01$, right plot), the red dashed line shows slight local increases between $\Trajectory = 32$ and $40$ and between $50$ and $63$, before decreasing again for larger $\Trajectory$. These small increases reflect heavier upper tails (above the $85^{\nth}$ percentile) of the squared-error distribution, as the central $15^{\nth}$-$85^{\nth}$ percentile range continues to decrease steadily. These results highlight the advantages of \ref{eq:Dual_state_parameter_estimator} in the high-dimensional parameter regime and motivate a comparison with SMC-based methods in Benchmark~2.3.
\begin{figure}[htbp]
\centering 
\includegraphics[width=0.4523\linewidth]{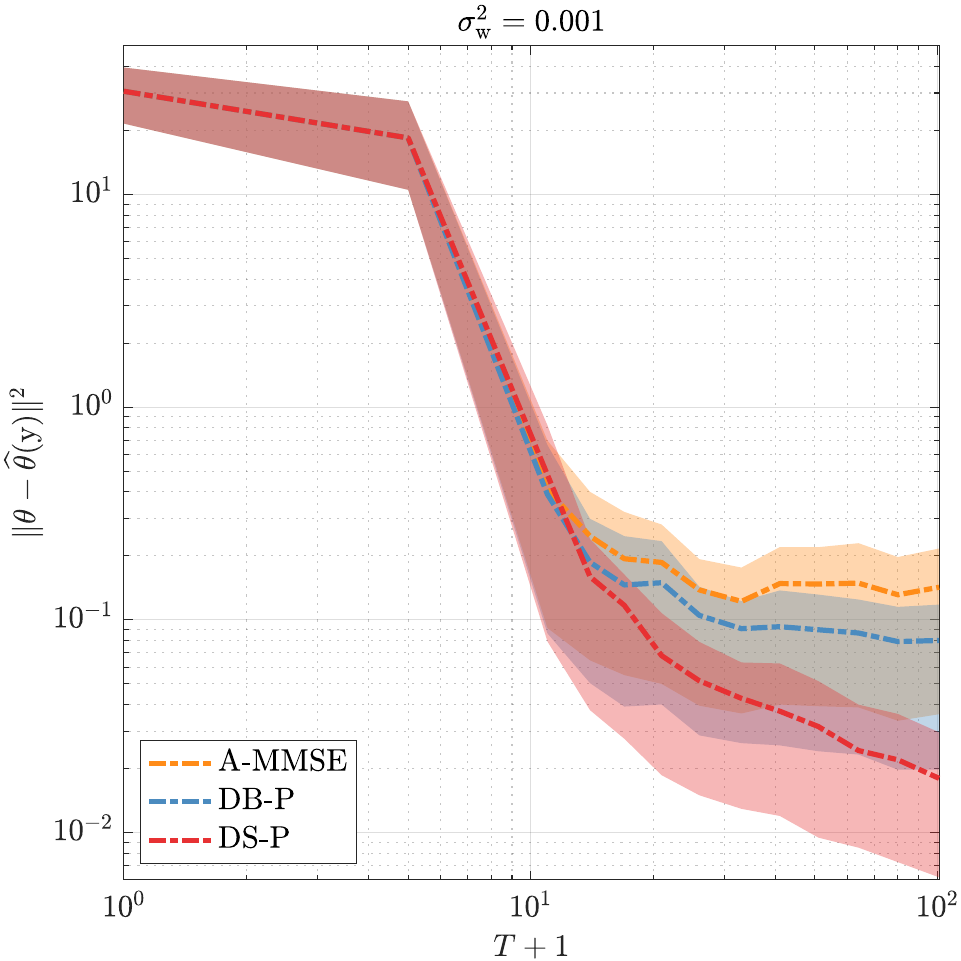} \qquad 
\includegraphics[width=0.4523\linewidth]{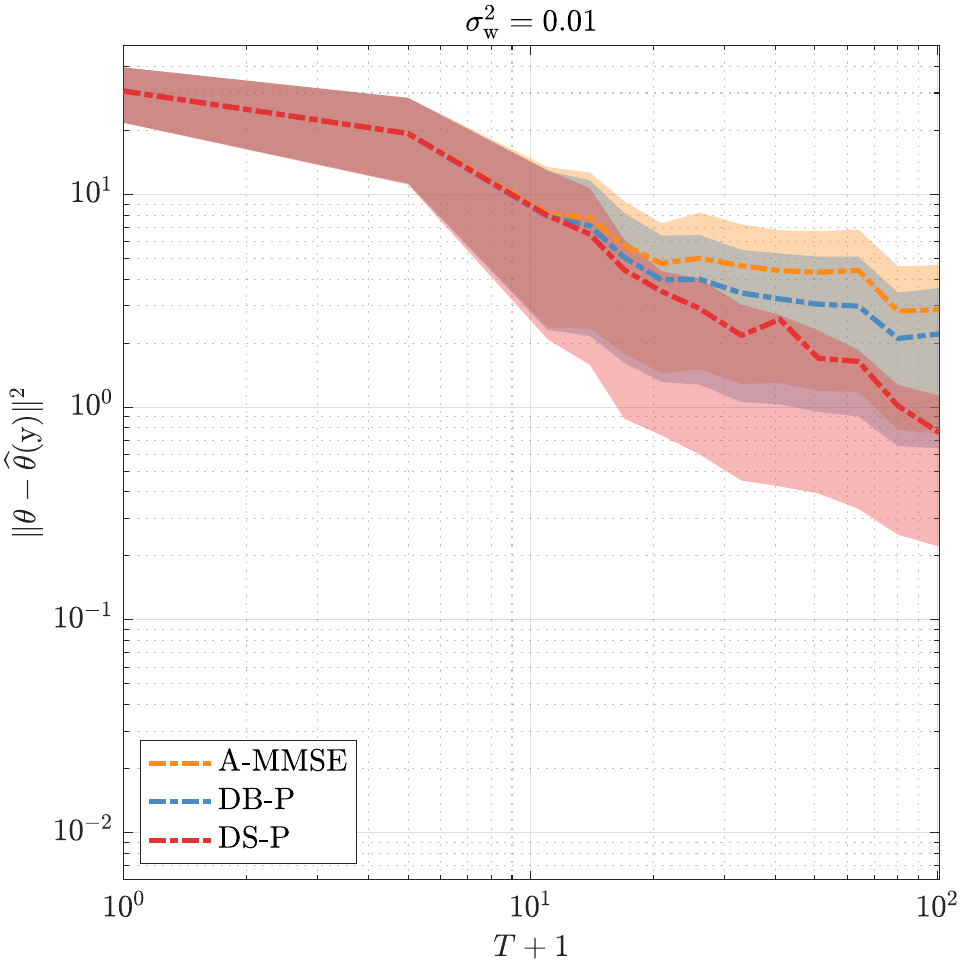}
\caption{Parameter estimation error vs. number of measurements for Experiment setup~2}
\label{Fig:param_Error_differentMeasurements}
\end{figure}
\paragraph{\textbf{Benchmark~2.3} (\textbf{Dual state-parameter vs. SMC methods})} 
In this final benchmark, \ref{eq:Dual_state_parameter_estimator} is compared with two well-known SMC-based approaches, PGAS and SMC-EM. Both methods alternate between updating the latent states given the current parameter estimates and updating the parameters given sampled state trajectories, whereas \ref{eq:Dual_state_parameter_estimator} alternates between parameter and state updates using summary statistics of the other block rather than full conditional sampling. The comparison uses Experiment setup~2 with $10,\!000$ simulations, trajectory length $\Trajectory=100$, and two process-noise levels $\sigma^{2}_{\vecW} \in \{0.001,\, 0.01\}$. Identical realizations of $(\vecWbar, \vecVbar)$ and the same optimized input $\vecUbar^{\star}$ from the active-learning procedure are used for all three methods. We briefly discuss each of the two SMC-based methods and refer interested readers to the cited references for further background and technical details. The following describes how each method is tuned for a fair benchmark comparison: 
\begin{enumerate} [label=(\roman*), itemsep = 2mm, topsep = 1mm, leftmargin = 7.5mm]

\item \textbf{PGAS}~\cite[Alg.~4]{lindsten2014particle}: PGAS is a PMCMC method that targets the joint posterior over the latent state trajectory and the parameters. At each iteration $k$, it alternates between: (i) drawing a new state trajectory $\vecXbar^{k}$ using the PGAS Markov kernel~\cite[Alg.~3]{lindsten2014particle} given $\vecTheta^{k-1}$, and (ii) sampling $\vecTheta^{k}$ from the conditional posterior $p(\vecTheta \!\mid\! \vecXbar^{k}, \vecYbar)$. In our configuration, we use $\Nparticle = 6000$ particles and run the algorithm for $K = 5000$ iterations. The initial state trajectory is obtained from a bootstrap particle filter, and the initial parameters are drawn from the prior $\UniformDist(2,8)$. At each iteration, the parameters are sampled from a multivariate truncated Gaussian distribution~\cite{li2015efficient}, corrected by an MH step to target the conditional posterior. The final estimate is the empirical mean of the retained samples, with the first $2500$ samples discarded as burn-in.

\item \textbf{SMC-EM}~\cite[Alg.~4]{schon_system_2011}: SMC-EM is a maximum-likelihood method that embeds particle smoothing within the EM framework. At each EM iteration $k$, it treats the full state trajectory as missing data and alternates between: (i) E-step: given $\vecTheta^{k-1}$, approximating the EM $Q$-function (the expected complete-data log-likelihood) by running a particle smoother to approximate the smoothing distribution of $\vecXbar$; and (ii) M-step: maximizing this approximate $Q$-function with respect to $\vecTheta$ to obtain $\vecTheta^{k}$. In our implementation, the original forward-backward smoother~\cite[Alg.~2-3]{schon_system_2011} is replaced by the MH-FFBP joint smoother~\cite[Fig.~5]{bunch2012improved}, which is an MCMC-based refinement of forward filtering backward smoothing methods~\cite{doucet2000sequential, used_smoothing}. We use a standard particle filter with $N_{F} = 6000$ particles in the forward pass~\cite[Fig.~1]{bunch2012improved} and, in the backward pass, $N_{S} = 700$ smoothing particles with MH chain length $M = 300$. This choice follows the complexity trade-off analysis in~\cite{bunch2012improved} and aims to improve computational efficiency while maintaining particle diversity. With a uniform prior on $\vecTheta$, the M-step reduces to a constrained maximization of the approximate $Q$-function over the prior support, equivalent to a MAP update. The algorithm is initialized at $\vecTheta^{0} = \MuTheta{}$ and run for a fixed $K = 1000$ iterations, since the particle-based approximation of the $Q$-function precludes reliable likelihood-based convergence diagnostics. The final estimate represents a local optimum of the Monte Carlo-based approximation of the posterior objective rather than a guaranteed stationary point of the true posterior.
\end{enumerate}

From a practical standpoint, the three methods differ notably in tuning requirements and computational effort. PGAS and SMC-EM require careful selection of multiple hyperparameters (particle numbers $\Nparticle$ for PGAS and $(N_{F}, N_{S})$ for SMC-EM, plus the chain length $M$ in the MH-FFBP smoother), which must be set conservatively large to mitigate Monte Carlo error, thereby increasing computational cost. In contrast, \ref{eq:Dual_state_parameter_estimator} requires no such Monte Carlo tuning beyond specifying the prior mean and covariance. A further practical advantage of \ref{eq:Dual_state_parameter_estimator} is its rapid convergence: we set the stopping tolerance $\epsilon = 10^{-6}$ in Algorithm~\ref{algo:Dual_Estimators} with a maximum of $K = 10,\!000$ iterations. For $\sigma^{2}_{\vecW} = 0.001$, the state-parameter variant (\ref{eq:Dual_state_parameter_estimator}) of Algorithm~\ref{algo:Dual_Estimators} requires at most $56$ iterations to satisfy the stopping criterion across all experiments. For $\sigma^{2}_{\vecW} = 0.01$, only $3$ out of $10,\!000$ experiments reach the maximum iteration limit, while the remaining runs converge in at most $229$ iterations. In contrast, PGAS required $5000$ iterations for sufficient posterior samples, and SMC-EM was run for $1000$ iterations with no reliable convergence diagnostics.

\begin{figure}[htbp]
\centering 
\includegraphics[width=0.4523\linewidth]{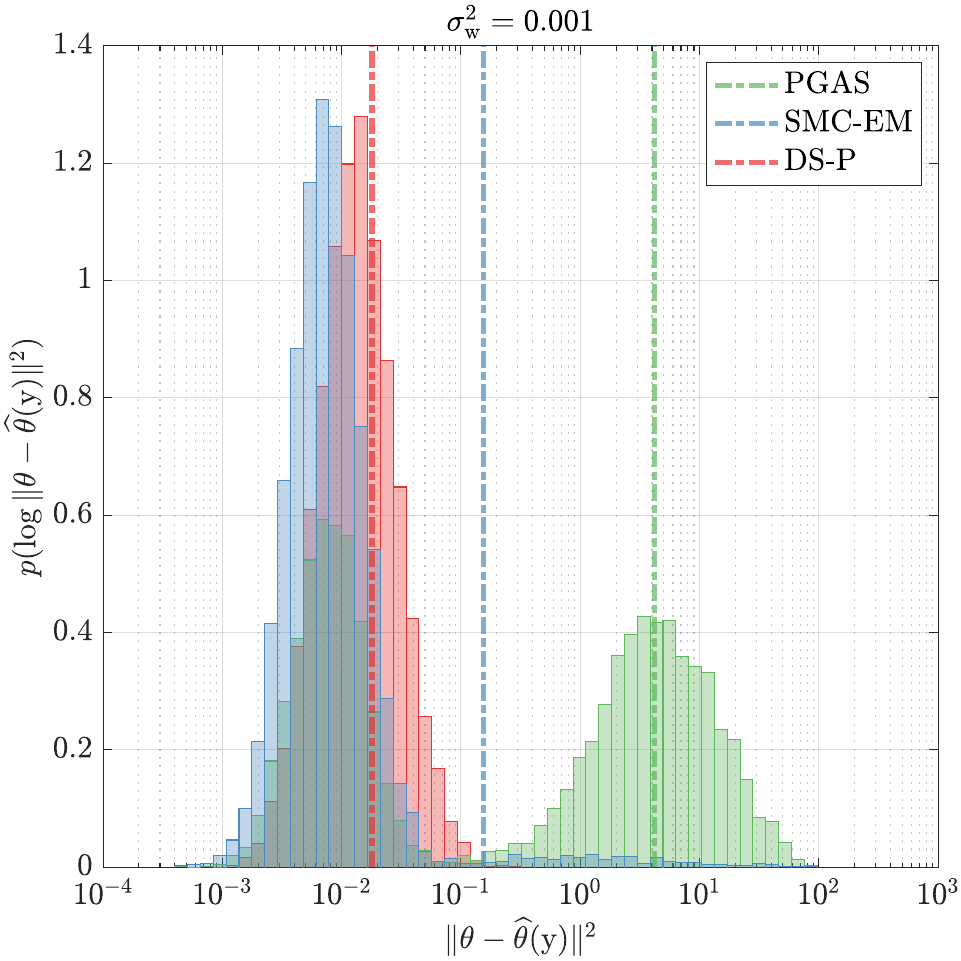} \qquad 
\includegraphics[width=0.4523\linewidth]{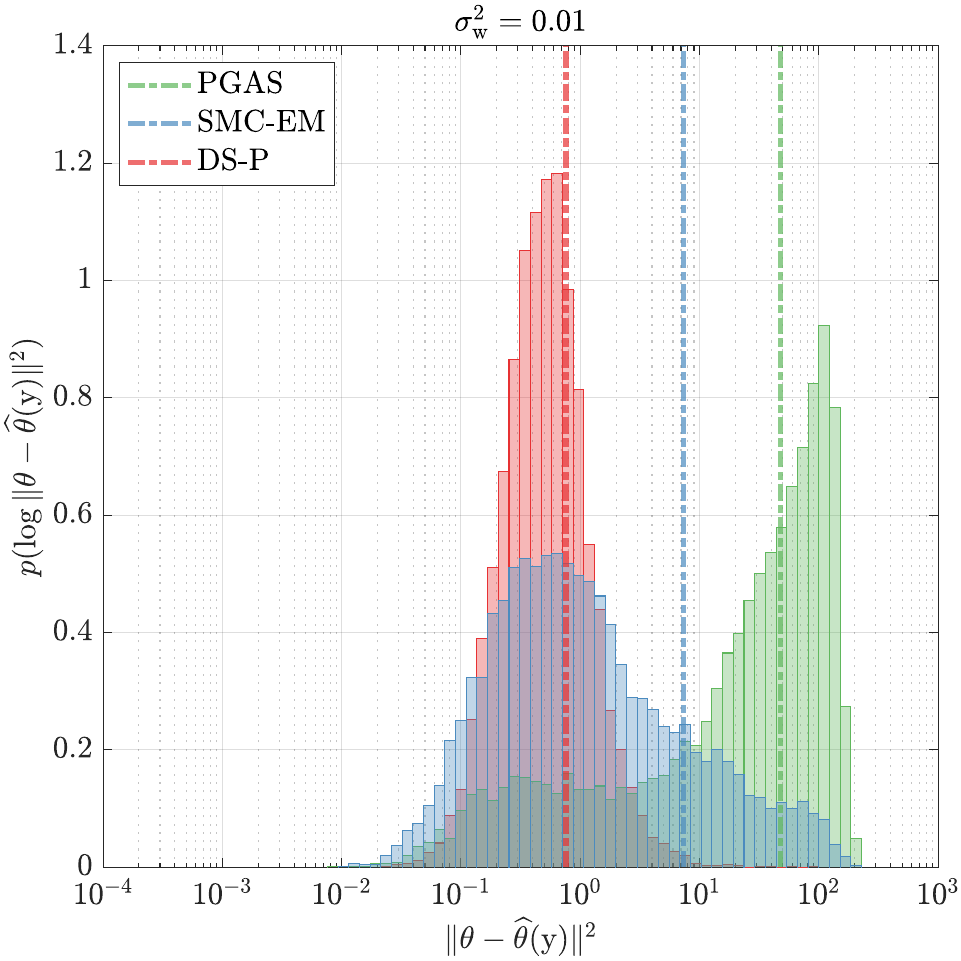}
\caption{Squared-error distributions: dual state-parameter vs. SMC-based methods}
\label{Fig:stateParam_Error_differentMethods}
\end{figure}

Figure~\ref{Fig:stateParam_Error_differentMethods} compares the distributions of $\log \|\vecTheta - \hat{\vecTheta}(\vecYbar)\|^{2}$ over all $10,\!000$ simulations for PGAS (green), SMC-EM (blue), and \ref{eq:Dual_state_parameter_estimator} (red), with dashed vertical lines indicating the empirical MSEs. In the low process-noise case ($\sigma^{2}_{\vecW} = 0.001$), SMC-EM produces an error distribution that overlaps partially with that of \ref{eq:Dual_state_parameter_estimator} but also exhibits a substantial number of simulations with noticeably larger errors, raising its overall MSE. PGAS shows a similar pattern: one concentration of errors near \ref{eq:Dual_state_parameter_estimator} and another at much larger values, again leading to a higher MSE. When the process noise increases to $\sigma^{2}_{\vecW} = 0.01$, the squared-error distributions of both SMC-EM and PGAS shift further away from the main concentration region of \ref{eq:Dual_state_parameter_estimator}, and their empirical MSEs increase correspondingly. In this higher-noise regime, the SMC-based methods produce a small fraction of simulations with errors comparable to \ref{eq:Dual_state_parameter_estimator}, while most realizations fall into higher-error regions. In both noise scenarios, \ref{eq:Dual_state_parameter_estimator} attains the lowest empirical MSE and the most tightly concentrated error distribution, demonstrating more robust parameter-estimation performance as the process-noise level increases.
\section{Conclusions} \label{sec:Conclusions}
This work presents a nonlinear parameter estimator as a fixed-point characterization of two affine MMSE estimators, one for the unknown parameters and one for latent variables, which use the summary statistics of each other to iteratively refine their estimates. Two variants are developed, both requiring no tuning beyond specifying the prior: the dual basis-parameter (\ref{eq:Dual_basis_parameter_estimator}) estimator, which directly estimates the DBS via an affine MMSE basis estimator, and the dual state-parameter (\ref{eq:Dual_state_parameter_estimator}) estimator, which uses affine MMSE state estimates as inputs for DBS estimates construction. Extensive numerical experiments validate that \ref{eq:Dual_state_parameter_estimator} consistently achieves the lowest empirical MSE, outperforming \ref{eq:Dual_basis_parameter_estimator}, the affine MMSE parameter estimator, and two established SMC-based methods (PGAS and SMC-EM). The superior performance of \ref{eq:Dual_state_parameter_estimator} stems from two mechanisms: the state-estimation step reduces the effective process-noise uncertainty, and the joint fixed-point iteration mutually refines both parameter and state estimates.

These results motivate several directions for future research. Although both algorithm variants converged to a fixed point in almost all experiments, a formal convergence analysis of the dual fixed-point iteration would provide valuable theoretical guarantees. Additionally, when the dual state-parameter variant (\ref{eq:Dual_state_parameter_estimator}) is used, the consistency of the nonlinear parameter estimator depends on the state estimation error remaining suitably bounded along the trajectory. Characterizing conditions that guarantee such boundedness would establish asymptotic performance guarantees for the proposed estimator. For finite data, however, extensive simulations indicate that the dual state-parameter estimator can substantially reduce parameter estimation error as the trajectory length increases (cf.~Figure~\ref{Fig:param_Error_differentMeasurements}). Other promising extensions include generalizing the framework to nonparametric models, accommodating nonlinear state dynamics, and developing recursive implementations that enable scalability to large datasets by reducing computational complexity.
\appendix
\section{Technical proofs} \label{appendix:Technical_proofs}

\subsection{Lemma~\ref{lemma:Generic_affine_MMSE_estimator}}
\label{proof:Generic_affine_MMSE_estimator_proof}
\begin{proof}
The proof follows the first-order optimality over the constants of the affine estimator~\eqref{eq:Affine_Class}. To this end, we apply the decompositions $\xi_{1} = \mu_{1} + \Delta\xi_{1}$ and $\xi_{2} = \mu_{2} + \Delta\xi_{2}$, where $\Delta\xi_{1}$ and $\Delta\xi_{2}$ are zero-mean random variables describing the uncertainty in $\xi_{1}$ and $\xi_{2}$, and then expand the objective as
\begin{align*}
\minOp_{\Psi,\, \psi} \,
\Expectation{
(\Delta\xi_{1} -\Psi\Delta\xi_{2})^{\!\tr}\!(\Delta\xi_{1} -\Psi\Delta\xi_{2})} & - 2\Expectation{(\Delta\xi_{1} -\Psi\Delta\xi_{2})^{\!\tr}\!(\psi + \Psi\mu_{2} - \mu_{1})} \\ 
& + \Expectation{(\psi + \Psi\mu_{2} - \mu_{1})^{\!\tr}\!(\psi + \Psi\mu_{2} - \mu_{1})}. \notag 
\end{align*}
The second term of the first line vanishes because $\psi + \Psi\mu_{2} - \mu_{1}$ is deterministic, and the random variables~$(\Delta\xi_{1}, \Delta\xi_{2})$ are zero mean, i.e., $\Expectation{\Delta\xi_{1}} = 0$ and $\Expectation{\Delta\xi_{2}} = 0$. The last term is minimized and its contribution set to zero by choosing $\psi^{\star} = \mu_{1} - \Psi\mu_{2}$. With this choice, applying the trace operator, and defining the covariances $\Sigma_{11} \coloneqq \Expectation{\Delta\xi_{1}\Delta\xi_{1}^{\tr}}$, $\Sigma_{12} \coloneqq \Expectation{\Delta\xi_{1}\Delta\xi_{2}^{\tr}}$, and $\Sigma_{22} \coloneqq \Expectation{\Delta\xi_{2}\Delta\xi_{2}^{\tr}}$, where $\Delta\xi_{1} = \xi_{1} - \mu_{1} $ and $\Delta\xi_{2} = \xi_{2} - \mu_{2}$, the optimization reduces to minimizing
$$
\Cost(\Psi) = \trace(\Sigma_{11} - 2\Psi\Sigma_{21} + \Psi\Sigma_{22}\Psi^{\tr}),
$$
with $\Sigma_{21} = \Sigma_{12}^{\tr}$. The optimizer $\Psi^{\star}$ is found by setting the partial derivative of $\Cost(\Psi)$ with respect to $\Psi$ to zero, i.e.,
\begin{align*}
\dfrac{\partial \Cost}{\partial \Psi} \Big|_{\Psi=\Psi^{\star}} & \!\!= -2\Sigma_{12} + 2\Psi^{\star} \Sigma_{22} = 0,
\end{align*}
which yields $\Psi^{\star} = \Sigma_{12}\Sigma_{22}^{\inv}$. Since $\Sigma_{22}$ is invertible, $\Cost(\Psi)$ is a strictly convex quadratic in $\Psi$, so this stationary point is the unique global minimizer. Substituting $\Psi^{\star}$ and $\psi^{\star} = \mu_{1} - \Psi^{\star}\mu_{2}$ into the estimation-error covariance $\Sigma_{\hat{\xi}_{1}} \! \coloneqq \Expectation{(\xi_{1} - \Psi\xi_{2} - \psi) (\xi_{1} - \Psi\xi_{2} - \psi)^{\!\tr}\!}$,
yields the expressions in~\eqref{eq:Affine_MMSE_closed_form}, which completes the proof.
\end{proof}

\subsection{Lemma~\ref{lemma:MMSE_Lowerbound}}
\label{proof:MMSE_Lowerbound_proof}
\begin{proof}
By the law of total expectation~\cite[Ch.~4]{carlton2017probability},
\begin{equation} \label{eq:MSE_Total_Expectation}
\Cost^{\star}_{\vecTheta}(\matrixPhibar, \SigmaPhi) = \bigExpectation{\|\vecTheta - \hat{\vecTheta}_{\mathrm{af}}(\vecYbar ; \matrixPhibar, \SigmaPhi)\|^{2}} = \CondExpectation{\vecXbar}{\bigExpectation{\|\vecTheta - \hat{\vecTheta}_{\mathrm{af}}(\vecYbar ; \matrixPhibar, \SigmaPhi)\|^{2} \,\big|\, \vecXbar}},
\end{equation}
where $\hat{\vecTheta}_{\mathrm{af}}(\vecYbar ; \matrixPhibar, \SigmaPhi)$ is the affine MMSE estimator with its optimal MSE $\Cost^{\star}_{\vecTheta}(\matrixPhibar, \SigmaPhi)$ as defined in~\eqref{eq:Parameters_Affine_Estimation}. Let $\estimatorAffine{\vecTheta}$ denote the class of affine estimators of $\vecTheta$ from $\vecYbar$, analogous to~\eqref{eq:Affine_Class}. Since $\hat{\vecTheta}_{\mathrm{af}}(\vecYbar ; \matrixPhibar, \SigmaPhi) \in \estimatorAffine{\vecTheta}$, the conditional MSE of this estimator cannot be smaller than the minimum conditional MSE over all affine estimators in $\estimatorAffine{\vecTheta}$ for each fixed $\vecXbar$, i.e.,
\begin{equation} \label{eq:MMSE_inequality}
\CondExpectation{\vecXbar}{\bigExpectation{\|\vecTheta - \hat{\vecTheta}_{\mathrm{af}}(\vecYbar ; \matrixPhibar, \SigmaPhi)\|^{2} \,\big|\, \vecXbar}} \geq 
\CondExpectation{\vecXbar}{\min_{\hat{\vecTheta}(\cdot) \in \estimatorAffine{\vecTheta}} \, \bigExpectation{\big\|\vecTheta - \hat{\vecTheta}(\vecYbar)\big\|^2 \,\big|\, \vecXbar}}.
\end{equation}
Moreover, for each fixed state trajectory $\vecXbar$, the optimizer of the minimum conditional MSE and its corresponding affine MMSE value are given by
\begin{equation} \label{eq:Affine_MMSE_complete_knowledge}
\begin{cases}
\hat{\vecTheta}_{\mathrm{af}}(\vecYbar ; \matrixPhi(\vecXbar), 0) = \arg\!\min_{\hat{\vecTheta}(\cdot) \in \estimatorAffine{\vecTheta}} \, \bigExpectation{\big\|\vecTheta - \hat{\vecTheta}(\vecYbar)\big\|^2 \,\big|\, \vecXbar}, \\[0.2em]
\Cost^{\star}_{\vecTheta}(\matrixPhi(\vecXbar), 0) = \bigExpectation{\| \vecTheta - \hat{\vecTheta}_{\mathrm{af}}(\vecYbar ; \matrixPhi(\vecXbar), 0) \|^{2} \,\big|\, \vecXbar}.
\end{cases}
\end{equation}
Combining~\eqref{eq:MSE_Total_Expectation},~\eqref{eq:MMSE_inequality}, and~\eqref{eq:Affine_MMSE_complete_knowledge} arrives at the desired bound in~\eqref{eq:MSE_lower_bound}.
\end{proof}

\subsection{Lemma~\ref{lemma:MMSE_Affine_Basis_Estimator}}
\label{proof:MMSE_Affine_Basis_Estimator_proof}
\begin{proof}
Define the stacked basis-function vector 
\begin{equation} \label{eq:stacked_basis}
\phi \coloneqq \vectorize(\matrixPhi(\vecXbar)) = [\phi^{\tr}\!\!(\vecX_{0}),\ldots,\phi^{\tr}\!\!(\vecX_{\Trajectory})]^{\tr}\!, 
\end{equation}
where $\phi(\vecX_{t})$ is defined in~\eqref{eq:Function_Space}. By Definition~\ref{definition:Dynamic_Basis_Statistics_Collection}, the prior DBS $(\matrixPhibar,\SigmaPhi)$ induce the prior mean and covariance of $\phi$ as
\begin{equation} \label{eq:basis_mean}
\phiBar \coloneqq \Expectation{\phi}
= \Expectation{\vectorize(\matrixPhi(\vecXbar))}
= \vectorize(\matrixPhibar),
\qquad
\Expectation{(\phi-\phiBar)(\phi-\phiBar)^{\tr}\!} = \SigmaPhi.
\end{equation}
Next, introduce the lifted observation model over the trajectory. Stacking the sequence of measurements as $\vecYbar = [\vecY_{0}, \ldots, \vecY_{\Trajectory}]^{\tr}\!$ and using~\eqref{eq:Function_Space} yields
\begin{equation} \label{eq:Equivalent_Lifted_ObservationModel}
\vecYbar = \matrixG_{\theta}^{\tr} \phi + \vecVbar,
\end{equation}
where $\matrixG_{\theta} = \diag(\timedTheta{0}, \ldots, \timedTheta{\Trajectory})$ with $\timedTheta{t} = \vecTheta$ for all $t \in \{ 0, \ldots, \Trajectory \}$, and $\vecVbar = [\vecV_{0}, \ldots, \vecV_{\Trajectory}]^{\tr}\!$ has covariance matrix $\SigmaVbar{}$. The affine MMSE estimator of $\phi$ given $\vecYbar$ follows from the generic affine MMSE estimator 
of Lemma~\ref{lemma:Generic_affine_MMSE_estimator} with $\xi_{1}=\phi$ and $\xi_{2}=\vecYbar$, 
resulting in the form $\hat{\phi}_{\mathrm{af}}(\vecYbar) = \PsiPhi^{\star} \vecYbar + \psiPhi^{\star}$, 
with coefficients
\begin{equation} \label{eq:MMSE_basis_coefficients}
\PsiPhi^{\star} = \Sigma_{\phi\vecYbar}\Sigma_{\vecYbar}^{\inv}, \qquad \psiPhi^{\star} = \phiBar - \PsiPhi^{\star}\mu_{\vecYbar},
\end{equation}
where $\phiBar$ is the mean of $\phi$ defined in~\eqref{eq:basis_mean}, $\mu_{\vecYbar}$ is the observation mean, $\Sigma_{\phi\vecYbar}$ is the cross-covariance between $\phi$ and $\vecYbar$, and $\Sigma_{\vecYbar}$ is the observation covariance. To express these quantities, apply the decompositions: $\phi = \phiBar + \deltaPhi$ and $\matrixG_{\theta} = \mu_{\Theta} + \Delta\matrixG_{\theta}$, where $\deltaPhi$ and $\Delta\matrixG_{\theta}$ are zero-mean random quantities, and $\mu_{\Theta} \coloneqq \Expectation{\matrixG_{\theta}} = \diag(\timedMuTheta{0}, \ldots, \timedMuTheta{\Trajectory})$ with $\timedMuTheta{t} = \MuTheta{}$ for all $t \in \{0,\ldots,\Trajectory\}$. By construction, $\Expectation{\deltaPhi\deltaPhi^{\tr}\!} = \SigmaPhi$ and $\Expectation{(\timedTheta{t} - \timedMuTheta{t})(\timedTheta{t} - \timedMuTheta{t})^{\tr}\!} = \SigmaTheta{}$. From the independence assumptions between $\phi$, $\vecTheta$ and $\vecVbar$ it follows that $\Expectation{\Delta\matrixG_{\theta}^{\tr}\deltaPhi} = 0$ and $\Expectation{\Delta\matrixG_{\theta}^{\tr}\deltaPhi\vecVbar^{\tr}\!} = 0$. Using~\eqref{eq:Equivalent_Lifted_ObservationModel} together with these decompositions, and exploiting the zero-mean and independence properties, yields
\begin{equation} \label{eq:Basis_estimator_mean_covariances}
\begin{aligned}
\mu_{\vecYbar} & = \Expectation{(\mu_{\Theta} + \Delta\matrixG_{\theta})^{\tr}\!(\phiBar + \deltaPhi) + \vecVbar}
= \mu_{\Theta}^{\tr}\phiBar = \matrixPhibar^{\tr} \MuTheta{}, \\
\Sigma_{\phi\vecYbar} & = \bigExpectation{\deltaPhi \big((\mu_{\Theta} + \Delta\matrixG_{\theta})^{\tr}\!(\phiBar + \deltaPhi) + \vecVbar - \mu_{\Theta}^{\tr}\phiBar\big)^{\!\!\tr}\!}
= \Expectation{\deltaPhi\deltaPhi^{\tr}\!} \mu_{\Theta}
= \SigmaPhi \mu_{\Theta}, \\
\Sigma_{\vecYbar} & = \Expectation{(\vecYbar-\mu_{\Theta}^{\tr}\phiBar)(\vecYbar-\mu_{\Theta}^{\tr}\phiBar)^{\tr}\!} = \mu_{\Theta}^{\tr}\SigmaPhi\mu_{\Theta} + \matrixP + \matrixQ + \matrixS + \matrixS^{\tr}\! + \SigmaVbar{},
\end{aligned}
\end{equation}
where $\matrixP = \bigExpectation{\Delta\matrixG_{\theta}^{\tr} \deltaPhi \deltaPhi^{\tr}\! \Delta\matrixG_{\theta}}$, $\matrixQ = \bigExpectation{\Delta\matrixG_{\theta}^{\tr} \phiBar \phiBar^{\tr}\! \Delta\matrixG_{\theta}}$ and $\matrixS = \Expectation{\Delta\matrixG_{\theta}^{\tr}\phiBar\deltaPhi^{\tr}\! \Delta\matrixG_{\theta}}$. Consider the $(t,t')^{\nth}$ components of these matrices, denoted by $\matrixP_{tt'}$, $\matrixQ_{tt'}$, and $\matrixS_{tt'}$.  By explicitly exploiting the block-diagonal structure of $\matrixG_{\theta}$ and $\mu_{\Theta}$ and denoting by $\DeltaTheta$ the random parameter vector appearing in each individual block, we derive the following explicit closed-form expressions for $\matrixP_{tt'}$, $\matrixQ_{tt'}$, and $\matrixS_{tt'}$, which in turn implies that $\matrixS=0$ by applying the trace operator and noting that $\Expectation{\deltaPhi(\vecX_{t'})} = 0$ by construction:
$$
\begin{aligned}
\matrixP_{\!tt'} & = \Expectation{\DeltaTheta^{\tr}\! \deltaPhi(\vecX_{t}) \deltaPhi^{\tr}\!(\vecX_{t'}) \DeltaTheta} = \Expectation{\trace(\DeltaTheta^{\tr}\! \deltaPhi(\vecX_{t}) \deltaPhi^{\tr}\!(\vecX_{t'}) \DeltaTheta)} = 
\trace( \SigmaTheta{} \TimedSigmaPhi{^{tt'}}), \\
\matrixQ_{tt'} & = \Expectation{\DeltaTheta^{\tr}\! \MuPhi^{t} \MuPhi^{t'^{\tr}}\! \DeltaTheta} = \Expectation{\trace(\DeltaTheta^{\tr}\! \MuPhi^{t} \MuPhi^{t'^{\tr}}\! \DeltaTheta)} = 
\trace(\SigmaTheta{}\MuPhi^{t}\MuPhi^{t'^{\tr}}), \\
\matrixS_{tt'} & = \Expectation{\DeltaTheta^{\tr}\! \MuPhi^{t} \deltaPhi^{\tr}\!(\vecX_{t'}) \DeltaTheta} = \Expectation{\trace( \DeltaTheta^{\tr}\! \MuPhi^{t} \deltaPhi^{\tr}\!(\vecX_{t'}) \DeltaTheta)} = 
\trace \big(\Expectation{\DeltaTheta \DeltaTheta^{\tr}\!} \MuPhi^{t} \Expectation{\deltaPhi^{\tr}\!(\vecX_{t'})}\big) = 0,
\end{aligned}
$$
Collecting these expressions, define the linear operator $\operatorParam \colon \Real^{(\Ntheta+1)^{2}} \!\! \to \! \Real^{(\Trajectory+1)^2}$ by its $(t,t')^{\nth}$ element
$$
\operatorParam^{tt'}\!(\SigmaTheta{}) = \trace \big( \SigmaTheta{} ( \TimedSigmaPhi{^{tt'}} + \MuPhi^{t}\MuPhi^{t'^{\tr}\!} ) \big).
$$
Consequently, $\Sigma_{\vecYbar} = \mu_{\Theta}^{\tr}\SigmaPhi\mu_{\Theta} + \operatorParam(\SigmaTheta{}) + \SigmaVbar{}$, which is invertible because $\SigmaVbar{} \succ 0$. Using this $\Sigma_{\vecYbar}$ together with $\mu_{\vecYbar}$ and $\Sigma_{\phi\vecYbar}$ from~\eqref{eq:Basis_estimator_mean_covariances} to substitute into~\eqref{eq:MMSE_basis_coefficients} yields the closed-form expressions for $\PsiPhi^{\star}(\MuTheta{}, \SigmaTheta{})$  and $\psiPhi^{\star}(\MuTheta{}, \SigmaTheta{})$ in~\eqref{eq:MMSE_BasisEstimator}, both as functions of $(\MuTheta{}, \SigmaTheta{})$. Hence the affine MMSE basis estimator can be written as $\hat{\phi}_\mathrm{af}(\vecYbar ; \MuTheta{}, \SigmaTheta{}) = \PsiPhi^{\star}(\MuTheta{}, \SigmaTheta{}) \vecYbar + \psiPhi^{\star}(\MuTheta{}, \SigmaTheta{})$, emphasizing the dependence of the estimator on the prior mean and covariance of the unknown parameters. Furthermore, the corresponding estimation-error covariance and MMSE of this estimator follow from~\eqref{eq:Affine_MMSE_closed_form}. Since $\hat{\phi}_\mathrm{af}(\vecYbar ; \MuTheta{}, \SigmaTheta{})$ is stacked as in~\eqref{eq:stacked_basis}, its components $\hat{\phi}_\mathrm{af}^{\mkern1mu t}(\vecYbar ; \MuTheta{}, \SigmaTheta{})$ directly define the column-wise DBS estimate $\hat{\matrixPhi}_{\mathrm{af}}(\vecYbar ; \MuTheta{}, \SigmaTheta{})$, together with $\hatSigmaPhi{\mathrm{af}}\!(\MuTheta{}, \SigmaTheta{})$ in~\eqref{eq:BasisCollection_Affine_Estimation}, with the corresponding optimal MMSE cost $\Cost^{\star}_{\phi}(\MuTheta{}, \SigmaTheta{}) = \trace(\hatSigmaPhi{\mathrm{af}}\!(\MuTheta{}, \SigmaTheta{}))$.
\end{proof}

\subsection{Lemma~\ref{lemma:MMSE_Affine_State_Estimator}}
\label{proof:MMSE_Affine_State_Estimator_proof}
\begin{proof}
The affine MMSE estimator of $\vecXbar$ given $\vecYbar$ follows from the generic affine MMSE estimator of Lemma~\ref{lemma:Generic_affine_MMSE_estimator} with $\xi_{1}=\vecXbar$ and $\xi_{2}=\vecYbar$, resulting in the form $\hat{\vecXbar}_{\mathrm{af}}(\vecYbar) = \PsiX^{\star} \vecYbar + \psiX^{\star}$, where the optimal coefficients based on~\eqref{eq:Affine_optimal_coefficients} are $\PsiX^{\star} = \Sigma_{\vecXbar\vecYbar}\Sigma_{\vecYbar}^{\inv}$ and $\psiX^{\star} = \mu_{\vecXbar} - \PsiX^{\star}\mu_{\vecYbar}$. Moreover, the estimation-error covariance and optimal MSE following~\eqref{eq:Affine_MMSE_closed_form} are $\hatSigmaXbar{\mathrm{af}}\! = \Sigma_{\vecXbar} - \Sigma_{\vecXbar\vecYbar}\Sigma_{\vecYbar}^{\inv}\Sigma_{\vecXbar\vecYbar}^{\tr}$ and $\Cost^{\star} = \trace(\hatSigmaXbar{\mathrm{af}})$, respectively. From~\eqref{eq:Lifted_Dynamic}, we have $\mu_{\vecXbar} = \Abar\Bbar\vecUbar$ and $\Sigma_{\vecXbar} = \Abar\SigmaWbar{}\Abar^{\!\tr}\!$. Furthermore, from~\eqref{eq:Basis_estimator_mean_covariances} in the proof of Lemma~\ref{lemma:MMSE_Affine_Basis_Estimator} (Appendix~\ref{proof:MMSE_Affine_Basis_Estimator_proof}), we have $\mu_{\vecYbar} = \matrixPhibar^{\tr} \MuTheta{}$ and $\Sigma_{\vecYbar} = \mu_{\Theta}^{\tr}\SigmaPhi\mu_{\Theta} + \operatorParam(\SigmaTheta{}) + \SigmaVbar{}$. Thus, it remains to compute the cross-covariance matrix $\Sigma_{\vecXbar\vecYbar}$ between the state trajectory $\vecXbar$ and the observation sequence $\vecYbar$. In particular,
\begin{equation}\label{eq:cross_cov_expansion}
\Sigma_{\vecXbar\vecYbar} = \Expectation{(\vecXbar-\Abar\Bbar\vecUbar)(\matrixPhi^{\tr}\!\!(\vecXbar)\vecTheta+\vecVbar - \matrixPhibar^{\tr}\MuTheta{})^{\!\tr}\!} = \Expectation{(\vecXbar-\Abar\Bbar\vecUbar)\MuTheta{}^{\tr}\matrixPhi(\vecXbar)},
\end{equation}
where the observation model $\vecYbar = \matrixPhi^{\tr}\!\!(\vecXbar)\vecTheta+\vecVbar$ is used as in~\cite{vakili2025optimal}, and the equality follows from algebraic expansion using the independence of $\vecTheta$ from $\vecXbar$ (and hence from $\matrixPhi(\vecXbar)$), which yields $\Expectation{\vecXbar\vecTheta^{\tr}\!\matrixPhi(\vecXbar)} = \Expectation{\vecXbar\MuTheta{}^{\tr}\matrixPhi(\vecXbar)}$, together with $\Expectation{\vecXbar\vecVbar^{\tr}\!} = \zero$ by independence of $\vecXbar$ and $\vecVbar$. Consider the $(t, t')^\nth$ block of this cross-covariance matrix, $\Sigma_{\vecXbar\vecYbar}^{tt'}$, which captures the cross-covariance between $\vecX_{t}$ and $\vecY_{t'}$. The state $\vecX_{t}$ can be written as $\vecX_{t} = \Abar_{t}(\Bbar \vecUbar + \vecWbar)$, where $\Abar_{t}$ is the $(t+1)^\nth$ block-row of $\Abar$ as in~\eqref{eq:matrixAandB}, i.e.,
\begin{equation} \label{eq:matrixA_t}
\Abar_{t} = \begin{bmatrix}
\prodOp_{i=0}^{t-1} \A_{i} & \prodOp_{i=1}^{t-1} \A_{i} &  \ldots & \eye & \zeromx & \ldots & \zeromx
\end{bmatrix}.
\end{equation}
The $(t, t')^\nth$ block of the cross-covariance matrix can then be expressed as
\begin{equation} \label{eq:State_Basis_Covariance_Matrix}
\Sigma_{\vecXbar\vecYbar}^{tt'} = \Expectation{(\vecX_{t} - \Abar_{t}\Bbar\vecUbar)\MuTheta{}^{\tr}\phi(\vecX_{t'})} = \sumOp_{n=0}^{\Ntheta} \Expectation{(\vecX_{t} - \Abar_{t}\Bbar\vecUbar)\phi_{n}(\vecX_{t'})} \MuTheta{n},
\end{equation}
where $\phi(\vecX_{t'})$ is the $(t'+1)^{\nth}$ column of the basis collection matrix $\matrixPhi(\vecXbar)$, according to Definition~\ref{definition:Dynamic_Basis_Statistics_Collection}, and $\MuTheta{n}$ denotes the $n^{\nth}$ component of the prior mean $\MuTheta{}$ for the unknown parameter vector $\vecTheta$. Under Assumption~\ref{assumption:Gaussian_process_noise}, $\vecWbar \sim \NormalDist(\zero,\SigmaWbar{})$, hence $\vecX_{t}$ and $\vecX_{t'}$ are both multivariate Gaussian random variables with $\vecX_{t} \sim \NormalDist(\Abar_{t}\Bbar\vecUbar, \Abar_{t}\SigmaWbar{}\Abar_{t}^{\!\tr})$ and $\vecX_{t'} \sim \NormalDist(\Abar_{t'}\Bbar\vecUbar, \Abar_{t'}\SigmaWbar{}\Abar_{t'}^{\!\tr})$, where $\Abar_{t'}$ is the $(t'+1)^\nth$ block-row of $\Abar$ analogous to~\eqref{eq:matrixA_t}. These variables can be represented as affine transformations of a standard Gaussian random vector $\nu \sim \NormalDist( \zero, \eye )$: $\vecX_{t} = \Abar_{t}\Bbar\vecUbar + \Abar_{t}\matrixSqrt{\SigmaWbar{}}\nu$ and $\vecX_{t'} = \Abar_{t'}\Bbar\vecUbar + \Abar_{t'}\matrixSqrt{\SigmaWbar{}}\nu$. Under the regularity conditions~\eqref{eq:Bounded_expectations_conditions} on $\phi_{n}(\cdot)$ (continuity almost everywhere and bounded expectation of partial derivatives) for all $n \in \{0, \ldots, \Ntheta\}$, define $\funcG(\nu) \coloneqq \phi_{n}(\Abar_{t'}\Bbar\vecUbar + \Abar_{t'}\matrixSqrt{\SigmaWbar{}}\nu)$. By Stein's identity~\cite{Stein1981},
\begin{equation}\label{eq:stein_application}
\Expectation{(\vecX_{t} - \Abar_{t}\Bbar\vecUbar)\phi_{n}(\vecX_{t'})} = \Expectation{\Abar_{t}\matrixSqrt{\SigmaWbar{}}\nu\funcG(\nu)} = \Abar_{t}\matrixSqrt{\SigmaWbar{}} \Expectation{\gradient_{\!\nu} \funcG(\nu)}.
\end{equation}
Using the chain rule, $\Expectation{\gradient_{\!\nu} \funcG(\nu)} = \matrixSqrt{\SigmaWbar{}}\Abar_{t'}^{\tr} \Expectation{\gradient_{\!\vecX} \phi_{n}(\vecX_{t'})}$, where $\gradient_{\!\vecX} \phi_{n}(\vecX_{t'})$ is the gradient of $\phi_{n}(\cdot)$ evaluated at $\vecX_{t'}$. Substituting this into~\eqref{eq:stein_application} gives
\begin{equation}\label{eq:cross_cov_component}
\Expectation{(\vecX_{t} - \Abar_{t}\Bbar\vecUbar)\phi_{n}(\vecX_{t'})} = \Abar_{t}\SigmaWbar{}\Abar_{t'}^{\tr} \Expectation{\gradient_{\!\vecX} \phi_{n}(\vecX_{t'})}.
\end{equation}
Finally, inserting~\eqref{eq:cross_cov_component} into~\eqref{eq:State_Basis_Covariance_Matrix}, we obtain
\begin{equation}\label{eq:block_cross_cov}
\Sigma_{\vecXbar\vecYbar}^{tt'} = \sumOp_{n=0}^{\Ntheta} \Abar_{t}\SigmaWbar{}\Abar_{t'}^{\!\tr} \Expectation{\gradient_{\!\vecX} \phi_{n}(\vecX_{t'})} \MuTheta{n} = \Abar_{t}\SigmaWbar{}\Abar_{t'}^{\!\tr} \matrixC_{t'}^{\tr}\MuTheta{},
\end{equation}
where $\matrixC_{t'} = \Expectation{\Jacobian_{\phi}(\vecX_{t'})}$, and $\Jacobian_{\phi}(\vecX_{t'}) = \big[ \gradient_{\!\vecX} \phi_{0}(\vecX_{t'}), \ldots,\, \gradient_{\!\vecX} \phi_{\Ntheta}(\vecX_{t'}) \big]^{\tr}\!$ is the Jacobian matrix evaluated at $\vecX_{t'}$. Assembling these blocks for all pairs $(t, t')$ with $t,t' \in \{0, \ldots, \Trajectory\}$ results in the full form of the cross-covariance matrix~\eqref{eq:cross_cov_expansion},
\begin{equation}\label{eq:full_cross_cov}
\Sigma_{\vecXbar\vecYbar} = \Abar\SigmaWbar{}\Abar^{\!\tr}\! \Cbar^{\tr}\!\mu_{\Theta}, \qquad \Cbar = \diag(\matrixC_{0}, \ldots, \matrixC_{\Trajectory}),
\end{equation}
with $\mu_{\Theta} = \diag(\timedMuTheta{0}, \ldots, \timedMuTheta{\Trajectory})$ and $\timedMuTheta{t} = \MuTheta{}$ for all $t \in \{0,\ldots,\Trajectory\}$. Using $\Sigma_{\vecXbar\vecYbar}$ from~\eqref{eq:full_cross_cov}, along with the previously derived means and covariances, in the generic affine MMSE formulas provides the closed-form expressions for $\PsiX^{\star}(\MuTheta{}, \SigmaTheta{})$ and $\psiX^{\star}(\MuTheta{}, \SigmaTheta{})$ in~\eqref{eq:MMSE_State_LinearEstimator}, both as functions of $(\MuTheta{}, \SigmaTheta{})$. Hence, the affine MMSE state estimator $\hat{\vecXbar}_\mathrm{af}(\vecYbar ; \MuTheta{}, \SigmaTheta{}) = \PsiX^{\star}(\MuTheta{}, \SigmaTheta{}) \vecYbar + \psiX^{\star}(\MuTheta{}, \SigmaTheta{})$ explicitly emphasizes the dependence on the prior parameter statistics $(\MuTheta{}, \SigmaTheta{})$. The corresponding estimation-error covariance $\hatSigmaXbar{\mathrm{af}}\!(\MuTheta{}, \SigmaTheta{})$ and optimal MSE $\Cost^{\star}_{\vecXbar}(\MuTheta{}, \SigmaTheta{})$ follow directly from~\eqref{eq:MMSE_StateEstimation}, completing the proof.
\end{proof}
\bibliographystyle{plain}
\bibliography{references}
\end{document}